\DeclareMathOperator*{\argmax}{argmax}
\DeclareMathOperator*{\argmin}{argmin}
\DeclarePairedDelimiterX\Set[1]{\lbrace}{\rbrace}%
 {  #1 }
\newtheorem{theorem}{Theorem}[section]
\newtheorem{remark}{Remark}
\theoremstyle{definition} 
\newtheorem{definition}[theorem]{Definition}
\theoremstyle{plain}
\newtheorem{proposition}[theorem]{Proposition}
\begin{document}
\twocolumn[
\icmltitle{Fully Heteroscedastic Count Regression with Deep Double Poisson Networks}



\icmlsetsymbol{equal}{*}

\begin{icmlauthorlist}
\icmlauthor{Spencer Young}{equal,dai}
\icmlauthor{Porter Jenkins}{equal,byu}
\icmlauthor{Longchao Da}{asu}
\icmlauthor{Jeffrey Dotson}{osu}
\icmlauthor{Hua Wei}{asu}
\end{icmlauthorlist}

\icmlaffiliation{dai}{Delicious AI}
\icmlaffiliation{byu}{Brigham Young University, Department of Computer Science}
\icmlaffiliation{asu}{Arizona State University, School of Computing and Augmented Intelligence}
\icmlaffiliation{osu}{Ohio State University, Department of Marketing}

\icmlcorrespondingauthor{Spencer Young}{spencer.young@deliciousai.com}
\icmlcorrespondingauthor{Porter Jenkins}{pjenkins@cs.byu.edu}

\icmlkeywords{Predictive uncertainty, Deep Ensembles, Heteroscedastic Regression}

\vskip 0.3in
]

\printAffiliationsAndNotice{\icmlEqualContribution} 

\begin{abstract}



Neural networks capable of accurate, input-conditional uncertainty representation are essential for real-world AI systems. Deep ensembles of Gaussian networks have proven highly effective for continuous regression due to their ability to flexibly represent aleatoric uncertainty via unrestricted heteroscedastic variance, which in turn enables accurate epistemic uncertainty estimation. However, no analogous approach exists for $\textit{count}$ regression, despite many important applications. To address this gap, we propose the Deep Double Poisson Network (DDPN), a novel neural discrete count regression model that outputs the parameters of the Double Poisson distribution, enabling arbitrarily high or low predictive aleatoric uncertainty for count data and improving epistemic uncertainty estimation when ensembled. We formalize and prove that DDPN exhibits robust regression properties similar to heteroscedastic Gaussian models via learnable loss attenuation, and introduce a simple loss modification to control this behavior. Experiments on diverse datasets demonstrate that DDPN outperforms current baselines in accuracy, calibration, and out-of-distribution detection, establishing a new state-of-the-art in deep count regression.

\end{abstract}

\section{Introduction}

The pursuit of neural networks capable of learning accurate and reliable uncertainty representations has gained significant traction in recent years \citep{lakshminarayanan2017simple, kendall2017uncertainties, gawlikowski2023survey, dheur2023large}. Input-dependent uncertainty is useful for detecting out-of-distribution data \citep{amini2020deep, liu2020energy, kang2023deep}, active learning \citep{settles2009active, ziatdinov2024active}, reinforcement learning \citep{yu2020mopo, jenkins2022bayesian}, and real-world decision-making under uncertainty \citep{abdar2021review}. While uncertainty quantification applied to regression on continuous outputs is well-studied, training neural networks to make probabilistic predictions over discrete counts has traditionally received less attention, despite multiple relevant applications. In recent years, neural networks have been trained to predict the size of crowds \citep{zhang2016single, lian2019density,  zou2019enhanced, luo2020hybrid, zhang20203d, lin2023optimal}, the number of cars in a parking lot \citep{hsieh2017drone}, traffic flow \citep{lv2014traffic, li2020vehicle, 9130874}, agricultural yields \citep{you2017deep}, inventory of product on shelves \citep{jenkins2023countnet3d}, and bacteria in microscopic images \citep{marsden2018people}. 

Uncertainty is often decomposed into two quantities: \textit{epistemic}, which refers to uncertainty due to misidentification of model parameters, and \textit{aleatoric}, which is uncertainty due to observation noise \cite{der2009aleatory}. In regression tasks, a popular and effective approach for capturing epistemic uncertainty is to use deep ensembles (DEs) \cite{lakshminarayanan2017simple} where a set of $M$ neural networks are independently trained from different initializations and combined to make predictions. Aleatoric uncertainty, meanwhile, is accounted for in each individual member, which outputs a distinct predictive distribution over the target. Aleatoric uncertainty can be categorized as \textit{homoscedastic}, where the predictive uncertainty is constant for all inputs, and \textit{heteroscedastic}, where predictive uncertainty varies as a function of the input. Existing work on DEs for regression trains each member of the ensemble to predict the mean and variance of a Gaussian distribution, $\left [\mu_m^{(i)}, {\sigma_m^2}^{(i)} \right ]^T = \mathbf{f}_{\mathbf{\Theta_m}}(\textbf{x}_i)$. Each of the $M$ individual distributions are then combined into a single prediction. 


Full heteroscedasticity from unrestricted predictive variance is critical for DEs to produce calibrated output distributions. The total variance of an ensemble prediction can be described as $\text{Var}[y_i | \mathbf{x}_i] = \mathbb{E}_m [{\sigma_m^2}^{(i)}] + \text{Var}_m[\mu_m^{(i)}]$, which is equal to the average aleatoric uncertainty of the members, plus the variance of the predicted means (Appendix \ref{app:decomp}). If each member of the ensemble is not fully heteroscedastic (i.e., the regressor produces constrained variance predictions) the aleatoric term can be misspecified, which induces a miscalibrated predictive distribution of the ensemble. Additionally, recent work has shown that aleatoric and epistemic uncertainty are largely entangled in practice \cite{mucsanyi2024benchmarking}, further compounding miscalibration. Consequently, poor aleatoric uncertainty quantification will likely also give rise to poor estimates of epistemic uncertainty. For a concrete example, see Figure \ref{fig:intro-fig}. In this experiment we train three count regression DEs using a synthetic dataset and plot their predictive distributions, along with the estimated aleatoric and epistemic uncertainty decompositions. Two of these DEs, Poisson and Negative Binomial, are constituted of members that are not fully heteroscedastic (Proposition \ref{prop:existing-characterization}). This leads to overestimated aleatoric uncertainty and miscalibrated predictive distributions. We also note that such misspecification impacts epistemic uncertainty, which is high in these ensembles despite adequate training data.


\begin{figure}
    \centering
    \includegraphics[width=0.9\linewidth]{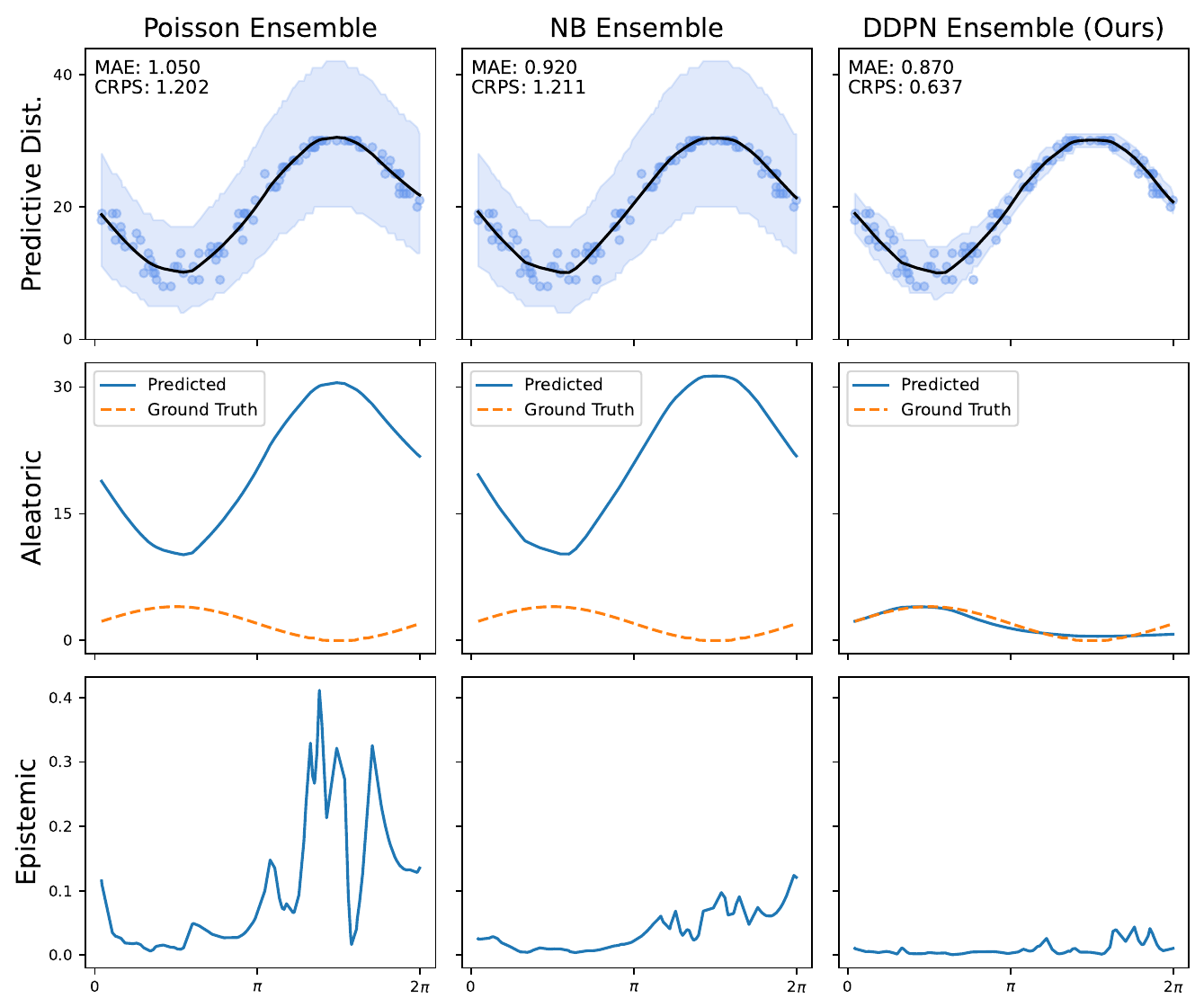}    \caption{Simulation experiment demonstrating a heteroscedastic data-generating process with discrete outputs. The predictive distributions of three ensemble methods are compared. The mean predictions are shown in black, the 95\% credible intervals shaded in blue, along with the corresponding Mean Absolute Error (MAE) and Continuous Ranked Probability Score (CRPS) in the top left (\textbf{Top}). Uncertainty is decomposed into its aleatoric (\textbf{Middle}) and epistemic (\textbf{Bottom}) components, with the ground-truth aleatoric uncertainty represented by a dashed line. Existing discrete regression methods are unable to accurately capture aleatoric uncertainty, which impacts both epistemic and total predictive uncertainty. In contrast, ensembles comprising Deep Double Poisson Networks (DDPNs) effectively represent epistemic and aleatoric uncertainty, while also improving mean fit.}
    \label{fig:intro-fig}
\end{figure}

While full heteroscedasticity in deep regression on continuous outputs is well-studied \cite{nix1994estimating, bishop1994mixture}, no such method exists for count data. Previous work trains a network to predict the $\lambda$ parameter of a Poisson distribution and minimize its negative log likelihood (NLL) \citep{fallah2009nonlinear}. However, the Poisson parameterization of the neural network suffers from the \textit{equi-dispersion} restriction: the predictive mean and variance are the same ($\hat{\lambda} = \hat{\mu} = \hat{\sigma}^2$). Another common alternative is to train the network to minimize Negative Binomial (NB) NLL \citep{xie2022neural}. The Negative Binomial breaks \textit{equi-dispersion} by introducing another parameter to the PMF. This helps disentangle the mean and variance, but suffers from the \textit{over-dispersion} restriction:  $\hat{\sigma}^2 \geq \hat{\mu}$. Neither of these methods can produce unrestricted heteroscedastic variance.



A plausible alternative to introduce full heteroscedasticity to the count regression setting is to violate distributional assumptions and simply apply models with Gaussian likelihoods. However, this decision neutralizes a crucial inductive bias, since it outputs a probability \textit{density} function, $p(y|\mathbf{f}_{\mathbf{\Theta}}(\textbf{x})), \ y \in \mathbb{R}$ over a \textit{discrete} output space, i.e. $y \in \mathbb{Z}_{\geq 0}$. This can impact both the accuracy and calibration of the predictive distribution, since the model is constrained to assign probability to infeasible real values and thus cannot maximally concentrate or center its predictions around the ground truth. Such a model will also exhibit various pathologies: 1) it will assign nontrivial probability to negative values (impossible for counting problems) when the predicted mean is small, and 2) the boundaries of its predictive intervals (i.e. highest density interval or 95\% credible interval) are likely to fall between two valid integers, diminishing their interpretability and utility.

\paragraph{Our Contributions} 

To address these issues, we introduce the Deep Double Poisson Network (DDPN), a novel discrete neural regression model (see Figure \ref{fig:ddpn}). DDPN models aleatoric uncertainty by outputting the parameters of the Double Poisson distribution \citep{efron1986double}, a highly flexible predictive distribution over $\mathbb{Z}_{\geq 0}$. Unlike existing count regression methods based on the Poisson and Negative Binomial distributions, DDPN exhibits full heteroscedasticity, allowing it to model a broader range of uncertainty patterns. At the same time, DDPN avoids the misspecification issues of Gaussian-based models on count data, enabling sharper, more reliable predictions. In addition to these desirable characteristics, we demonstrate that DDPN possesses robust regression properties akin to those of heteroscedastic Gaussian models. To do so, we propose a formal definition of learnable loss attenuation, a concept first studied by \citet{kendall2017uncertainties} in which a model can adaptively modify its loss function to lower the impact of outlier points during training, and prove that DDPN satisfies this definition. Furthermore, we introduce a discrete analog of the $\beta$-modification proposed by \citet{seitzer2022pitfalls}, enabling controllable attenuation strength. Across a variety of datasets, our experiments show that DDPN (both individually and as an ensemble method) outperforms all baselines in terms of mean accuracy, calibration, and out-of-distribution detection, establishing a new state-of-the-art for deep count regression.


\section{Modeling Predictive Uncertainty with Neural Networks}\label{sec:prev-work}

A large body of work has developed methods to represent both epistemic and aleatoric uncertainty in deep learning.

\subsection{Epistemic Uncertainty}\label{sec:epistemic}

Epistemic uncertainty refers to uncertainty due to model misspecification. Modern neural networks tend to be significantly underspecified by the data, which introduces a high degree of uncertainty \citep{wilson2020bayesian}. A variety of techniques have been proposed to explicitly represent epistemic uncertainty including Bayesian inference \citep{pmlr-v32-cheni14, hoffman2014no, wilson2020bayesian}, variational inference \citep{graves2011practical, blundell2015weight}, Laplace approximation \citep{daxberger2021laplace}, and Epistemic Neural Networks \cite{osband2023epistemic}. Recently, deep ensembles have emerged as a simple and popular solution \citep{lakshminarayanan2017simple, d2021repulsive, dwaracherla2022ensembles}. Other work connects Bayesian inference and ensembles by arguing the latter can  be viewed as a Bayesian model average where the posterior is sampled at multiple local modes \citep{fort2019deep, wilson2020bayesian}. This approach has a number of attractive properties: 1) it generally improves predictive performance \citep{dietterich2000ensemble}; 2) it can model more complex predictive distributions; and 3) it effectively represents uncertainty over learned weights, which leads to better calibration.

\subsection{Aleatoric Uncertainty}

Aleatoric uncertainty quantifies observation noise and generally cannot be reduced with more data \citep{der2009aleatory, kendall2017uncertainties}. In practice, this uncertainty can be introduced by low resolution sensors, blurry images, or the intrinsic noise of a signal. 

\subsubsection{Heteroscedastic Regression in Deep Learning}

Aleatoric noise is commonly modeled in deep learning by learning the parameters of a probability distribution over the label. This often takes the form of heteroscedastic regression, where the network learns an input-dependent dispersion parameter, $\hat{\phi}_i$, in addition to an estimate of the mean, $\hat{\mu}_i$ \cite{nix1994estimating, bishop1994mixture}. In the Gaussian case, $\hat{\phi}_i = \hat{\sigma}^2_i$. Recent work identifies issues with training such networks via Gaussian NLL due to the the influence of $\hat{\sigma}^2$ on the gradient of the mean, $\hat{\mu}$. \citet{immer2024effective} reparameterize their model to output the natural parameters of the Gaussian distribution. \citet{seitzer2022pitfalls} propose a modified objective and introduce a hyperparameter, $\beta \in [0, 1]$, which tempers the impact of $\hat{\sigma}^2$ on the gradient of $\hat{\mu}$ and offers tunable control over the loss attenuation properties of the Gaussian NLL. \citet{stirn2023faithful} modify their architecture to include separate sub-networks for $\hat{\mu}$ and $\hat{\sigma}^2$, then use a stop gradient operation to neutralize the impact of $\hat{\sigma}^2$ on the $\hat{\mu}(\mathbf{x})$ sub-network. In the count setting, Poisson \cite{fallah2009nonlinear} and Negative Binomial \cite{xie2022neural} likelihoods have been used to train heteroscedastic neural networks.

\subsubsection{Heteroscedastic Regression with Generalized Linear Models}\label{sec:glm}

Historically, Generalized Linear Models (GLMs) have been used to model predictive uncertainty on tabular data. GLMs specify a conditional distribution, $p(y_i | \eta_i, \phi)$, where $p$ is a member of the exponential family, $\eta_i = \boldsymbol{w}^T\boldsymbol{x}_i$ represents the natural parameter of $p$,  and $\phi$ is the dispersion term \citep{mccullagh2019generalized, murphy2023probabilistic}. A link function, $l(\cdot)$, is selected to specify a mapping between the natural parameter and the mean such that $l(\mu_i) = \eta_i = \boldsymbol{w}^T\boldsymbol{x}_i$. The model is then fit by minimizing NLL. Many common models can be viewed under this general framework, including logistic regression, Poisson regression, and binomial regression \citep{fahrmeir2013regression}. The Double Poisson distribution we employ in this work was originally developed in the context of GLMs and proposed a constrained dispersion term with an explicit dependence on the mean \citep{efron1986double}. More recent work employs Joint GLMs with separate covariates for mean and dispersion, allowing for more degrees of freedom \cite{aragon2018maximum}. All of these models are restricted to linear families of functions.

\subsubsection{Full Heteroscedasticity}\label{sec:unrestricted}

Because misspecification of aleatoric noise often corrupts overall estimates of uncertainty, regression models should be unconstrained in the range of variances they can output. We formalize this property as \textit{full heteroscedasticity}, which ensures that a model’s predictive variance is both input-dependent and unconstrained in scale.

    \begin{definition}\label{def:var}
        A family of distributions $Q$, parametrized by $\boldsymbol{\psi} \in \mathbb{R}^d$, is said to have \emph{unrestricted variance} if, for any random variable $Z \sim Q$, if we condition on $\mathbb{E}[Z] = \mu$ for any valid $\mu$, for any $\sigma^2 \in (0, \infty)$ there exists a setting of $\boldsymbol{\psi}$ such that $\text{Var}[Z] = \sigma^2$.
    \end{definition}

    \begin{definition}\label{def:reg}
        A probabilistic regression model $f$ is called \emph{fully heteroscedastic} if its predictive variance depends on the input $\mathbf{x}_i$ and its output distribution family $Q$ has unrestricted variance.
    \end{definition}

These definitions allow us to characterize existing methods through the lens of full heteroscedasticity.

    \begin{proposition}\label{prop:existing-characterization}
        Gaussian regressors are fully heteroscedastic, whereas Poisson and Negative Binomial regressors are not.
    \end{proposition}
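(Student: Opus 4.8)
The plan is to verify the claim family-by-family by examining the mean--variance coupling of each output distribution against Definition~\ref{def:var}. The heteroscedasticity half (predictive variance depending on $\mathbf{x}_i$) is immediate for all three models, since each is a neural network emitting input-conditional distributional parameters; the substance of the argument lies entirely in deciding whether each distribution family has \emph{unrestricted} variance in the sense of Definition~\ref{def:var}.

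For the Gaussian I would argue directly: writing $Z \sim \mathcal{N}(\mu, \sigma^2)$ with $\boldsymbol{\psi} = (\mu, \sigma^2)$, conditioning on $\mathbb{E}[Z] = \mu$ leaves $\sigma^2$ entirely free, so for any target $\sigma^2 \in (0,\infty)$ we simply set the second coordinate of $\boldsymbol{\psi}$ equal to it. Thus the mean and variance decouple, Definition~\ref{def:var} is satisfied, and a Gaussian regressor is fully heteroscedastic.

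For the Poisson I would invoke equi-dispersion: with $Z \sim \mathrm{Poisson}(\lambda)$ we have $\mathbb{E}[Z] = \mathrm{Var}[Z] = \lambda$, so conditioning on $\mathbb{E}[Z] = \mu$ forces $\lambda = \mu$ and therefore $\mathrm{Var}[Z] = \mu$; no other variance is attainable, and any $\sigma^2 \neq \mu$ is ruled out, so Definition~\ref{def:var} fails. For the Negative Binomial I would use over-dispersion: under a standard parametrization the variance takes the form $\mathrm{Var}[Z] = \mu + \mu^2/r$ with dispersion parameter $r > 0$, so once $\mathbb{E}[Z] = \mu$ is fixed one has $\mathrm{Var}[Z] > \mu$, the variance is confined to $(\mu, \infty)$, and no value $\sigma^2 \le \mu$ is achievable. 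In both cases Definition~\ref{def:var} fails, so neither regressor is fully heteroscedastic.

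The only real care point is the Negative Binomial step: because several parametrizations appear in the literature, I would phrase the conclusion in a parametrization-agnostic way, noting that over-dispersion ($\mathrm{Var}[Z] > \mathbb{E}[Z]$) is an intrinsic property of the distribution rather than an artifact of how its two parameters are named, so the lower bound on $\sigma^2$ is unavoidable. No step poses genuine difficulty; the result follows immediately from the fixed mean--variance relationships of the Poisson and Negative Binomial, contrasted with the free dispersion of the Gaussian.
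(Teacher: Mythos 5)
Your proposal is correct and follows essentially the same route as the paper's proof: decoupled mean and variance for the Gaussian, equi-dispersion forcing $\mathrm{Var}[Z]=\mu$ for the Poisson, and over-dispersion excluding $\sigma^2 \le \mu$ for the Negative Binomial. The only cosmetic difference is that the paper works in the $(r,p)$ parametrization and derives the contradiction $1<p$ algebraically, whereas you read the bound directly off $\mathrm{Var}[Z]=\mu+\mu^2/r$; the substance is identical.
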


See Appendix \ref{app:pf-1} for a proof. While Gaussian regressors provide the flexibility needed for uncertainty quantification in the continuous setting, existing methods for neural count regression place restrictions on their variance and thus lack full heteroscedasticity. This limitation reduces these models' ability to capture the complex aleatoric noise present in real-world count data.


\section{Deep Double Poisson Networks (DDPN)}\label{sec:ddpn}

\begin{figure}
    \centering
    \includegraphics[width=0.95\linewidth]{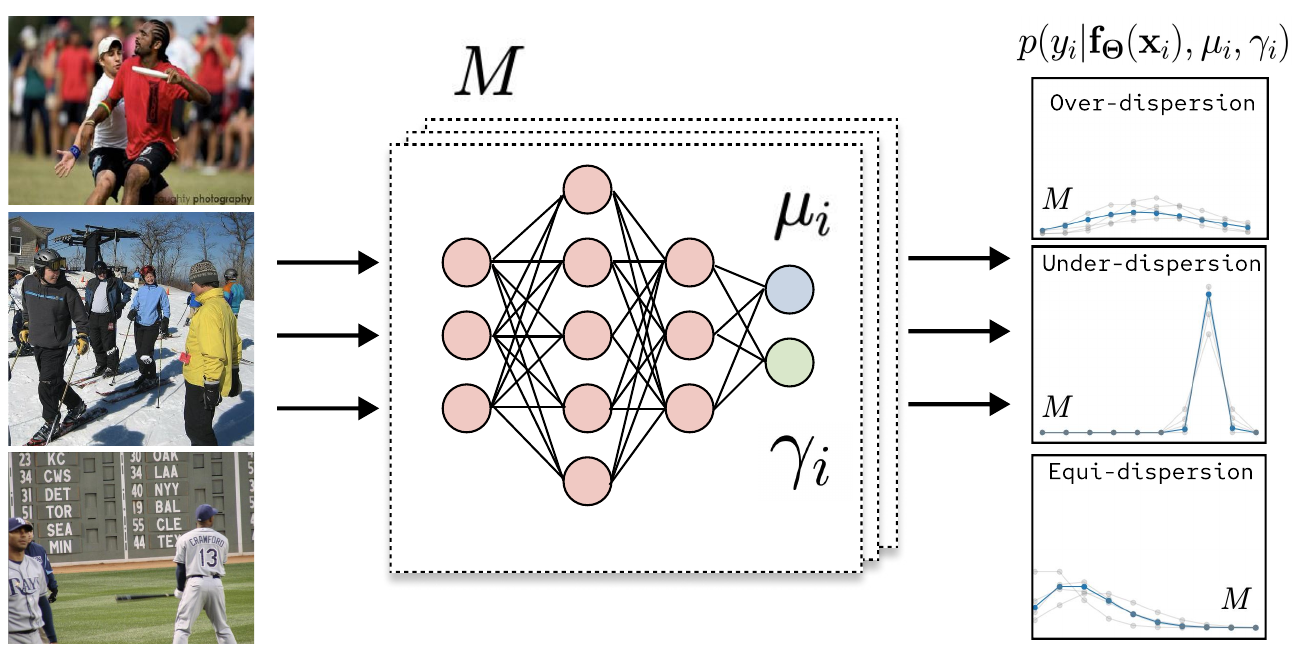}
    \caption{An overview of the Deep Double Poisson Network (DDPN). DDPN is a neural network that can process complex data and outputs the parameters of a Double Poisson distribution, $\hat{\mu}_i$ and $\hat{\gamma}_i$. The resulting predictive distributions exhibit unrestricted variance such that the network can learn over-, equi-, and under-dispersion. We ensemble a set of $M$ DDPNs to estimate aleatoric and epistemic uncertainty.}
    \label{fig:ddpn}
    \vspace{-.6cm}
\end{figure}

To address the limitations described in the previous section, we propose the Deep Double Poisson Network (DDPN), a state-of-the-art deep regression model that represents families of non-linear functions over complex, non-negative count data. DDPN outputs the parameters of the Double Poisson distribution \citep{efron1986double}, which results in a fully heteroscedastic predictive distribution under approximate moment assumptions (Proposition \ref{prop:ddpn-fully}). We demonstrate that DDPN inherits robust regression properties through learnable loss attenuation, similar to Gaussian-based networks. Additionally, we introduce a discrete analog of the $\beta$ modification from \citet{seitzer2022pitfalls}, allowing tunable control over self-attenuation to improve overall model fit.

We assume access to a dataset, $\mathcal{D}$, with $N$ training examples $\left\{\mathbf{x}_i, y_i\right\}_{i=1}^{N}$, where each $y_i \in \mathbb{Z}_{\geq 0}$ is drawn from some unknown count distribution $p(y_i | \mathbf{x}_i)$. Let $\mathcal{X}$ denote the space of all possible inputs $\mathbf{x}$, let $\mathcal{P}$ denote the space of all possible distributions over $\mathbb{Z}_{\geq 0}$, and let $\boldsymbol{\psi} \in \mathbb{R}^d$ denote a vector of parameters identifying a specific $p \in \mathcal{P}$. We wish to model $\mathcal{P}$ with a neural network $\mathbf{f}_{\mathbf{\Theta}}: \mathcal{X} \rightarrow \mathcal{P}$ with learnable weights $\mathbf{\Theta}$ (stacked into $L$ layers). In practice, we model $\mathbf{f}_{\mathbf{\Theta}}: \mathcal{X} \rightarrow \boldsymbol{\psi}$. Given such a network, we obtain a predictive distribution, $\hat{p}(y_i | \mathbf{f}_{\mathbf{\Theta}}(\mathbf{x}_i))$, for any input $\mathbf{x}_i$.

In particular, suppose that we restrict our output space to $\mathcal{P}_{DP} \subset \mathcal{P}$, the family of Double Poisson distributions over $y$. Any distribution $p \in \mathcal{P}_{DP}$ is uniquely parameterized by $ \boldsymbol{\psi} = \left [\mu, \gamma\ \right]^T \succ \mathbf{0}$, with mean $\mu$ and inverse dispersion $\gamma = \frac{1}{\phi}$. The distribution function, $p: \mathbb{Z}_{\geq 0} \rightarrow [0, 1]$, is defined as:

\begin{equation}
    p(y | \mu, \gamma) = \frac{\gamma^{\frac{1}{2}}e^{-\gamma \mu}}{c(\mu, \gamma)} \left(\frac{e^{-y}y^y}{y!}\right)\left(\frac{e\mu}{y}\right)^{\gamma y}
\end{equation}

 where $c$ is a normalizing constant. Let $Z$ denote a random variable with a Double Poisson distribution function, then we say  $Z \sim \text{DP}(\mu, \gamma)$. In line with previous work \cite{zou2013evaluating, aragon2018maximum}, we assume the moment approximations proposed by \citet{efron1986double}: $\mathbb{E}[Z] = \mu$ and $\text{Var}[Z] = \frac{\mu}{\gamma}$. We specify a model,  \footnote{For both $\hat{\mu}$ and $ \hat{\gamma}$ we apply the log ``link'' function to ensure positivity and numerical stability. We simply exponentiate whenever $\hat{\mu}_i$ or $\hat{\gamma}_i$ are needed (i.e., to evaluate the P.M.F.)} $[\text{log} \ \hat{\mu}_i, \text{log} \ \hat{\gamma}_i ]^T = \mathbf{f}_{\mathbf{\Theta}}(\mathbf{x}_i)$, as follows: let $\mathbf{z}_i = \mathbf{f}_{\mathbf{\Theta}_{1:L-1}}(\mathbf{x}_i)$, be the $d$-dimensional hidden representation of the input $\mathbf{x}_i$ produced by the first $L-1$ layers of a neural network. We apply two separate affine transformations to this representation to obtain our distribution parameters: $\text{log}(\hat{\mu}_i) = \boldsymbol{w}_\mu^T\mathbf{z}_i + b_{\mu}$ and $\text{log}(\hat{\gamma}_i) = \boldsymbol{w}_\gamma^T\mathbf{z}_i + b_{\gamma}$. 

The flexibility induced by the $\gamma$ parameter in DDPN yields a powerful model with the capacity to represent any form of aleatoric noise. Referring back to Definition \ref{def:reg}, we concretely state this property in the following proposition (with a proof in Appendix \ref{app:pf-2}):

\begin{proposition}\label{prop:ddpn-fully}
    DDPN regressors are fully heteroscedastic under approximate moment assumptions.
\end{proposition}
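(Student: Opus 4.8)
The plan is to verify the two conditions of Definition \ref{def:reg} directly: that the Double Poisson family $\mathcal{P}_{DP}$ has unrestricted variance in the sense of Definition \ref{def:var}, and that the DDPN's predictive variance is genuinely a function of the input $\mathbf{x}_i$. Since Definition \ref{def:reg} is the conjunction of these two properties, establishing both suffices.

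First I would establish unrestricted variance. Under the Efron moment approximations $\mathbb{E}[Z] = \mu$ and $\text{Var}[Z] = \mu/\gamma$, conditioning on $\mathbb{E}[Z] = \mu$ for a fixed valid $\mu > 0$ pins down the first coordinate of $\boldsymbol{\psi} = [\mu, \gamma]^T$. It then suffices to show that the second coordinate $\gamma$ can be chosen to realize any target variance $\sigma^2 \in (0, \infty)$. Solving $\mu/\gamma = \sigma^2$ yields $\gamma = \mu/\sigma^2$, which is strictly positive whenever $\mu > 0$ and $\sigma^2 > 0$, and hence gives an admissible parameter vector with $\boldsymbol{\psi} \succ \mathbf{0}$. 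This exhibits, for every admissible pair $(\mu, \sigma^2)$, a valid $\boldsymbol{\psi}$, establishing unrestricted variance.

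Next I would establish input-dependence. Substituting the network's two affine heads into the variance approximation gives $\widehat{\text{Var}}[y_i \mid \mathbf{x}_i] = \hat{\mu}_i/\hat{\gamma}_i = \exp\big((\boldsymbol{w}_\mu - \boldsymbol{w}_\gamma)^T \mathbf{z}_i + (b_\mu - b_\gamma)\big)$, where $\mathbf{z}_i = \mathbf{f}_{\mathbf{\Theta}_{1:L-1}}(\mathbf{x}_i)$. Since $\mathbf{z}_i$ varies with $\mathbf{x}_i$ and the coefficient vector $\boldsymbol{w}_\mu - \boldsymbol{w}_\gamma$ need not be identically zero, the predictive variance is a nonconstant function of the input, satisfying the input-dependence requirement. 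Combining this with the previous paragraph and invoking Definition \ref{def:reg} then yields the claim.

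I expect the main subtlety, rather than a genuine obstacle, to lie in the phrase ``under approximate moment assumptions.'' The exact moments of the Double Poisson deviate from $\mu$ and $\mu/\gamma$ because of the normalizing constant $c(\mu, \gamma)$; the argument above is clean precisely because it is carried out in the approximate-moment regime, where the variance decouples multiplicatively as $\mu \cdot \gamma^{-1}$, so that fixing the mean leaves $\gamma$ entirely free to tune the variance. I would therefore state the reliance on this approximation explicitly and emphasize that the proposition concerns the idealized moment behavior motivating the parameterization, not the exact PMF; any argument attempting to use the true moments would require controlling $c(\mu,\gamma)$ and would no longer yield the clean decoupling that makes unrestricted variance transparent.
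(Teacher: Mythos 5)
Your proposal is correct and follows essentially the same route as the paper's proof: both verify Definition \ref{def:reg} by (i) noting the predictive variance depends on $\mathbf{x}_i$ through the affine head $\log\hat{\gamma}_i = \boldsymbol{w}_\gamma^T\mathbf{z}_i + b_\gamma$, and (ii) establishing unrestricted variance by choosing $\gamma = \mu/\sigma^2$ under Efron's approximations so that $\mathrm{Var}[Z] = \mu/\gamma = \sigma^2$. Your explicit closed form for the variance as $\exp\big((\boldsymbol{w}_\mu - \boldsymbol{w}_\gamma)^T\mathbf{z}_i + (b_\mu - b_\gamma)\big)$ and your remarks on the role of the normalizing constant are slightly more detailed than the paper's version but do not change the argument.
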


See Appendix \ref{app:moment-approx} for a thorough assessment of the quality of Efron's moment approximations. In practice, we observe that these estimates are almost exact for nearly all relevant values of $\mu$ and $\sigma^2$ (Figure \ref{fig:efron-epsilon}).


\subsection{DDPN Objective}

To learn the weights of a DDPN, we minimize the following NLL objective (averaged across all prediction / target tuples $(\hat{\mu}_i, \hat{\gamma}_i, y_i)$ in the dataset): 

 {\small
\begin{equation}\label{eq:loss}
\mathscr{L}_i = \left(-\frac{\log\hat{\gamma}_i}{2} + \hat{\gamma}_i \hat{\mu}_i - \hat{\gamma}_i y_i (1 + \log \hat{\mu}_i - \log y_i) \right)
\end{equation}
}

In accordance with convention, we define $y_i \log y_i = 0$ when $y_i = 0$ \cite{cover1999elements}. During training, we minimize $\mathbb{E}[\mathscr{L}_i]$ iteratively via stochastic gradient descent (or common variants). Note that this objective drops the normalizing constant, $c(\mu_i, \gamma_i)$. Prior work demonstrates that setting $c(\mu_i, \gamma_i)=1$ provides an accurate approximation of the Double Poisson density and is easier to optimize \cite{efron1986double, chow2009flexible}. We provide a full derivation of Equation \ref{eq:loss} from the Double Poisson distribution function in Appendix \ref{app:ddpn-obj}.

\subsection{Loss Attenuation Dynamics of DDPN}\label{sec:attenuate}

Previous work \cite{kendall2017uncertainties} states that Gaussian heteroscedastic regressors exhibit \textit{learnable loss attenuation}, a property where predictive uncertainty in the NLL objective reduces the influence of outliers. We will demonstrate that the DDPN objective has this same property. To support this claim, we first introduce a formal definition of learnable attenuation --- a concept that, to the best of our knowledge, has not been rigorously defined in prior literature:

    \begin{definition}\label{def:loss-attn}
        The loss function of a regressor $\mathscr{L}(\hat{\mu}_i, \hat{\phi}_i)$, taking as input the predicted mean $\hat{\mu}_i \in \mathcal{U}$ and dispersion $\hat{\phi}_i \in (0, \infty)$, exhibits \emph{learnable attenuation} if it admits the form $\mathscr{L}(\hat{\mu}_i, \hat{\phi}_i) = d(\hat{\phi}_i) + a(\hat{\phi}_i) r(\hat{\mu}_i, y_i)$, where $\lim_{\hat{\phi} \to \infty}d(\hat{\phi}) = \infty$, $\lim_{\hat{\phi} \to \infty}a(\hat{\phi}) = 0$, and $r \geq 0$ with equality holding iff $\hat{\mu}_i = y_i$. We call $d$ the \emph{dispersion penalty}, $a$ the \emph{attenuation factor}, and $r$ the \emph{residual penalty}.
    \end{definition}

A key consequence of this formulation is that a model trained with such a loss can actively \textit{dampen} the impact of certain data points—a phenomenon captured in the following proposition:

\begin{proposition}\label{prop:att-consequence}
    If a loss function $\mathscr{L}(\hat{\mu}_i, \hat{\phi}_i)$ exhibits learnable attenuation, then $\frac{a(\hat{\phi}_i) r(\hat{\mu}_i, y_i)}{d(\hat{\phi}_i)} \rightarrow 0$ as $\hat{\phi}_i \rightarrow \infty$.
\end{proposition}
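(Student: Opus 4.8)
The plan is to treat the limit as $\hat{\phi}_i \to \infty$ with the pair $(\hat{\mu}_i, y_i)$ held fixed, which is the natural reading of the statement: learnable attenuation describes what happens as a model inflates its predicted dispersion for a given data point. Under this reading $r(\hat{\mu}_i, y_i)$ carries no dependence on $\hat{\phi}_i$ and is therefore a fixed constant, so the entire $\hat{\phi}_i$-dependence of the ratio lives in $a(\hat{\phi}_i)$ and $d(\hat{\phi}_i)$, and the problem collapses to an elementary limit manipulation supplied directly by Definition \ref{def:loss-attn}.

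Concretely, I would first factor the constant out, writing $\frac{a(\hat{\phi}_i) r(\hat{\mu}_i, y_i)}{d(\hat{\phi}_i)} = r(\hat{\mu}_i, y_i) \cdot \frac{a(\hat{\phi}_i)}{d(\hat{\phi}_i)}$, reducing the claim to $\lim_{\hat{\phi}_i \to \infty} a(\hat{\phi}_i)/d(\hat{\phi}_i) = 0$. Definition \ref{def:loss-attn} gives exactly the two ingredients needed: $a(\hat{\phi}_i) \to 0$ and $d(\hat{\phi}_i) \to \infty$. Since $d(\hat{\phi}_i) \to \infty$ it is eventually positive and bounded away from zero, so $1/d(\hat{\phi}_i) \to 0$; combining this with $a(\hat{\phi}_i) \to 0$ via the product rule for limits yields $a(\hat{\phi}_i)/d(\hat{\phi}_i) \to 0$, and multiplying by the finite constant $r(\hat{\mu}_i, y_i)$ preserves the limit.

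There is no substantive obstacle here — the proposition is essentially immediate once the definition is unpacked — so the only real care required is bookkeeping. I would make the fixed-argument interpretation explicit and flag the one mild regularity assumption it relies on: $r(\hat{\mu}_i, y_i) < \infty$ for the finite, fixed arguments in question, which holds for any real-valued residual penalty, including the DDPN one. If one does not wish to assume $a(\hat{\phi}_i) \geq 0$, the product-rule argument still applies verbatim, since it uses only $a(\hat{\phi}_i) \to 0$; alternatively one can squeeze $\left| a(\hat{\phi}_i)/d(\hat{\phi}_i) \right| \leq |a(\hat{\phi}_i)|/d(\hat{\phi}_i)$ for all sufficiently large $\hat{\phi}_i$ and send both bounds to zero. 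Either route delivers the claimed convergence.
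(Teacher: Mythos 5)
Your proof is correct and follows essentially the same route as the paper's: factor out the fixed residual $r(\hat{\mu}_i, y_i)$ and conclude from $a(\hat{\phi}_i) \to 0$ and $d(\hat{\phi}_i) \to \infty$ that the ratio $a(\hat{\phi}_i)/d(\hat{\phi}_i)$ vanishes. Your added remarks on the fixed-argument interpretation and the finiteness of $r$ are reasonable bookkeeping but do not change the argument.
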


This result (proved in Appendix \ref{app:pf-3}) reveals a fundamental mechanism of learnable attenuation: by increasing the predicted dispersion $\hat{\phi}_i$, a model can effectively nullify the residual penalty's contribution to the total loss. This allows it to ``ignore'' high-error outliers, leading to a robust regressor with better overall accuracy. Meanwhile, the dispersion penalty in the loss discourages the model from universally inflating $\hat{\phi}_i$, ensuring that it does not suppress all errors.

The Gaussian NLL commonly used to train heteroscedastic regressors naturally satisfies Definition \ref{def:loss-attn}, as first observed by \citet{kendall2017uncertainties}. Specifically, if we set $\hat{\phi}_i = \hat{\sigma}^2_i$, we obtain $d(\hat{\phi}_i) = \frac{1}{2}\log \hat{\phi}_i$, $r(\hat{\mu}_i, y_i) = (\hat{\mu}_i - y_i)^2$, and $a(\hat{\phi}_i) = \frac{1}{2\phi}$.

\begin{proposition}\label{prop:attenuation}
  The DDPN objective in Equation \ref{eq:loss} exhibits learnable attenuation of the form $d(\hat{\phi}_i) = \frac{1}{2}\log\hat{\phi}_i$, $a(\hat{\phi}_i) = \frac{1}{\hat{\phi}_i}$, and $r(\hat{\mu}_i, y_i) = (\hat{\mu}_i - y_i) - y_i(\log\ \hat{\mu}_i - \log\ y_i)$.
\end{proposition}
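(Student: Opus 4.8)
The plan is to verify the three structural conditions of Definition \ref{def:loss-attn} directly. First I would perform the change of variables $\hat{\gamma}_i = 1/\hat{\phi}_i$ (recall that $\gamma$ is the inverse dispersion), so that $\log\hat{\gamma}_i = -\log\hat{\phi}_i$, and substitute into Equation \ref{eq:loss}. The first term becomes $\tfrac{1}{2}\log\hat{\phi}_i$, and factoring $\hat{\gamma}_i = 1/\hat{\phi}_i$ out of the remaining two terms while expanding $y_i(1+\log\hat{\mu}_i-\log y_i)$ collects them into $\tfrac{1}{\hat{\phi}_i}\bigl[(\hat{\mu}_i - y_i) - y_i(\log\hat{\mu}_i - \log y_i)\bigr]$. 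This is pure algebra and exhibits the decomposition $\mathscr{L}_i = d(\hat{\phi}_i) + a(\hat{\phi}_i)\,r(\hat{\mu}_i,y_i)$ with exactly the claimed $d$, $a$, and $r$.

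With the decomposition in hand, the two limit conditions are immediate: $d(\hat{\phi}) = \tfrac{1}{2}\log\hat{\phi} \to \infty$ and $a(\hat{\phi}) = 1/\hat{\phi} \to 0$ as $\hat{\phi} \to \infty$. The substance of the argument is the residual-penalty condition, that $r(\hat{\mu}_i, y_i) \geq 0$ with equality iff $\hat{\mu}_i = y_i$. I would fix $y_i$ and treat $r$ as a function of $\hat{\mu}_i \in (0,\infty)$. For $y_i > 0$, differentiating gives $\partial r/\partial \hat{\mu}_i = 1 - y_i/\hat{\mu}_i$, which vanishes only at $\hat{\mu}_i = y_i$; since $\partial^2 r/\partial \hat{\mu}_i^2 = y_i/\hat{\mu}_i^2 > 0$, this critical point is the unique global minimizer, and direct evaluation gives $r(y_i, y_i) = 0$, so $r > 0$ elsewhere.

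Alternatively, and perhaps more cleanly, I would observe that $r$ is precisely the Bregman divergence generated by the strictly convex $\psi(u) = u\log u$, namely $r(\hat{\mu}_i, y_i) = \psi(y_i) - \psi(\hat{\mu}_i) - \psi'(\hat{\mu}_i)(y_i - \hat{\mu}_i)$ (equivalently, one half of the unit Poisson deviance); strict convexity then yields both nonnegativity and the equality characterization in one stroke. I expect the main obstacle to be the boundary case $y_i = 0$: under the convention $y_i \log y_i = 0$ the residual reduces to $r = \hat{\mu}_i$, which is strictly positive on the open domain and attains $0$ only in the limit $\hat{\mu}_i \to 0 = y_i$. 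I would treat this case explicitly, noting it is consistent with the ``equality iff $\hat{\mu}_i = y_i$'' clause once the domain $\mathcal{U}$ is understood to include (or approach) its boundary, so that the unique minimizer of $r$ coincides with $y_i$ in every case.
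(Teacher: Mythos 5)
Your proposal is correct and takes essentially the same route as the paper: the same substitution $\hat{\gamma}_i = 1/\hat{\phi}_i$ yields the decomposition, and your convexity argument for $r$ in $\hat{\mu}_i$ is the rescaled version of the paper's analysis of $h(u) = u - 1 - \log u$ with $u = \hat{\mu}_i / y_i$, including the same explicit treatment of the $y_i = 0$ boundary case. The Bregman-divergence observation (with generator $u \log u$, i.e.\ the unit Poisson deviance) is a nice conceptual aside not present in the paper, but it does not change the substance of the argument.
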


A proof is provided in Appendix \ref{app:pf-4}. Because DDPN satisfies the learnable attenuation property, it --- like Gaussian models --- can dynamically adjust to downweight the influence of outliers and noisy labels. This robustness likely contributes to the superior performance of DDPN observed in Section \ref{sec:exp}.


\subsection{$\beta$-DDPN: Controllable Loss Attentuation}\label{sec:beta}

\citet{seitzer2022pitfalls} argue that the ``learned loss attenuation'' property of neural networks trained via Gaussian NLL can sometimes result in premature convergence due to inflated predicted variance in hard-to-fit regions of the training data, which in turn can give rise to suboptimal mean fit. The mechanism that drives this behavior is the presence of the predicted variance term in the partial derivative of the NLL with respect to the mean. We observe that these same phenomena exist with DDPN. Our loss has the following partial derivatives: $\frac{\partial \mathscr{L}_i}{\partial \hat{\mu}_i} = \hat{\gamma}_i \left( 1 -\frac{y_i}{\hat{\mu}_i}\right)$ and $\frac{\partial \mathscr{L}_i}{\partial \hat{\gamma}_i} = -\frac{1}{2\hat{\gamma}_i} + \hat{\mu}_i - y_i(1 + \log \hat{\mu}_i - \log y_i)$.

Notice that if the predicted inverse dispersion, $\hat{\gamma}_i$, is sufficiently small (corresponding to large variance), it can completely zero out $\frac{\partial \mathscr{L}_i}{\partial \hat{\mu}_i}$ regardless of the current value of $\hat{\mu}_i$ . Thus, during training, a neural network can converge to (and get ``stuck'' in) suboptimal solutions wherein poor mean fit is explained away via large uncertainty values. To remedy this behavior, we propose a modified loss function, $\mathscr{L}_i^{(\beta)}$, also called the Double Poisson $\beta$-NLL:

\noindent
{\small 
\begin{equation}\label{eq:beta-loss}
    \mathscr{L}_i^{(\beta)} = \left\lfloor \hat{\gamma}_i^{-\beta} \right\rfloor \left(-\frac{\log\hat{\gamma}_i}{2} + \hat{\gamma}_i \hat{\mu}_i - \hat{\gamma}_i y_i (1 + \log \hat{\mu}_i - \log y_i) \right)
\end{equation}
}

where $\lfloor \cdot \rfloor$ denotes the \textit{stop-gradient} operation and we once again average across all prediction / target tuples $(\hat{\mu}_i, \hat{\gamma}_i, y_i)$ in the dataset. With this modification we can effectively temper the loss attenuation behavior of DDPN. We now have partial derivatives $\frac{\partial \mathscr{L}_i^{(\beta)}}{\partial \hat{\mu}_i} = \left ( \hat{\gamma}_i ^ {1 - \beta} \right ) \left( 1 -\frac{y_i}{\hat{\mu}_i}\right)$ and $\frac{\partial \mathscr{L}_i^{(\beta)}}{\partial \hat{\gamma}_i} = -\frac{1}{2\hat{\gamma}_i ^ {1 + \beta}} + \hat{\gamma}_i^{-\beta}\left(\hat{\mu}_i - y_i(1 + \log \hat{\mu}_i - \log y_i)\right)$. The Double Poisson $\beta$-NLL is parameterized by $\beta \in [0, 1]$, where $\beta = 0$ recovers the original Double Poisson NLL and $\beta = 1$ corresponds to fitting the mean, $\mu$, with no respect to $\gamma$ (while still performing normal weight updates to fit the value of $\gamma$). Thus, we can consider the value of $\beta$ as providing a smooth interpolation between the natural DDPN likelihood and a more mean-focused loss. For an empirical demonstration of the impact of $\beta$ on DDPN, see Figure \ref{fig:beta-study} in Section \ref{exp:beta-convergence}.

\subsection{DDPN Ensembles}\label{sec:ensembles}

The formulation in the previous sections describes a network with a single forward pass. As noted in Section \ref{sec:epistemic}, multiple independently-trained neural networks can be combined to improve mean fit and distributional calibration. We create DDPN ensembles by combining the predictive distributions of $M$ distinct DDPNs in a uniform mixture as follows: Given $M$ settings of weights $\{\mathbf{\Theta}_m\}_{m=1}^{M}$, we model $y_i$ as $p(y_i | \mathbf{x}_i) = \frac{1}{M} \sum_{m=1}^{M} p(y_i | \mathbf{f}_{\mathbf{\Theta_m}}(\mathbf{x}_i))$. Section \ref{sec:exp} convincingly demonstrates that combining model predictions in this way yields the best overall performance.

\section{Experiments}\label{sec:exp}

In our experiments, we aim to answer the following research questions: (1) Is DDPN flexible enough to fit any count distribution, even under known misspecification? (2) How does DDPN compare to existing deep regression methods as measured by accuracy and calibration? (3) Does the enhanced uncertainty quantification of DDPN from full heteroscedasticity combined with proper inductive biases of a discrete predictive distribution lead to better out-of-distribution detection? (4) What is the effect of the $\beta$ hyperparameter on the training dynamics of DDPN?

\subsection{Evaluation Metrics}

We evaluate each regression method along two key dimensions: \textit{accuracy}, measured by mean fit, and \textit{calibration}, which indicates the overall quality of the predictive distribution. Accuracy is quantified using the Mean Absolute Error (MAE). Calibration is assessed using the Continuous Ranked Probability Score (CRPS), a strictly proper scoring rule \cite{matheson1976scoring}, with lower values signifying closer alignment between the predictive CDF $F_i$ and the observed label $y_i$. Detailed explanations and definitions of these evaluation metrics are provided in Appendix \ref{app:eval-metrics}.

\subsection{Misspecification Recovery}\label{sec:misspec-recovery}
    
DDPN is trained by minimizing Double Poisson NLL, which makes a distributional assumption about the targets. In this section, we test whether DDPN is robust to violations of this assumption. To do so, we generate data from two processes: 1) $Y|X \sim \text{Poisson}(\exp({\frac{1}{2}X}))$; and 2) $Y|X \sim \text{NegBinom}(X^2, \frac{1}{2})$; neither of which conform to the assumptions in Section \ref{sec:ddpn}. For each dataset, we train both a DDPN and a model explicitly matched to the noise distribution of the data (a Poisson DNN for the first process and a Negative Binomial DNN for the second). Figures \ref{fig:misspec-poisson} and \ref{fig:misspec-nbinom} compare the predictive distributions learned by these models. Within each figure, we report MAE and CRPS.

\begin{figure}[h!]
    \centering
    \begin{subfigure}[b]{0.4\textwidth}
        \centering
        \includegraphics[width=\textwidth]{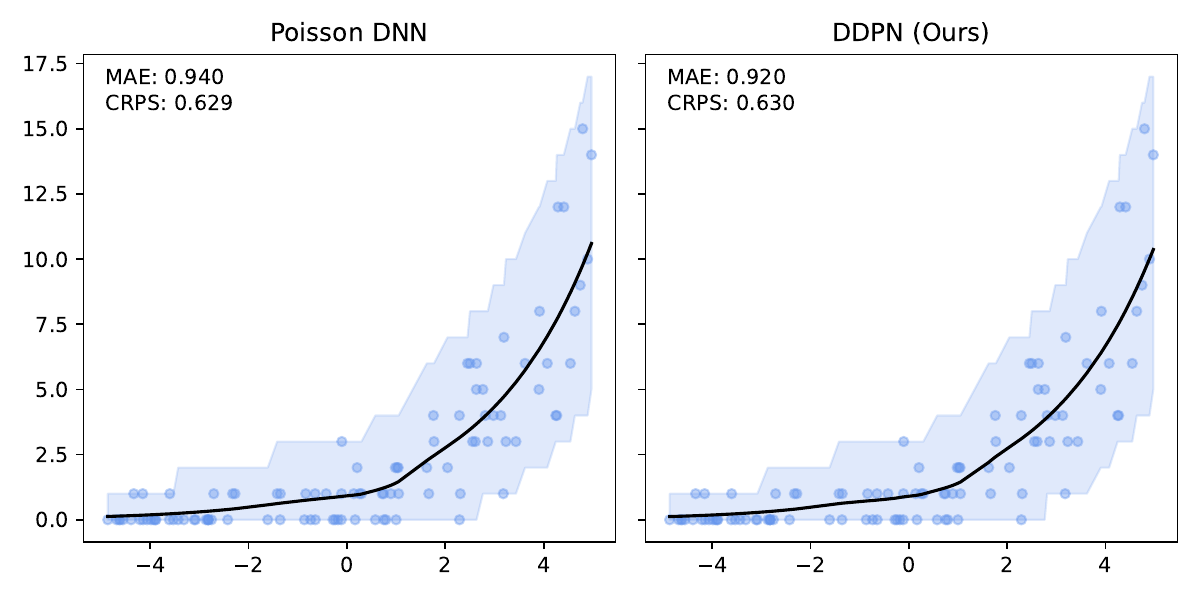}
        \caption{Predictive distributions of a Poisson DNN and DDPN on data where $Y|X \sim \text{Poisson}$.}
        \label{fig:misspec-poisson}
    \end{subfigure}
    \hfill
    \begin{subfigure}[b]{0.4\textwidth}
        \centering
        \includegraphics[width=\textwidth]{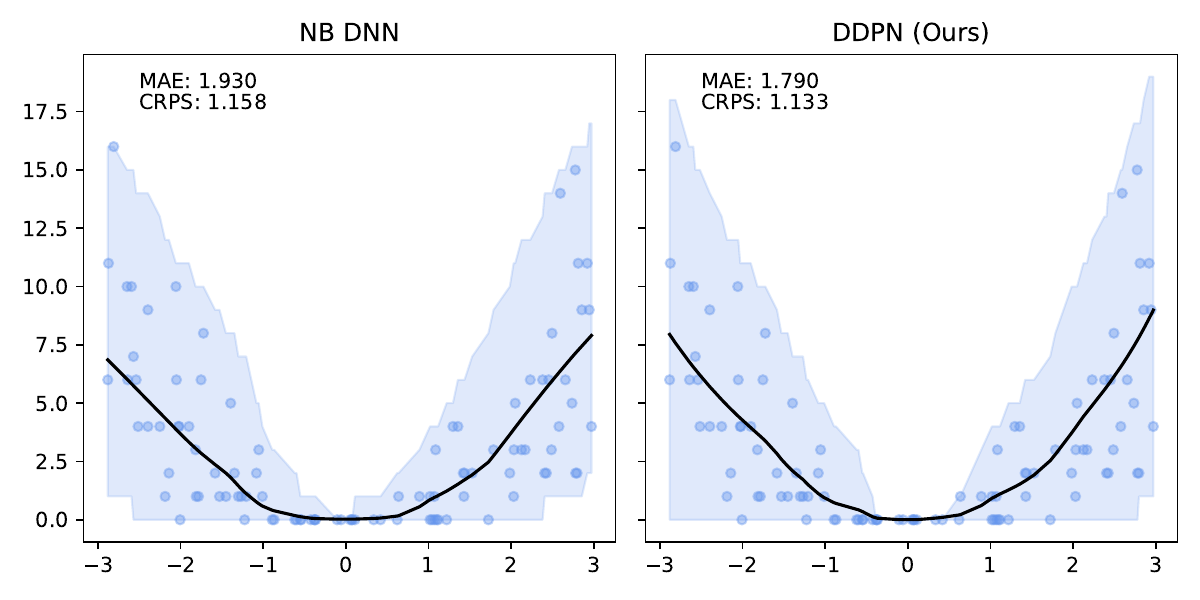}
        \caption{Predictive distributions of a Negative Binomial (NB) DNN and DDPN on data where $Y|X \sim \text{NegBinom}$.}
        \label{fig:misspec-nbinom}
    \end{subfigure}
\caption{Even when explicitly misspecified, DDPN recovers the data-generating distribution.}
\end{figure}\label{fig:misspecification-recovery}

Remarkably, DDPN is flexible enough to accurately capture the aleatoric structure of the data in both experiments. It achieves performance that meets or exceeds models explicitly tailored to the true noise distribution. These results highlight the robustness of DDPN and suggest that it offers a reliable and versatile approach to neural count regression, even in the face of potential misspecification.

\begin{table}[t!]
    \centering
    {\small    
    \begin{tabular}{ccc}
        \toprule
         Dataset & Modality &  Backbone   \\
         \toprule 
          \texttt{Length of Stay}  & Tabular & MLP \\
          \texttt{COCO-People}   & Image & ViT-B-16 \\
          \texttt{Inventory}  & Point Cloud & CountNet3D \\
          \texttt{Reviews} & Text & DistilBERT Base Cased \\
          \bottomrule
    \end{tabular}
    }
    \caption{Summary of datasets used, along with their corresponding backbones.}
    \label{tab:datasets}
\end{table}

\subsection{Real-World Datasets}\label{sec:real-world}

    \begin{table*}[]
     \centering
     \caption{Results across four datasets of differing modalities: Length of Stay (tabular), COCO-People (image), Inventory (point cloud), and Amazon Reviews (language). We denote the best performer in each subgroup (aleatoric only vs. aleatoric + epistemic) in \textbf{bold} and the second-best with an \underline{underline}. In each case, DDPN outperforms all baselines in terms of MAE and CRPS, achieving a new state-of-the-art on count regression tasks.}
        \resizebox{0.9\textwidth}{!}{
        \begin{tabular}{cccc|cc|cc|cc}
        \toprule
              & & \multicolumn{2}{c}{\textbf{Length of Stay}} & \multicolumn{2}{c}{\textbf{COCO-People}} & \multicolumn{2}{c}{\textbf{Inventory}} & \multicolumn{2}{c}{\textbf{Reviews}}  \\
             \toprule
             & & MAE ($\downarrow$) & CRPS ($\downarrow$) & MAE ($\downarrow$) & CRPS ($\downarrow$) & MAE ($\downarrow$) & CRPS ($\downarrow$) &  MAE ($\downarrow$) & CRPS ($\downarrow$)    \\
             \toprule
               \multirow{10}{*}{\rotatebox[origin=c]{90}{Aleatoric Only}} 
               & Poisson DNN & 0.664 (0.01) & 0.553 (0.01) & 1.099 (0.02) & 0.851 (0.01) & 1.023 (0.04) & 0.706 (0.01) & 0.818 (0.01) & 0.559 (0.00) \\
               & NB DNN & 0.685 (0.00) & 0.570 (0.00) & 1.143 (0.05) & 0.867 (0.01) & 1.020 (0.04) & 0.708 (0.01) & 0.855 (0.01) & 0.562 (0.00) \\
               & Gaussian DNN & 0.599 (0.01) & 0.453 (0.02) & 1.219 (0.12) & 0.866 (0.07) & 0.936 (0.01) & 0.659 (0.00) & 0.452 (0.01) & 0.323 (0.00) \\ 
               & Faithful Gaussian & 0.582 (0.00) & 0.436 (0.01) & 1.082 (0.01) & 0.879 (0.01) & 0.959 (0.03) & 0.688 (0.02) & 0.428 (0.00) & 0.428 (0.00) \\ 
               & Natural Gaussian & 0.597 (0.01) & 0.439 (0.01) & 1.157 (0.04) & 0.848 (0.02) & 0.958 (0.01) & 0.675 (0.01) & 0.428 (0.01) & 0.312 (0.00) \\
               & $\beta_{0.5}$-Gaussian & 0.600 (0.01) & 0.427 (0.01) & \underline{1.055} (0.01) & 0.786 (0.00) & 0.935 (0.01) & 0.669 (0.01) & 0.420 (0.00) & 0.306 (0.00) \\
               & $\beta_{1.0}$-Gaussian & 0.646 (0.01) & 0.462 (0.01) & 1.085 (0.01) & 0.809 (0.00) & 0.923 (0.01) & 0.653 (0.01) & 0.458 (0.01) & 0.327 (0.00) \\
               & DDPN (ours) & \textbf{0.502} (0.01) & \underline{0.390} (0.04) & 1.135 (0.08) & 0.810 (0.03) & \underline{0.906} (0.01) & \textbf{0.632} (0.01) & \underline{0.392} (0.01) & 0.277 (0.00) \\
               & $\beta_{0.5}$-DDPN (ours) & \underline{0.516} (0.01) & \textbf{0.370} (0.01) & 1.095 (0.03) & \underline{0.782} (0.02) & \textbf{0.905} (0.02) & 0.635 (0.01) & \textbf{0.356} (0.01) & \underline{0.268} (0.00) \\
               & $\beta_{1.0}$-DDPN (ours) & 0.558 (0.01) & 0.407 (0.01) & \textbf{1.006} (0.01) & \textbf{0.759} (0.01) & 0.909 (0.01) & \underline{0.634} (0.01) & \textbf{0.356} (0.00) & \textbf{0.263} (0.00) \\
               \midrule
               \multirow{10}{*}{\rotatebox[origin=c]{90}{Aleatoric + Epistemic (DEs)}}
               & Poisson DNN & 0.650 & 0.547 & 1.046 & 0.817 & 0.996 & 0.683 & 0.823 & 0.556 \\
               & NB DNN & 0.681 & 0.567 & 1.066 & 0.824 & 0.982 & 0.686 & 0.857 & 0.560 \\
               & Gaussian DNN & 0.590 & 0.450 & 1.148 & 0.815 & 0.902 & 0.634 & 0.447 & 0.319 \\
               & Faithful Gaussian & 0.571 & 0.429 & 1.042 & 0.841 & 0.909 & 0.643 & 0.424 & 0.324 \\ 
               & Natural Gaussian & 0.582 & 0.428 & 1.090 & 0.800 & 0.916 & 0.643 & 0.423 & 0.307 \\ 
               & $\beta_{0.5}$-Gaussian & 0.591 & 0.420 & \underline{1.019} & 0.740 & 0.879 & 0.619 & 0.414 & 0.302 \\ 
               & $\beta_{1.0}$-Gaussian & 0.633 & 0.453 & 1.050 & 0.765 & 
               0.887 & 0.624 & 0.455 & 0.324 \\ 
               & DDPN (ours) & \textbf{0.485} & \underline{0.361} & 1.024 & 0.744 & 0.861 & 0.604 & 0.373 & 0.268 \\
               & $\beta_{0.5}$-DDPN (ours) & \underline{0.495} & \textbf{0.359} & 1.029 & \underline{0.729} & \textbf{0.840} & \textbf{0.590}  & \underline{0.358} & \underline{0.261}  \\
               & $\beta_{1.0}$-DDPN (ours) & 0.543 & 0.393 & \textbf{0.959} & \textbf{0.712} & \underline{0.859} & \underline{0.597} & \textbf{0.344} & \textbf{0.257} \\
               \bottomrule
        \end{tabular}
        }
        \label{results:all}
    \end{table*}

    We compare DDPN to baselines on a number of real-world, complex count datasets. A summary of the datasets used and the corresponding backbone architectures is given in Table \ref{tab:datasets}. Notably, we run experiments across various modalities, including tabular, image, point cloud, and text data.

    \texttt{Length of Stay} \cite{lengthofstay} is a tabular dataset where the task is to forecast the number of days of a patient will spend in a hospital. The features consist of patient measurements and attributes of the healthcare facility. \texttt{COCO-People} \cite{lin2014microsoft} is an adaptation of the MS-COCO dataset, where the task is to predict the number of people in each image. \texttt{Inventory} \cite{jenkins2023countnet3d} consists of point clouds that capture retail shelves. The goal is to predict the number of products in the point cloud. Finally, \texttt{Reviews} is taken from the ``Patio, Lawn, and Garden'' category of the Amazon Reviews dataset \citep{ni2019justifying} . The goal is to predict the discrete 1-5 product rating from the text review.
    
    \subsubsection{Baselines}
    We compare to two discrete baselines: 1) Poisson DNN \citep{fallah2009nonlinear} and 2) Negative Binomial DNN \citep{xie2022neural}. We also include fully heteroscedastic continuous models in our experiments: 1) Gaussian DNN \citep{nix1994estimating}, 2) $\beta$-Gaussian \citep{seitzer2022pitfalls}, 3) Faithful Gaussian \citep{stirn2023faithful}, and 4) Natural Gaussian \citep{immer2024effective}. For $\beta$-Gaussian we use the prescribed values of $\beta=0.5$ and $\beta=1.0$ from \citet{seitzer2022pitfalls}, and mirror this behavior for our discrete adaptation, $\beta$-DDPN. A subscript marks the specific setting of $\beta$ (e.g. $\beta_{0.5}$). 
    
    We also evaluate ensembles for all methods to highlight the effects of modeling both aleatoric and epistemic uncertainty. Gaussian ensembles are produced according to the technique of \citet{lakshminarayanan2017simple}, while DDPN, Poisson, and Negative Binomial ensembles follow the prediction strategy outlined in Section \ref{sec:ensembles}. See Table \ref{results:all} for results.

    For each dataset, we generate train/val/test splits with a fixed random seed. We select models according to lowest average loss on the validation split and report results on the test split. We train and evaluate 5 models per technique and record the empirical mean and standard deviation of each metric. To form ensembles, these same 5 models are combined. All experiments are implemented in PyTorch \citep{paszke2017automatic}. Details related to data splits, network architecture, hardware, and hyperparameter selection are reported in Appendix \ref{app:training-details}. Source code is available online\footnote{\url{https://github.com/delicious-ai/ddpn}\label{footnote:repo-link}}.

\subsubsection{Analysis of Empirical Results}

    \begin{figure*}[t!]
        \centering
        \begin{subfigure}[b]{0.33\textwidth}
            \centering
            \includegraphics[width=\textwidth]{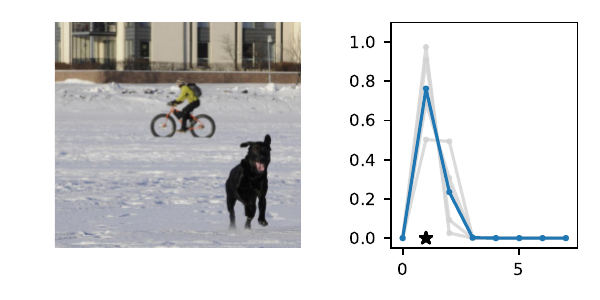}
        \end{subfigure}
        \hfill
        \begin{subfigure}[b]{0.33\textwidth}
            \centering
            \includegraphics[width=\textwidth]{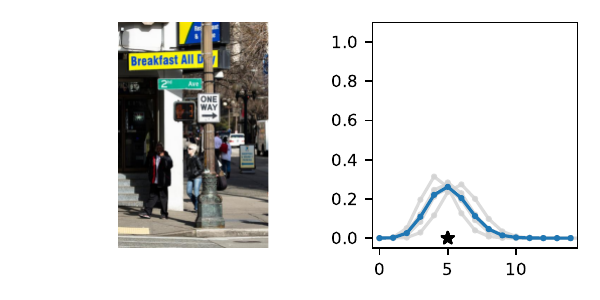}
        \end{subfigure}
        \hfill
        \begin{subfigure}[b]{0.33\textwidth}
            \centering
            \includegraphics[width=\textwidth]{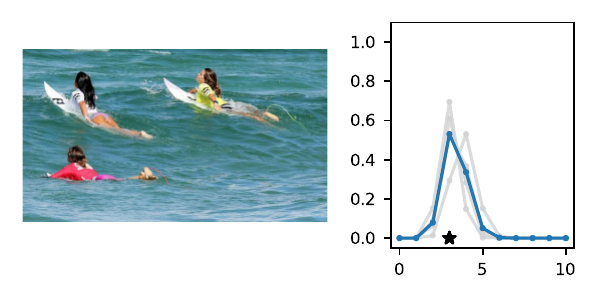}
        \end{subfigure}
    \caption{Predictive PMFs from a $\beta_{1.0}$-DDPN ensemble on samples from the test split of \texttt{COCO-People}. Individual member predictions are in gray, while the ensemble prediction is in blue. DDPN is able to flexibly and accurately represent counts of various magnitudes.}
    \label{fig:coco-examples}
    \end{figure*}

    We observe that DDPN (or one of its $\beta$ variants) achieves the best accuracy and calibration on every task we benchmark. In almost all cases, the margin is substantial--- across both individual models and ensembles, \texttt{COCO-People} is the only dataset in which a non-DDPN model ($\beta_{0.5}$-Gaussian) even places in the top three. In line with previous results \citep{lakshminarayanan2017simple, fort2019deep}, we find that marginalizing over multiple plausible sets of weights through deep ensembles yields improvements in both mean fit and probabilistic alignment. Ensembled DDPN also outperforms all baselines in terms of accuracy and calibration.

    On most datasets, we observe that Negative Binomial and Poisson DNNs struggle with fitting the mean in addition to uncertainty quantification. In contrast to Gaussian and DDPN models that can perform loss attenuation during training (see Section \ref{sec:attenuate}), Negative Binomial and Poisson regressors are limited in their ability to discount outliers. Whenever these models predict a high count, the equidispersion and overdispersion assumptions built into each respective distribution force a high uncertainty to be predicted as well. Thus, training data points are weighted according to their predicted labels without truly accounting for observation noise, and outliers are not properly tempered. 
    
    The gap in performance between Gaussian neural regressors and DDPN models, meanwhile, can largely be attributed to the lack of an inductive bias: Gaussian models naïvely assign probability mass to continuous values that are known a priori to be infeasible, while DDPNs concentrate their probabilities on discrete counts by explicit construction. This mismatch produces predictive CDFs that are more misaligned with observations compared to those produced by DDPNs. See Figure \ref{fig:pathologies} for two illustrative examples from \texttt{Length of Stay}.

    \begin{figure}[h!]
        \centering
        \begin{subfigure}[b]{0.45\linewidth}
            \centering
            \includegraphics[width=\linewidth]{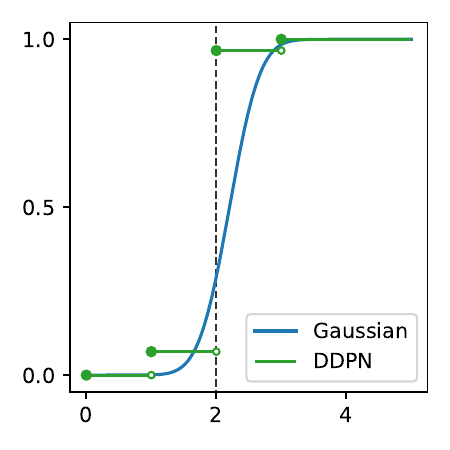}
        \end{subfigure}
        \hfill
        \begin{subfigure}[b]{0.45\linewidth}
            \centering
            \includegraphics[width=\linewidth]{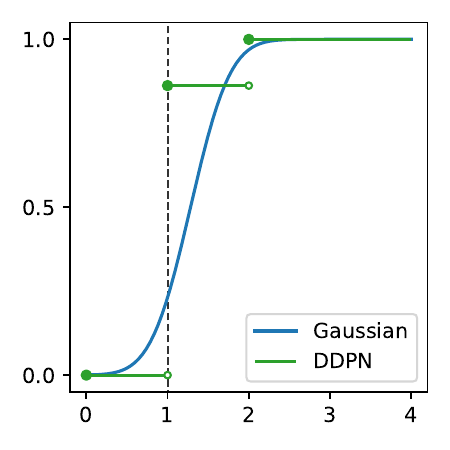}
        \end{subfigure}
        \caption{Predictive CDFs from a $\beta_{0.5}$-Gaussian (blue) and $\beta_{0.5}$-DDPN (green) model on two test points from \texttt{Length of Stay}. Observed labels are marked with a vertical dashed line. When $y$ is discrete, Gaussian models' predictions suffer from having to assign probability mass to infeasible continuous values. DDPN is free from this constraint.}
        \label{fig:pathologies}
    \end{figure}
    
    In Section \ref{sec:beta}, we explain how \citet{seitzer2022pitfalls}'s $\beta$ modification can be adapted to improve mean fit in the discrete heteroscedastic setting by defining $\beta$-DDPN. Our results demonstrate that this adjustment to our training objective generally results in superior MAE, although the strength of the effect varies by dataset. We note that CRPS often improves alongside MAE, indicating that $\beta$ can simultaneously boost both accuracy and overall probabilistic fit.

    Figure \ref{fig:coco-examples} depicts several test instances from \texttt{COCO-People} and the predictive distributions produced by a $\beta_{1.0}$-DDPN ensemble, the highest scoring method. Further examples, including predictions from each baseline model, can be found in Appendix \ref{app:coco-case-studies}.

\subsection{Out-of-Distribution Detection}\label{sec:ood}

\begin{figure*}[t!]
            \centering
            \includegraphics[width=0.8\linewidth]{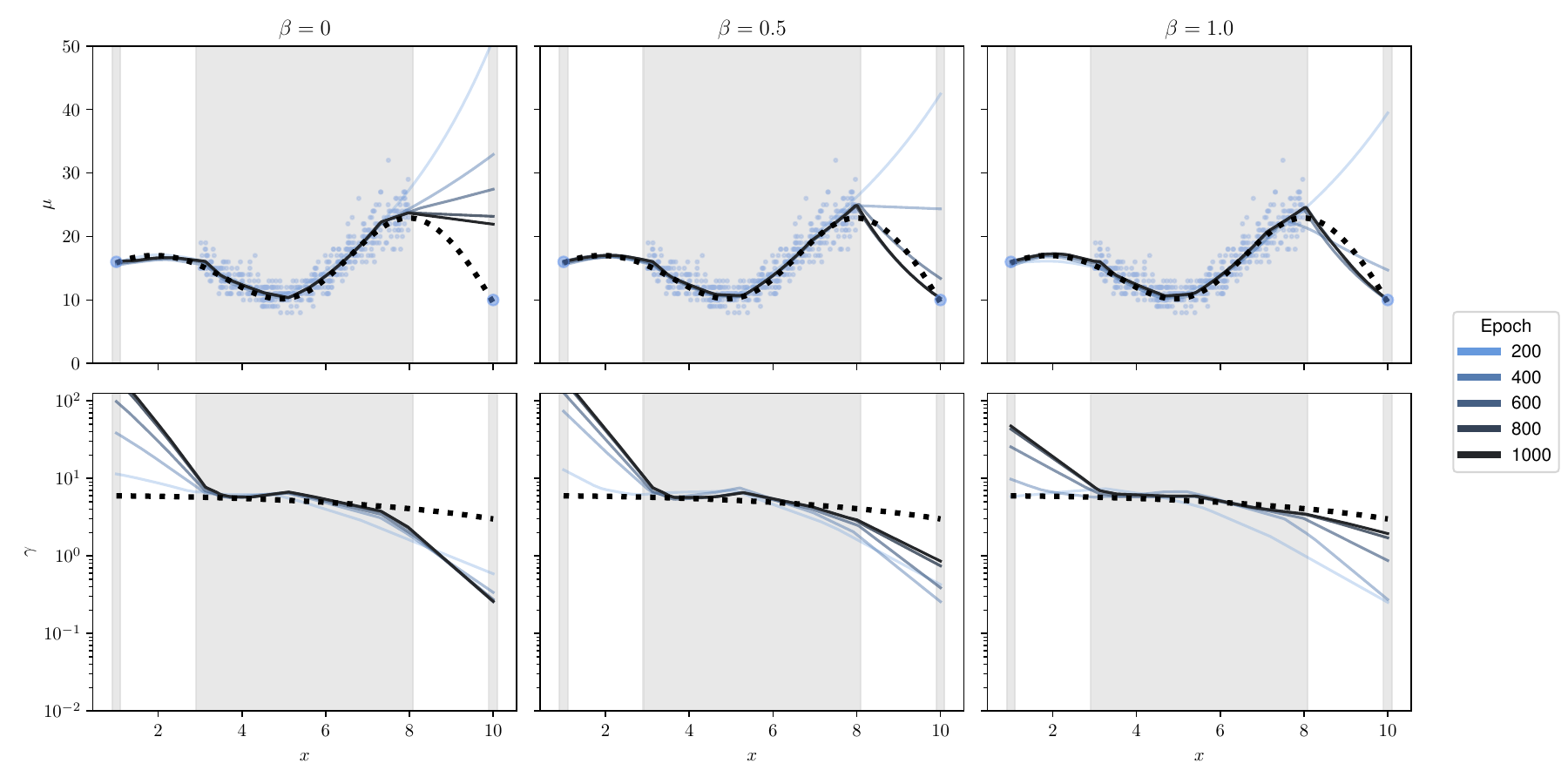}
            \caption{The effect of $\beta$ on the convergence of a DDPN during training, inspired by Fig. 2 of \citet{stirn2023faithful}. Dotted lines indicate ground truth values of $\mu$ and $\gamma$ respectively, while solid lines show the model's learned distribution. Shaded regions illustrate training data coverage. With standard NLL ($\beta$ = 0), poor mean fit on the rightmost isolated point is ``explained away'' via high uncertainty (low values of $\gamma$), leading to subpar convergence. Increasing the value of $\beta$ changes training priorities and allows the network to adequately model the mean without exploding uncertainty estimates. Higher values of $\beta$ lead to faster convergence to the mean; when $\beta = 0.5$, the mean is fit by epoch 800, but when $\beta = 1.0$, the mean is fit by epoch 600.}
            \label{fig:beta-study}
            \vspace{-0.5cm}
        \end{figure*}

In this section, we investigate how DDPN models perform relative to baselines when their predictive distributions are used for out-of-distribution (OOD) detection. In accordance with common practice, we form scoring threshold-based OOD detectors and evaluate them in the binary classification setting \citep{hendrycks2016baseline, alemi2018uncertainty, ren2019likelihood}. We define our in-distribution (ID) dataset $\mathcal{D}_{in}$ to be \texttt{Amazon Reviews} and form our out-of-distribution dataset $\mathcal{D}_{out}$ by compiling verses from the King James Version of the Holy Bible. We refer to the train and test splits of $\mathcal{D}_{in}$ as $\mathcal{D}_{in}^{(trn)}$ and $\mathcal{D}_{in}^{(test)}$ respectively.

We use the total variance of ensemble predictions as our OOD score, since both aleatoric and epistemic uncertainty are helpful OOD indicators \citep{wang2021bayesian}. The OOD threshold $\tau_{\alpha}$ is chosen such that we expect a false positive rate of $\alpha$ on data drawn from $\mathcal{D}_{in}$. Specifically, we randomly select 20\% of $\mathcal{D}_{in}^{(test)}$ and define $\tau_{\alpha}$ to be the $(1 - \alpha)$ quantile of the predictive uncertainties on that holdout set. Any inputs from $\mathcal{D}_{in}^{(test)}$ (excluding the held-out data) or $\mathcal{D}_{out}$ that produce a predictive variance above $\tau_{\alpha}$ are classified as OOD, while the rest are marked as ID. 


For each ensemble, we vary $\alpha$ from 0 to 1 and report the resultant area under the ROC curve (AUROC), the area under the precision-recall curve (AUPR), and the false positive rate at the 80\% true positive rate (FPR80). To account for variability due to randomness in the calculation of $\tau_{\alpha}$, we run each evaluation 10 times, re-sampling the holdout set with each iteration, and report the mean and standard deviation of each metric. Results are presented in Table \ref{tab:ood}. 


\begin{table}[t!]
    \caption{OOD detection results. We denote the best performer for each metric in \textbf{bold} and the second-best \underline{underlined}.}
    \centering
    \resizebox{\columnwidth}{!}{
        \begin{tabular}{lccc}
        \toprule
        & AUROC ($\uparrow$) & AUPR ($\uparrow$) & FPR80 ($\downarrow$) \\
        \toprule
        Poisson DNN                 & 0.330 (0.001) & 0.413 (0.000) & 0.793 (0.001) \\
        NB DNN                      & 0.280 (0.001) & 0.397 (0.000) & 0.819 (0.002) \\
        Gaussian DNN                & 0.840 (0.001) & 0.812 (0.005) & 0.318 (0.002) \\
        Faithful Gaussian           & 0.731 (0.001) & 0.670 (0.001) & 0.380 (0.002) \\
        Natural Gaussian            & 0.836 (0.001) & 0.827 (0.002) & 0.317 (0.002) \\
        $\beta_{0.5}$-Gaussian      & 0.829 (0.001) & 0.797 (0.004) & 0.323 (0.002) \\
        $\beta_{1.0}$-Gaussian      & 0.817 (0.001) & 0.806 (0.002) & 0.338 (0.001) \\
        DDPN (ours)                 & 0.854 (0.001) & 0.849 (0.003) & 0.269 (0.002) \\
        $\beta_{0.5}$-DDPN (ours)   & \textbf{0.887} (0.001) & \textbf{0.875} (0.003) & \textbf{0.199} (0.001) \\
        $\beta_{1.0}$-DDPN (ours)   & \underline{0.870} (0.001) & \underline{0.851} (0.002) & \underline{0.236} (0.002) \\
        \bottomrule
        \end{tabular}
    }\label{tab:ood}
\end{table}

Our results demonstrate that uncertainties obtained from DDPN (and $\beta$-DDPN) ensembles are the best-suited among all other models evaluated for identifying OOD inputs. This is strong evidence that uncertainty estimates obtained from DDPN models are robust, yielding important context for practitioners around how much an individual prediction can be trusted. For plots showing how distributions differ for ID / OOD data, see Appendix \ref{app:ood-case-studies}. In particular, Figure \ref{fig:ood-discrete} highlights the effective OOD behavior of DDPN.

\subsection{The Effect of $\beta$ on DDPN Training Dynamics}\label{exp:beta-convergence}

In this section, we study how the $\beta$ hyperparameter mediates the functional evolution of DDPN's predictive distribution over the course of training. Figure \ref{fig:beta-study} shows a one dimensional count regression problem with three settings of $\beta$. We generate data from $Y|X \sim \text{DP}(\lceil X\sin(X) + 15 \rceil, 6 - 0.03X^2)$, where $X \sim \text{Uniform}[3, 8]$. We then produce a set of isolated points at $(1, \lceil \sin(1) + 15 \rceil)$ and $(10, \lceil 10\sin(10) + 15 \rceil)$ and concatenate them with the generated dataset. Since these points are relatively far from the rest of the training data, a neural network will struggle to initially fit them. If the network is heteroscedastic, it may downweight the contribution of these outlier points to the overall loss by assigning high uncertainty values (low values of $\gamma$ for DDPN) to explain the poor fit. This is a consequence of the learnable loss attentuation property from Definition \ref{def:loss-attn}. This raises the possibility that model training converges without producing a function that fits the outlier points. We can temper the effect of such loss attenuation by training with $\beta$-NLL. In Figure \ref{fig:beta-study}, we see that increasing the value of $\beta$ indeed addresses this behavior, with higher $\beta$ leading to faster convergence to the true mean fit. This matches the analysis performed by \cite{seitzer2022pitfalls}, who use $\beta$ to regularize the Gaussian NLL.    
\section{Conclusion}

In this paper, we presented the Deep Double Poisson Network (DDPN) as a novel approach to enhance the quality of predictive distributions in deep count regression. DDPN achieves full heteroscedasticity through the unrestricted variance of its output distribution, allowing it to accurately capture aleatoric uncertainty of any form. This capability results in uncorrupted estimates of epistemic uncertainty when predictions are combined in an ensemble. In addition, we formally defined the concept of learnable loss attenuation and proved that DDPN exhibits this property. We also proposed the $\beta$-DDPN modification, which enables control over the strength of this loss attenuation to further improve performance. Experiments on diverse datasets demonstrate that DDPN surpasses current baselines in accuracy, calibration, and out-of-distribution detection, establishing a new state-of-the-art in deep count regression.

\section*{Acknowledgments}
The work was partially supported by NSF awards \#2421839, NAIRR \#240120. The views and conclusions contained in this paper are those of the authors and should not be interpreted as representing any funding agencies.
\section*{Impact Statement}


 By enabling robust uncertainty estimation on count data, DDPN improves the reliability and robustness of AI systems in high-stakes applications such as healthcare, finance, and environmental modeling. The ability to model both epistemic and aleatoric uncertainty enhances trustworthiness, particularly in safety-critical systems. However, as with any predictive modeling approach, there are ethical considerations regarding bias, misuse, and overreliance on model outputs. Ensuring that DDPN is used responsibly requires careful dataset curation, bias mitigation, and transparent reporting of model uncertainties.

\bibliography{main}

\begin{thebibliography}{85}
\providecommand{\natexlab}[1]{#1}
\providecommand{\url}[1]{\texttt{#1}}
\expandafter\ifx\csname urlstyle\endcsname\relax
  \providecommand{\doi}[1]{doi: #1}\else
  \providecommand{\doi}{doi: \begingroup \urlstyle{rm}\Url}\fi

\bibitem[Abdar et~al.(2021)Abdar, Pourpanah, Hussain, Rezazadegan, Liu, Ghavamzadeh, Fieguth, Cao, Khosravi, Acharya, et~al.]{abdar2021review}
Abdar, M., Pourpanah, F., Hussain, S., Rezazadegan, D., Liu, L., Ghavamzadeh, M., Fieguth, P., Cao, X., Khosravi, A., Acharya, U.~R., et~al.
\newblock A review of uncertainty quantification in deep learning: Techniques, applications and challenges.
\newblock \emph{Information fusion}, 76:\penalty0 243--297, 2021.

\bibitem[Alemi et~al.(2018)Alemi, Fischer, and Dillon]{alemi2018uncertainty}
Alemi, A.~A., Fischer, I., and Dillon, J.~V.
\newblock Uncertainty in the variational information bottleneck.
\newblock \emph{arXiv preprint arXiv:1807.00906}, 2018.

\bibitem[Amini et~al.(2020)Amini, Schwarting, Soleimany, and Rus]{amini2020deep}
Amini, A., Schwarting, W., Soleimany, A., and Rus, D.
\newblock Deep evidential regression.
\newblock \emph{Advances in neural information processing systems}, 33:\penalty0 14927--14937, 2020.

\bibitem[Aragon et~al.(2018)Aragon, Achcar, and Martinez]{aragon2018maximum}
Aragon, D.~C., Achcar, J.~A., and Martinez, E.~Z.
\newblock Maximum likelihood and bayesian estimators for the double poisson distribution.
\newblock \emph{Journal of Statistical Theory and Practice}, 12:\penalty0 886--911, 2018.

\bibitem[Bishop(1994)]{bishop1994mixture}
Bishop, C.~M.
\newblock Mixture density networks.
\newblock 1994.

\bibitem[B{\l}asiok et~al.(2023)B{\l}asiok, Gopalan, Hu, and Nakkiran]{blasiok2023unifying}
B{\l}asiok, J., Gopalan, P., Hu, L., and Nakkiran, P.
\newblock A unifying theory of distance from calibration.
\newblock In \emph{Proceedings of the 55th Annual ACM Symposium on Theory of Computing}, pp.\  1727--1740, 2023.

\bibitem[Blundell et~al.(2015)Blundell, Cornebise, Kavukcuoglu, and Wierstra]{blundell2015weight}
Blundell, C., Cornebise, J., Kavukcuoglu, K., and Wierstra, D.
\newblock Weight uncertainty in neural network.
\newblock In \emph{International conference on machine learning}, pp.\  1613--1622. PMLR, 2015.

\bibitem[Chen et~al.(2014)Chen, Fox, and Guestrin]{pmlr-v32-cheni14}
Chen, T., Fox, E., and Guestrin, C.
\newblock Stochastic gradient hamiltonian monte carlo.
\newblock In Xing, E.~P. and Jebara, T. (eds.), \emph{Proceedings of the 31st International Conference on Machine Learning}, volume~32 of \emph{Proceedings of Machine Learning Research}, pp.\  1683--1691, Bejing, China, 22--24 Jun 2014. PMLR.
\newblock URL \url{https://proceedings.mlr.press/v32/cheni14.html}.

\bibitem[Chow \& Steenhard(2009)Chow and Steenhard]{chow2009flexible}
Chow, N. and Steenhard, D.
\newblock A flexible count data regression model using sas proc nlmixed.
\newblock In \emph{SAS Global Forum: Statistics and Data Analysis}, volume 250, pp.\  1--14, 2009.

\bibitem[Cover(1999)]{cover1999elements}
Cover, T.~M.
\newblock \emph{Elements of information theory}.
\newblock John Wiley \& Sons, 1999.

\bibitem[Cubuk et~al.(2018)Cubuk, Zoph, Mane, Vasudevan, and Le]{cubuk2018autoaugment}
Cubuk, E.~D., Zoph, B., Mane, D., Vasudevan, V., and Le, Q.~V.
\newblock Autoaugment: Learning augmentation policies from data.
\newblock \emph{arXiv preprint arXiv:1805.09501}, 2018.

\bibitem[D'Angelo \& Fortuin(2021)D'Angelo and Fortuin]{d2021repulsive}
D'Angelo, F. and Fortuin, V.
\newblock Repulsive deep ensembles are bayesian.
\newblock \emph{Advances in Neural Information Processing Systems}, 34:\penalty0 3451--3465, 2021.

\bibitem[Daxberger et~al.(2021)Daxberger, Kristiadi, Immer, Eschenhagen, Bauer, and Hennig]{daxberger2021laplace}
Daxberger, E., Kristiadi, A., Immer, A., Eschenhagen, R., Bauer, M., and Hennig, P.
\newblock Laplace redux-effortless bayesian deep learning.
\newblock \emph{Advances in Neural Information Processing Systems}, 34:\penalty0 20089--20103, 2021.

\bibitem[Deng et~al.(2009)Deng, Dong, Socher, Li, Li, and Fei-Fei]{deng2009imagenet}
Deng, J., Dong, W., Socher, R., Li, L.-J., Li, K., and Fei-Fei, L.
\newblock Imagenet: A large-scale hierarchical image database.
\newblock In \emph{2009 IEEE conference on computer vision and pattern recognition}, pp.\  248--255. Ieee, 2009.

\bibitem[Der~Kiureghian \& Ditlevsen(2009)Der~Kiureghian and Ditlevsen]{der2009aleatory}
Der~Kiureghian, A. and Ditlevsen, O.
\newblock Aleatory or epistemic? does it matter?
\newblock \emph{Structural safety}, 31\penalty0 (2):\penalty0 105--112, 2009.

\bibitem[Dheur \& Taieb(2023)Dheur and Taieb]{dheur2023large}
Dheur, V. and Taieb, S.~B.
\newblock A large-scale study of probabilistic calibration in neural network regression.
\newblock In \emph{International Conference on Machine Learning}, pp.\  7813--7836. PMLR, 2023.

\bibitem[Dietterich(2000)]{dietterich2000ensemble}
Dietterich, T.~G.
\newblock Ensemble methods in machine learning.
\newblock In \emph{International workshop on multiple classifier systems}, pp.\  1--15. Springer, 2000.

\bibitem[Dwaracherla et~al.(2022)Dwaracherla, Wen, Osband, Lu, Asghari, and Van~Roy]{dwaracherla2022ensembles}
Dwaracherla, V., Wen, Z., Osband, I., Lu, X., Asghari, S.~M., and Van~Roy, B.
\newblock Ensembles for uncertainty estimation: Benefits of prior functions and bootstrapping.
\newblock \emph{arXiv preprint arXiv:2206.03633}, 2022.

\bibitem[Efron(1986)]{efron1986double}
Efron, B.
\newblock Double exponential families and their use in generalized linear regression.
\newblock \emph{Journal of the American Statistical Association}, 81\penalty0 (395):\penalty0 709--721, 1986.

\bibitem[Fahrmeir et~al.(2013)Fahrmeir, Kneib, Lang, Marx, Fahrmeir, Kneib, Lang, and Marx]{fahrmeir2013regression}
Fahrmeir, L., Kneib, T., Lang, S., Marx, B., Fahrmeir, L., Kneib, T., Lang, S., and Marx, B.
\newblock \emph{Regression models}, chapter~5.
\newblock Springer, 2013.

\bibitem[Fallah et~al.(2009)Fallah, Gu, Mohammad, Seyyedsalehi, Nourijelyani, and Eshraghian]{fallah2009nonlinear}
Fallah, N., Gu, H., Mohammad, K., Seyyedsalehi, S.~A., Nourijelyani, K., and Eshraghian, M.~R.
\newblock Nonlinear poisson regression using neural networks: A simulation study.
\newblock \emph{Neural Computing and Applications}, 18:\penalty0 939--943, 2009.

\bibitem[Fort et~al.(2019)Fort, Hu, and Lakshminarayanan]{fort2019deep}
Fort, S., Hu, H., and Lakshminarayanan, B.
\newblock Deep ensembles: A loss landscape perspective.
\newblock \emph{arXiv preprint arXiv:1912.02757}, 2019.

\bibitem[Fukushima(1969)]{fukushima1969visual}
Fukushima, K.
\newblock Visual feature extraction by a multilayered network of analog threshold elements.
\newblock \emph{IEEE Transactions on Systems Science and Cybernetics}, 5\penalty0 (4):\penalty0 322--333, 1969.

\bibitem[Gawlikowski et~al.(2023)Gawlikowski, Tassi, Ali, Lee, Humt, Feng, Kruspe, Triebel, Jung, Roscher, et~al.]{gawlikowski2023survey}
Gawlikowski, J., Tassi, C. R.~N., Ali, M., Lee, J., Humt, M., Feng, J., Kruspe, A., Triebel, R., Jung, P., Roscher, R., et~al.
\newblock A survey of uncertainty in deep neural networks.
\newblock \emph{Artificial Intelligence Review}, 56\penalty0 (Suppl 1):\penalty0 1513--1589, 2023.

\bibitem[Gneiting \& Raftery(2007)Gneiting and Raftery]{gneiting2007strictly}
Gneiting, T. and Raftery, A.~E.
\newblock Strictly proper scoring rules, prediction, and estimation.
\newblock \emph{Journal of the American statistical Association}, 102\penalty0 (477):\penalty0 359--378, 2007.

\bibitem[Gneiting et~al.(2007)Gneiting, Balabdaoui, and Raftery]{gneiting2007probabilistic}
Gneiting, T., Balabdaoui, F., and Raftery, A.~E.
\newblock Probabilistic forecasts, calibration and sharpness.
\newblock \emph{Journal of the Royal Statistical Society Series B: Statistical Methodology}, 69\penalty0 (2):\penalty0 243--268, 2007.

\bibitem[Graves(2011)]{graves2011practical}
Graves, A.
\newblock Practical variational inference for neural networks.
\newblock \emph{Advances in neural information processing systems}, 24, 2011.

\bibitem[Hendrycks \& Gimpel(2016)Hendrycks and Gimpel]{hendrycks2016baseline}
Hendrycks, D. and Gimpel, K.
\newblock A baseline for detecting misclassified and out-of-distribution examples in neural networks.
\newblock \emph{arXiv preprint arXiv:1610.02136}, 2016.

\bibitem[Hill(2011)]{hill2011conflations}
Hill, T.
\newblock Conflations of probability distributions.
\newblock \emph{Transactions of the American Mathematical Society}, 363\penalty0 (6):\penalty0 3351--3372, 2011.

\bibitem[Hoffman et~al.(2014)Hoffman, Gelman, et~al.]{hoffman2014no}
Hoffman, M.~D., Gelman, A., et~al.
\newblock The no-u-turn sampler: adaptively setting path lengths in hamiltonian monte carlo.
\newblock \emph{J. Mach. Learn. Res.}, 15\penalty0 (1):\penalty0 1593--1623, 2014.

\bibitem[Hsieh et~al.(2017)Hsieh, Lin, and Hsu]{hsieh2017drone}
Hsieh, M.-R., Lin, Y.-L., and Hsu, W.~H.
\newblock Drone-based object counting by spatially regularized regional proposal network.
\newblock In \emph{Proceedings of the IEEE international conference on computer vision}, pp.\  4145--4153, 2017.

\bibitem[Immer et~al.(2024)Immer, Palumbo, Marx, and Vogt]{immer2024effective}
Immer, A., Palumbo, E., Marx, A., and Vogt, J.
\newblock Effective bayesian heteroscedastic regression with deep neural networks.
\newblock \emph{Advances in Neural Information Processing Systems}, 36, 2024.

\bibitem[Jenkins et~al.(2022)Jenkins, Wei, Jenkins, and Li]{jenkins2022bayesian}
Jenkins, P., Wei, H., Jenkins, J.~S., and Li, Z.
\newblock Bayesian model-based offline reinforcement learning for product allocation.
\newblock In \emph{Proceedings of the AAAI Conference on Artificial Intelligence}, volume~36, pp.\  12531--12537, 2022.

\bibitem[Jenkins et~al.(2023)Jenkins, Armstrong, Nelson, Gotad, Jenkins, Wilkey, and Watts]{jenkins2023countnet3d}
Jenkins, P., Armstrong, K., Nelson, S., Gotad, S., Jenkins, J.~S., Wilkey, W., and Watts, T.
\newblock Countnet3d: A 3d computer vision approach to infer counts of occluded objects.
\newblock In \emph{Proceedings of the IEEE/CVF Winter Conference on Applications of Computer Vision}, pp.\  3008--3017, 2023.

\bibitem[Kang et~al.(2023)Kang, Setlur, Tomlin, and Levine]{kang2023deep}
Kang, K., Setlur, A., Tomlin, C., and Levine, S.
\newblock Deep neural networks tend to extrapolate predictably.
\newblock \emph{arXiv preprint arXiv:2310.00873}, 2023.

\bibitem[Kendall \& Gal(2017)Kendall and Gal]{kendall2017uncertainties}
Kendall, A. and Gal, Y.
\newblock What uncertainties do we need in bayesian deep learning for computer vision?
\newblock \emph{Advances in neural information processing systems}, 30, 2017.

\bibitem[Kiefer \& Wolfowitz(1952)Kiefer and Wolfowitz]{kiefer1952stochastic}
Kiefer, J. and Wolfowitz, J.
\newblock Stochastic estimation of the maximum of a regression function.
\newblock \emph{The Annals of Mathematical Statistics}, pp.\  462--466, 1952.

\bibitem[Kingma \& Ba(2014)Kingma and Ba]{kingma2014adam}
Kingma, D.~P. and Ba, J.
\newblock Adam: A method for stochastic optimization.
\newblock \emph{arXiv preprint arXiv:1412.6980}, 2014.

\bibitem[Koren et~al.(2009)Koren, Bell, and Volinsky]{koren2009matrix}
Koren, Y., Bell, R., and Volinsky, C.
\newblock Matrix factorization techniques for recommender systems.
\newblock \emph{Computer}, 42\penalty0 (8):\penalty0 30--37, 2009.

\bibitem[Kuleshov et~al.(2018)Kuleshov, Fenner, and Ermon]{kuleshov2018accurate}
Kuleshov, V., Fenner, N., and Ermon, S.
\newblock Accurate uncertainties for deep learning using calibrated regression.
\newblock In \emph{International conference on machine learning}, pp.\  2796--2804. PMLR, 2018.

\bibitem[Lakshminarayanan et~al.(2017)Lakshminarayanan, Pritzel, and Blundell]{lakshminarayanan2017simple}
Lakshminarayanan, B., Pritzel, A., and Blundell, C.
\newblock Simple and scalable predictive uncertainty estimation using deep ensembles.
\newblock \emph{Advances in neural information processing systems}, 30, 2017.

\bibitem[Li et~al.(2020)Li, Chang, Liu, and Li]{li2020vehicle}
Li, S., Chang, F., Liu, C., and Li, N.
\newblock Vehicle counting and traffic flow parameter estimation for dense traffic scenes.
\newblock \emph{IET Intelligent Transport Systems}, 14\penalty0 (12):\penalty0 1517--1523, 2020.

\bibitem[Lian et~al.(2019)Lian, Li, Zheng, Luo, and Gao]{lian2019density}
Lian, D., Li, J., Zheng, J., Luo, W., and Gao, S.
\newblock Density map regression guided detection network for rgb-d crowd counting and localization.
\newblock In \emph{Proceedings of the IEEE/CVF Conference on Computer Vision and Pattern Recognition}, pp.\  1821--1830, 2019.

\bibitem[Lin et~al.(2014)Lin, Maire, Belongie, Hays, Perona, Ramanan, Doll{\'a}r, and Zitnick]{lin2014microsoft}
Lin, T.-Y., Maire, M., Belongie, S., Hays, J., Perona, P., Ramanan, D., Doll{\'a}r, P., and Zitnick, C.~L.
\newblock Microsoft coco: Common objects in context.
\newblock In \emph{Computer Vision--ECCV 2014: 13th European Conference, Zurich, Switzerland, September 6-12, 2014, Proceedings, Part V 13}, pp.\  740--755. Springer, 2014.

\bibitem[Lin \& Chan(2023)Lin and Chan]{lin2023optimal}
Lin, W. and Chan, A.~B.
\newblock Optimal transport minimization: Crowd localization on density maps for semi-supervised counting.
\newblock In \emph{Proceedings of the IEEE/CVF Conference on Computer Vision and Pattern Recognition}, pp.\  21663--21673, 2023.

\bibitem[Liu et~al.(2021)Liu, Huynh, Sun, Reynolds, and Atkinson]{9130874}
Liu, C., Huynh, D.~Q., Sun, Y., Reynolds, M., and Atkinson, S.
\newblock A vision-based pipeline for vehicle counting, speed estimation, and classification.
\newblock \emph{IEEE Transactions on Intelligent Transportation Systems}, 22\penalty0 (12):\penalty0 7547--7560, 2021.
\newblock \doi{10.1109/TITS.2020.3004066}.

\bibitem[Liu et~al.(2020)Liu, Wang, Owens, and Li]{liu2020energy}
Liu, W., Wang, X., Owens, J., and Li, Y.
\newblock Energy-based out-of-distribution detection.
\newblock \emph{Advances in neural information processing systems}, 33:\penalty0 21464--21475, 2020.

\bibitem[Loshchilov \& Hutter(2016)Loshchilov and Hutter]{loshchilov2016sgdr}
Loshchilov, I. and Hutter, F.
\newblock Sgdr: Stochastic gradient descent with warm restarts.
\newblock \emph{arXiv preprint arXiv:1608.03983}, 2016.

\bibitem[Loshchilov \& Hutter(2017)Loshchilov and Hutter]{loshchilov2017decoupled}
Loshchilov, I. and Hutter, F.
\newblock Decoupled weight decay regularization.
\newblock \emph{arXiv preprint arXiv:1711.05101}, 2017.

\bibitem[Luo et~al.(2020)Luo, Yang, Li, Nie, Jiao, Zhou, and Cheng]{luo2020hybrid}
Luo, A., Yang, F., Li, X., Nie, D., Jiao, Z., Zhou, S., and Cheng, H.
\newblock Hybrid graph neural networks for crowd counting.
\newblock In \emph{Proceedings of the AAAI conference on artificial intelligence}, volume~34, pp.\  11693--11700, 2020.

\bibitem[Lv et~al.(2014)Lv, Duan, Kang, Li, and Wang]{lv2014traffic}
Lv, Y., Duan, Y., Kang, W., Li, Z., and Wang, F.-Y.
\newblock Traffic flow prediction with big data: A deep learning approach.
\newblock \emph{Ieee transactions on intelligent transportation systems}, 16\penalty0 (2):\penalty0 865--873, 2014.

\bibitem[Marron \& Wand(1992)Marron and Wand]{marron1992exact}
Marron, J.~S. and Wand, M.~P.
\newblock Exact mean integrated squared error.
\newblock \emph{The Annals of Statistics}, 20\penalty0 (2):\penalty0 712--736, 1992.

\bibitem[Marsden et~al.(2018)Marsden, McGuinness, Little, Keogh, and O'Connor]{marsden2018people}
Marsden, M., McGuinness, K., Little, S., Keogh, C.~E., and O'Connor, N.~E.
\newblock People, penguins and petri dishes: Adapting object counting models to new visual domains and object types without forgetting.
\newblock In \emph{Proceedings of the IEEE conference on computer vision and pattern recognition}, pp.\  8070--8079, 2018.

\bibitem[Marx et~al.(2024)Marx, Zalouk, and Ermon]{marx2024calibration}
Marx, C., Zalouk, S., and Ermon, S.
\newblock Calibration by distribution matching: trainable kernel calibration metrics.
\newblock \emph{Advances in Neural Information Processing Systems}, 36, 2024.

\bibitem[Marzban(2004)]{marzban2004roc}
Marzban, C.
\newblock The roc curve and the area under it as performance measures.
\newblock \emph{Weather and Forecasting}, 19\penalty0 (6):\penalty0 1106--1114, 2004.

\bibitem[Matheson \& Winkler(1976)Matheson and Winkler]{matheson1976scoring}
Matheson, J.~E. and Winkler, R.~L.
\newblock Scoring rules for continuous probability distributions.
\newblock \emph{Management science}, 22\penalty0 (10):\penalty0 1087--1096, 1976.

\bibitem[McCullagh(1989)]{mccullagh2019generalized}
McCullagh, P.
\newblock \emph{Generalized linear models}.
\newblock Routledge, 1989.

\bibitem[Microsoft(2016)]{lengthofstay}
Microsoft.
\newblock r-server-hospital-length-of-stay, 2016.
\newblock URL \url{https://github.com/Microsoft/r-server-hospital-length-of-stay}.

\bibitem[Mnih \& Salakhutdinov(2007)Mnih and Salakhutdinov]{mnih2007probabilistic}
Mnih, A. and Salakhutdinov, R.~R.
\newblock Probabilistic matrix factorization.
\newblock \emph{Advances in neural information processing systems}, 20, 2007.

\bibitem[Mucs{\'a}nyi et~al.(2024)Mucs{\'a}nyi, Kirchhof, and Oh]{mucsanyi2024benchmarking}
Mucs{\'a}nyi, B., Kirchhof, M., and Oh, S.~J.
\newblock Benchmarking uncertainty disentanglement: Specialized uncertainties for specialized tasks.
\newblock \emph{arXiv preprint arXiv:2402.19460}, 2024.

\bibitem[Murphy(2023)]{murphy2023probabilistic}
Murphy, K.~P.
\newblock \emph{Probabilistic machine learning: Advanced topics}, chapter~15.
\newblock MIT press, 2023.

\bibitem[Ni et~al.(2019)Ni, Li, and McAuley]{ni2019justifying}
Ni, J., Li, J., and McAuley, J.
\newblock Justifying recommendations using distantly-labeled reviews and fine-grained aspects.
\newblock In \emph{Proceedings of the 2019 conference on empirical methods in natural language processing and the 9th international joint conference on natural language processing (EMNLP-IJCNLP)}, pp.\  188--197, 2019.

\bibitem[Nix \& Weigend(1994)Nix and Weigend]{nix1994estimating}
Nix, D.~A. and Weigend, A.~S.
\newblock Estimating the mean and variance of the target probability distribution.
\newblock In \emph{Proceedings of 1994 ieee international conference on neural networks (ICNN'94)}, volume~1, pp.\  55--60. IEEE, 1994.

\bibitem[Osband et~al.(2023)Osband, Wen, Asghari, Dwaracherla, Ibrahimi, Lu, and Van~Roy]{osband2023epistemic}
Osband, I., Wen, Z., Asghari, S.~M., Dwaracherla, V., Ibrahimi, M., Lu, X., and Van~Roy, B.
\newblock Epistemic neural networks.
\newblock \emph{Advances in Neural Information Processing Systems}, 36:\penalty0 2795--2823, 2023.

\bibitem[Paszke et~al.(2017)Paszke, Gross, Chintala, Chanan, Yang, DeVito, Lin, Desmaison, Antiga, and Lerer]{paszke2017automatic}
Paszke, A., Gross, S., Chintala, S., Chanan, G., Yang, E., DeVito, Z., Lin, Z., Desmaison, A., Antiga, L., and Lerer, A.
\newblock Automatic differentiation in pytorch.
\newblock 2017.

\bibitem[Ren et~al.(2019)Ren, Liu, Fertig, Snoek, Poplin, Depristo, Dillon, and Lakshminarayanan]{ren2019likelihood}
Ren, J., Liu, P.~J., Fertig, E., Snoek, J., Poplin, R., Depristo, M., Dillon, J., and Lakshminarayanan, B.
\newblock Likelihood ratios for out-of-distribution detection.
\newblock \emph{Advances in neural information processing systems}, 32, 2019.

\bibitem[Sanh et~al.(2019)Sanh, Debut, Chaumond, and Wolf]{Sanh2019DistilBERTAD}
Sanh, V., Debut, L., Chaumond, J., and Wolf, T.
\newblock Distilbert, a distilled version of bert: smaller, faster, cheaper and lighter.
\newblock \emph{ArXiv}, abs/1910.01108, 2019.

\bibitem[Seitzer et~al.(2022)Seitzer, Tavakoli, Antic, and Martius]{seitzer2022pitfalls}
Seitzer, M., Tavakoli, A., Antic, D., and Martius, G.
\newblock On the pitfalls of heteroscedastic uncertainty estimation with probabilistic neural networks.
\newblock In \emph{International Conference on Learning Representations}, April 2022.
\newblock URL \url{https://openreview.net/forum?id=aPOpXlnV1T}.

\bibitem[Settles(2009)]{settles2009active}
Settles, B.
\newblock Active learning literature survey.
\newblock 2009.

\bibitem[Song et~al.(2019)Song, Diethe, Kull, and Flach]{song2019distribution}
Song, H., Diethe, T., Kull, M., and Flach, P.
\newblock Distribution calibration for regression.
\newblock In \emph{International Conference on Machine Learning}, pp.\  5897--5906. PMLR, 2019.

\bibitem[Stirn et~al.(2023)Stirn, Wessels, Schertzer, Pereira, Sanjana, and Knowles]{stirn2023faithful}
Stirn, A., Wessels, H., Schertzer, M., Pereira, L., Sanjana, N., and Knowles, D.
\newblock Faithful heteroscedastic regression with neural networks.
\newblock In \emph{International Conference on Artificial Intelligence and Statistics}, pp.\  5593--5613. PMLR, 2023.

\bibitem[Toledo et~al.(2022)Toledo, Umetsu, Camargo, and de~Lara]{toledo2022flexible}
Toledo, D., Umetsu, C.~A., Camargo, A. F.~M., and de~Lara, I. A.~R.
\newblock Flexible models for non-equidispersed count data: comparative performance of parametric models to deal with underdispersion.
\newblock \emph{AStA Advances in Statistical Analysis}, 106\penalty0 (3):\penalty0 473--497, 2022.

\bibitem[Wang \& Aitchison(2021)Wang and Aitchison]{wang2021bayesian}
Wang, X. and Aitchison, L.
\newblock Bayesian ood detection with aleatoric uncertainty and outlier exposure.
\newblock \emph{arXiv preprint arXiv:2102.12959}, 2021.

\bibitem[Wilson \& Izmailov(2020)Wilson and Izmailov]{wilson2020bayesian}
Wilson, A.~G. and Izmailov, P.
\newblock Bayesian deep learning and a probabilistic perspective of generalization.
\newblock \emph{Advances in neural information processing systems}, 33:\penalty0 4697--4708, 2020.

\bibitem[Wu et~al.(2020)Wu, Xu, Dai, Wan, Zhang, Yan, Tomizuka, Gonzalez, Keutzer, and Vajda]{wu2020visual}
Wu, B., Xu, C., Dai, X., Wan, A., Zhang, P., Yan, Z., Tomizuka, M., Gonzalez, J., Keutzer, K., and Vajda, P.
\newblock Visual transformers: Token-based image representation and processing for computer vision, 2020.

\bibitem[Xie(2022)]{xie2022neural}
Xie, S.-M.
\newblock A neural network extension for solving the pareto/negative binomial distribution model.
\newblock \emph{International Journal of Market Research}, 64\penalty0 (3):\penalty0 420--439, 2022.

\bibitem[You et~al.(2017)You, Li, Low, Lobell, and Ermon]{you2017deep}
You, J., Li, X., Low, M., Lobell, D., and Ermon, S.
\newblock Deep gaussian process for crop yield prediction based on remote sensing data.
\newblock In \emph{Proceedings of the AAAI conference on artificial intelligence}, volume~31, 2017.

\bibitem[Young \& Jenkins(2024)Young and Jenkins]{young2024measuring}
Young, S. and Jenkins, P.
\newblock On measuring calibration of discrete probabilistic neural networks, 2024.

\bibitem[Yu et~al.(2020)Yu, Thomas, Yu, Ermon, Zou, Levine, Finn, and Ma]{yu2020mopo}
Yu, T., Thomas, G., Yu, L., Ermon, S., Zou, J.~Y., Levine, S., Finn, C., and Ma, T.
\newblock Mopo: Model-based offline policy optimization.
\newblock \emph{Advances in Neural Information Processing Systems}, 33:\penalty0 14129--14142, 2020.

\bibitem[Zhang \& Chan(2020)Zhang and Chan]{zhang20203d}
Zhang, Q. and Chan, A.~B.
\newblock 3d crowd counting via multi-view fusion with 3d gaussian kernels.
\newblock In \emph{Proceedings of the AAAI conference on artificial intelligence}, volume~34, pp.\  12837--12844, 2020.

\bibitem[Zhang et~al.(2016)Zhang, Zhou, Chen, Gao, and Ma]{zhang2016single}
Zhang, Y., Zhou, D., Chen, S., Gao, S., and Ma, Y.
\newblock Single-image crowd counting via multi-column convolutional neural network.
\newblock In \emph{Proceedings of the IEEE conference on computer vision and pattern recognition}, pp.\  589--597, 2016.

\bibitem[Zhu(2012)]{zhu2012modeling}
Zhu, F.
\newblock Modeling time series of counts with com-poisson ingarch models.
\newblock \emph{Mathematical and Computer Modelling}, 56\penalty0 (9-10):\penalty0 191--203, 2012.

\bibitem[Ziatdinov(2024)]{ziatdinov2024active}
Ziatdinov, M.
\newblock Active learning with fully bayesian neural networks for discontinuous and nonstationary data.
\newblock \emph{arXiv preprint arXiv:2405.09817}, 2024.

\bibitem[Zou et~al.(2013)Zou, Geedipally, and Lord]{zou2013evaluating}
Zou, Y., Geedipally, S.~R., and Lord, D.
\newblock Evaluating the double poisson generalized linear model.
\newblock \emph{Accident Analysis \& Prevention}, 59:\penalty0 497--505, 2013.

\bibitem[Zou et~al.(2019)Zou, Shao, Qu, Wei, and Zhou]{zou2019enhanced}
Zou, Z., Shao, H., Qu, X., Wei, W., and Zhou, P.
\newblock Enhanced 3d convolutional networks for crowd counting.
\newblock \emph{arXiv preprint arXiv:1908.04121}, 2019.

\end{thebibliography}
\bibliographystyle{icml2025}

\clearpage
\newpage
\onecolumn
\appendix

\section{Deep Double Poisson Networks (DDPNs)}

    \subsection{Derivation of the DDPN Objective}\label{app:ddpn-obj}

        If we model our regression targets as $y_i \sim \text{DP}(\mu_i, \gamma_i)$, where $[\mu_i, \gamma_i]^T = \mathbf{f}_{\mathbf{\Theta}}(\mathbf{x}_i)$, then the Double Poisson NLL for an individual prediction is obtained as follows:
    
        \begin{align*}
            \argmax_{\boldsymbol{\Theta}} \left [ p(y_i | \mathbf{f}_{\boldsymbol{\Theta}}(\mathbf{x}_i)) \right ] &= \argmax_{\mu_i, \gamma_i} \left [ p(y_i | \mu_i, \gamma_i) \right ] \\
            &= \argmin_{\mu_i, \gamma_i} \left [ -\log p(y_i | \mu_i, \gamma_i) \right ] \\
            &= \argmin_{\mu_i, \gamma_i} \left [ -\log\left(\gamma_i^{\frac{1}{2}}\exp{(-\gamma_i \mu_i)} \left(\frac{\exp{(-y_i)}y_i^{y_i}}{y_i!}\right)\left(\frac{e\mu_i}{y_i}\right)^{\gamma_i y_i}\right) \right ]\\
            &= \argmin_{\mu_i, \gamma_i} \left [ -\log\left(\gamma_i^{\frac{1}{2}}\exp{(-\gamma_i \mu_i)}\left(\frac{e\mu_i}{y_i}\right)^{\gamma_i y_i}\right) \right ] \\
            &= \argmin_{\mu_i, \gamma_i} \left [ -\left(\frac{1}{2}\log\gamma_i - \gamma_i \mu_i + \gamma_i y_i (1 + \log \mu_i - \log y_i) \right) \right ] \\
        \end{align*}\vspace{-0.8cm}
        
        Thus, 
        \begin{equation}
            \mathscr{L}_i(y_i, \mu_i, \gamma_i) = -\frac{\log\gamma_i}{2} + \gamma_i \mu_i - \gamma_i y_i (1 + \log \mu_i - \log y_i)
        \end{equation}

        During training, we minimize this NLL on average across $N$ data points $(\mathbf{x}_i, y_i)_{i=1}^{N}$. N
        
        Note that our derivation assumes $c(\mu_i, \gamma_i) = 1$, which is in line with Fact 1 (Equation 2.10) of \citet{efron1986double}. See also \citet{chow2009flexible}, who employ a similar assumption in their work with Double Poisson regression.

    \subsection{Limitations}\label{app:limitations}
        DDPNs are general, easy to implement, and can be applied to a variety of datasets. However, some limitations do exist. One limitation that might arise is on count regression problems of very high frequency (i.e., on the order of thousands or millions). In this paper, we don't study the behavior of DDPN relative to existing benchmarks on high counts. In this scenario, it is possible that the choice of a Gaussian as the predictive distribution may offer a good approximation, even though the regression targets are discrete.
        
        Although we follow precedent \citep{efron1986double, zou2013evaluating, aragon2018maximum} in employing the general approximations $\mathbb{E}[Z] \approx \mu$ and $\text{Var}[Z] \approx \frac{\mu}{\gamma}$ for some $Z \sim \text{DP}(\mu, \gamma)$ in this work, these are not guaranteed to hold under all conditions. See Appendix \ref{app:moment-approx} for a detailed study of these approximations.
        
        One difficulty that can sometimes arise when training a DDPN (or any model trained with NLL) is poor convergence of the model weights. In preliminary experiments for this research, we had trouble obtaining consistently high-performing solutions with the SGD \citep{kiefer1952stochastic} and Adam \citep{kingma2014adam} optimizers, thus AdamW \citep{loshchilov2017decoupled} was used instead. Similar to \citet{seitzer2022pitfalls}, we found that the $\beta$-NLL exhibited greater stability across a wider variety of hyperparameter settings. Future researchers using the DDPN technique should be wary of this behavior.
        
        In this paper, we performed a single out-of-distribution (OOD) experiment on \texttt{Amazon Reviews}. This experiment provided encouraging evidence of the efficacy of DDPN for OOD detection. However, the conclusions drawn from this experiment may be somewhat limited in scope since the experiment was performed on a single dataset and task. Future work should seek to build off of these results to more fully explore the OOD properties of DDPN on other count regression tasks.

\subsection{Assessing the Quality of Efron's Moment Approximations}\label{app:moment-approx}

In line with prior work, we use Efron's approximations for the first two moments of the Double Poisson distribution in the proof of Proposition \ref{prop:ddpn-fully}. An obvious question that arises is \textit{how reliable are these approximate moment assumptions}. We introduce the concept of moment-deviation functions (MDFs) to assess this theoretically in Definition \ref{def:mdf}.

\begin{definition}\label{def:mdf}
    Let $Q$ be a family of univariate distributions parametrized by $\boldsymbol{\psi}\in\mathbb{R}^d$. Suppose we are given $\boldsymbol{\hat{\psi}}_n:\mathbb{R}^n\to\mathbb{R}^d$, which outputs a setting of parameters for $Q$, $\boldsymbol{\hat{\psi}}_n(\boldsymbol{\mu})$, such that the first $n$ moments of $Z\sim Q_{\boldsymbol{\hat{\psi}}_n(\boldsymbol{\mu})}$, $\left(\langle Z^1\rangle, ..., \langle Z^n\rangle\right)$, are nearly equal to target moments $\boldsymbol{\mu}=(\mu_1,...,\mu_n)$. Then for any pair $(Q, \boldsymbol{\hat{\psi}}_n)$, $\{ \varepsilon_i:\mathbb{R}^n\to\mathbb{R}\}_{i=1}^{n}$ are moment-deviation functions if, for any valid $\boldsymbol{\mu}$, if $Z\sim Q_{\boldsymbol{\hat{\psi}}_n(\boldsymbol{\mu})}$, we have $|\langle Z^i\rangle -\mu_i|\leq\varepsilon_i(\boldsymbol{\mu})$ for all $1\leq i\leq n$. 
\end{definition}

We focus on $n=2$, (approximation error for the mean / variance of a distribution). If we can pick $\boldsymbol{\hat{\psi}}$ so that $\varepsilon_1,\varepsilon_2$ are small, it follows that $Q$ is incredibly flexible, as there exists a setting of parameters that can roughly achieve any mean and variance. In the Gaussian case ($\mathcal{N},\mathbb{I}$), we have $\varepsilon_1 =\varepsilon_2=0$. The Double Poisson case (using Efron's approximations for $\boldsymbol{\hat{\psi}}$), is handled in Proposition \ref{prop:approx-quality}.

\begin{proposition}\label{prop:approx-quality}
    Let $DP$ denote the Double Poisson family. Set $\boldsymbol{\hat{\psi}}_2(\mu_0, \sigma_0^2) = (\mu_0,\frac{\mu_0}{\sigma_0^2})$. Letting $\gamma_0=\frac{\mu_0}{\sigma_0^2}$, the MDFs for $(DP,\boldsymbol{\hat{\psi}}_2)$ are:
    \begin{align*}
        \varepsilon_1(\mu_0,\sigma_0^2) &= \left|\frac{\sum_{y=0}^{\infty}s(\mu_0, \gamma_0, y)(y - \mu_0)}{\sum_{y=0}^{\infty}s(\mu_0,\gamma_0,y)}\right| \\
        \varepsilon_2(\mu_0, \sigma_0^2)&=\left|\frac{d(\mu_0,\gamma_0)\gamma_0^{\frac{1}{2}}\sum_{y=0}^{\infty}s(\mu_0,\gamma_0,y)-\gamma_0(\sum_{y=0}^{\infty}s(\mu_0, \gamma_0,y)(y-\mu_0))^2}{\gamma_0(\sum_{y=0}^{\infty}s(\mu_0,\gamma_0,y))^2}\right|
    \end{align*}
    where we have
    \[
        h(z)=\frac{e^{-z}z^z}{z!}
    \]
    \[
        r(\mu,\gamma,z)=\gamma(z-\mu +z\log\mu -z\log z)
    \]
    \[
        s(\mu,\gamma,z)=h(z)\exp(r(\mu,\gamma,z))
    \]
    \[
        d(\mu,\gamma)=\gamma^{-1/2}\left[\sum_{y=0}^{\infty}s(\mu, \gamma, y)(\gamma(y-\mu)^2-y)+\sum_{y=0}^{\infty}s(\mu,\gamma,y)(y-\mu)\right].
    \]
\end{proposition}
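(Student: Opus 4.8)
The plan is to compute the exact first two moments of $Z \sim DP(\mu_0, \gamma_0)$ directly from the \emph{properly normalized} distribution function and then subtract the target mean $\mu_0$ and target variance $\sigma_0^2 = \mu_0/\gamma_0$ to read off $\varepsilon_1$ and $\varepsilon_2$. The entire argument is an exact moment calculation: Efron's approximations enter only through the choice $\boldsymbol{\hat{\psi}}_2(\mu_0, \sigma_0^2) = (\mu_0, \mu_0/\sigma_0^2)$, and the quantity being bounded is precisely the error that this choice incurs. Note that with this parametrization the second target ``moment'' tracked is the variance $\sigma_0^2$, consistent with Efron's mean--variance convention.

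First I would rewrite the Double Poisson PMF of Equation \eqref{eq:loss}'s parent (Equation 1) in the compact form $p(y \mid \mu, \gamma) = s(\mu, \gamma, y) / \sum_{y'=0}^{\infty} s(\mu, \gamma, y')$. The key observation is that $\left(\frac{e\mu}{y}\right)^{\gamma y} = \exp(\gamma y + \gamma y \log\mu - \gamma y \log y)$, so that $\frac{e^{-y}y^y}{y!}\left(\frac{e\mu}{y}\right)^{\gamma y} = h(y)\exp\!\left(r(\mu,\gamma,y) + \gamma\mu\right)$. Substituting back, the prefactor $\gamma^{1/2} e^{-\gamma\mu}$ in the PMF is independent of $y$ and therefore cancels between numerator and the true normalizing constant, leaving exactly $s/\sum s$. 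Crucially, I would \emph{not} invoke Efron's $c(\mu,\gamma)$ here; the moments must be taken against the exact normalizer $\sum_y s(\mu,\gamma,y)$ so that $p$ is a genuine probability mass function.

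Next I would compute $\langle Z \rangle = \left(\sum_y y\, s(\mu_0, \gamma_0, y)\right)\big/\left(\sum_y s(\mu_0, \gamma_0, y)\right)$ and subtract $\mu_0$; combining over a common denominator gives $\varepsilon_1 = \left|\sum_y s(\mu_0,\gamma_0,y)(y-\mu_0)\right| / \left|\sum_y s(\mu_0,\gamma_0,y)\right|$ immediately. For the variance I would use $\mathrm{Var}[Z] = \langle Z^2\rangle - \langle Z\rangle^2$ and set $\varepsilon_2 = |\mathrm{Var}[Z] - \mu_0/\gamma_0|$. Writing the shorthand sums $S = \sum_y s(\mu_0,\gamma_0,y)$, $A = \sum_y s(\mu_0,\gamma_0,y)(y-\mu_0)$, and $B = \sum_y s(\mu_0,\gamma_0,y)(y-\mu_0)^2$, a short expansion verifies $\mathrm{Var}[Z] = (BS - A^2)/S^2$ (this matches $\langle Z^2\rangle - \langle Z\rangle^2$ after the $\mu_0$-shift cancels), so $\varepsilon_2 = |(BS-A^2)/S^2 - \mu_0/\gamma_0|$.

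The main work, and the only place where the stated formula looks opaque, is verifying that the auxiliary function $d$ exactly repackages $\gamma_0 B - \mu_0 S$. Expanding $d(\mu_0,\gamma_0)$ and using $\sum_y s\cdot y = A + \mu_0 S$, the term $\sum_y s\,(\gamma_0(y-\mu_0)^2 - y)$ reduces to $\gamma_0 B - A - \mu_0 S$, and the extra $+\sum_y s\,(y-\mu_0) = +A$ cancels the stray $-A$, leaving $d(\mu_0,\gamma_0) = \gamma_0^{-1/2}(\gamma_0 B - \mu_0 S)$. Multiplying by $\gamma_0^{1/2}$ and substituting into the claimed expression yields $\left[(\gamma_0 B - \mu_0 S)S - \gamma_0 A^2\right]/(\gamma_0 S^2) = (BS-A^2)/S^2 - \mu_0/\gamma_0$, which is exactly the $\varepsilon_2$ computed above. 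I expect this bookkeeping identity for $d$ to be the principal obstacle, as it is purely an exercise in keeping the first- and second-moment sums straight; there is no analytic subtlety beyond noting that the relevant series converge, which holds because the Double Poisson possesses finite moments of all orders.
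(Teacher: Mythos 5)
Your proof is correct, and every step checks out: the reduction of the PMF to $s(\mu,\gamma,y)/\sum_{y'}s(\mu,\gamma,y')$ after cancelling the $\gamma^{1/2}e^{-\gamma\mu}$ prefactor, the identity $d(\mu_0,\gamma_0)=\gamma_0^{-1/2}(\gamma_0 B-\mu_0 S)$, and the final algebra recovering $(BS-A^2)/S^2-\mu_0/\gamma_0$ all hold. However, your route is genuinely different from the paper's. The paper recasts the Double Poisson PMF into exponential-family canonical form with natural parameters $\eta_1=\gamma\log\mu$, $\eta_2=\gamma$, identifies the cumulant function $A(\boldsymbol{\eta})$, and obtains the mean and variance as $\partial A/\partial\eta_1$ and $\partial^2 A/\partial\eta_1^2$; the deviation terms then emerge from the first and second $\eta_1$-derivatives of the normalizing constant $\mathscr{C}(\eta_1,\eta_2)$ computed via the chain rule, with $d(\mu,\gamma)$ appearing naturally as $\partial^2\mathscr{C}/\partial\eta_1^2$. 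You instead compute $\langle Z\rangle$ and $\langle Z^2\rangle-\langle Z\rangle^2$ directly as ratios of sums against the exact normalizer and verify by bookkeeping that $d$ repackages $\gamma_0 B-\mu_0 S$. Your approach is more elementary and avoids the chain-rule calculus entirely, at the cost of making $d$ look like an unmotivated auxiliary quantity that must be reverse-engineered; the paper's approach is heavier on computation but explains \emph{where} $d$ comes from (it is literally a second derivative of the normalizer) and exposes the sufficient-statistic structure of the family. Both yield identical expressions for $\varepsilon_1$ and $\varepsilon_2$.
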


\begin{proof}

    All we have to show is that the mean and variance of $Z \sim Q_{\boldsymbol{\hat{\psi}}_2(\mu_0, \sigma_0^2)}$ are within the provided $\varepsilon_1$ and $\varepsilon_2$.

    We first re-cast the Double Poisson PMF into canonical form, with natural parameters $\eta_1 = \gamma \log \mu$ and $\eta_2 = \gamma$:

    \begin{align*}
    	p(z | \mu, \gamma) &= \frac{\gamma^{\frac{1}{2}} e^{-\gamma \mu}}{c(\mu, \gamma)}\left( \frac{e^{-z}z^z}{z!} \right) \left( \frac{e \mu}{z}\right)^{\gamma z} \\
    	&= \frac{\gamma^{\frac{1}{2}} e^{-\gamma \mu}}{c(\mu, \gamma)}h(z) \exp{\{\gamma z + \gamma z\log\mu - \gamma z\log z)}\} \\
    	&= \frac{\eta_2^{\frac{1}{2}}\exp{\{-\eta_2\exp{\{\frac{\eta_1}{\eta_2}\}}\}}}{c(\exp{\{\frac{\eta_1}{\eta_2}\}}, \eta_2)} h(z) \exp{\{\eta_1 z + \eta_2z - \eta_2 z \log z\}} \\
    	&= h(z)\exp{\{\boldsymbol{\eta}^T \boldsymbol{T}(z) - A(\boldsymbol{\eta}) \}}
    \end{align*}

    where $h(z)$ matches the form specified in the proposition, $\boldsymbol{T}(z) = (z, z(1 - \log z))$ is the sufficient statistic, and $A(\boldsymbol{\eta}) = \eta_2 \exp{\{\frac{\eta_1}{\eta_2}\}} - \frac{\log \eta_2}{2} + \log{c(\exp{\{\frac{\eta_1}{\eta_2}\}}, \eta_2)}$ is the cumulant function.

    By properties of the canonical form, we then have $\mathbb{E}[Z] = \frac{\partial A}{\partial \eta_1}$, $\text{Var}[Z] = \frac{\partial^2 A}{\partial \eta_1^2}$. Letting $\mathscr{C}(\eta_1, \eta_2) = c(\exp{\{\frac{\eta_1}{\eta_2}\}}, \eta_2)$, we obtain 
    
    \begin{align*}
    	\mathbb{E}[Z] &= \exp{\{\frac{\eta_1}{\eta_2}\}} + \frac{\frac{\partial}{\partial \eta_1} \mathscr{C}(\eta_1, \eta_2)}{\mathscr{C}(\eta_1, \eta_2)} \\
    	\text{Var}[Z] &= \frac{\partial}{\partial \eta_1} \left( \exp{\{\frac{\eta_1}{\eta_2}\}} + \frac{\frac{\partial}{\partial \eta_1} \mathscr{C}(\eta_1, \eta_2)}{\mathscr{C}(\eta_1, \eta_2)} \right) \\
        &= \frac{\exp{\{\frac{\eta_1}{\eta_2}\}}}{\eta_2} + \frac{\mathscr{C}(\eta_1, \eta_2)\frac{\partial^2}{\partial \eta_1^2} \mathscr{C}(\eta_1, \eta_2) - \left( \frac{\partial}{\partial \eta_1} \mathscr{C}(\eta_1, \eta_2)\right)^2}{\mathscr{C}(\eta_1, \eta_2)^2}
    \end{align*}

    We first solve for $\mathbb{E}[Z]$. Recall that $c(\mu, \gamma) = \sum_{y=0}^{\infty} \gamma^{\frac{1}{2}} e^{-\gamma \mu} h(y) \exp{\{ \gamma y + \gamma y \log \mu - \gamma y \log y \}}$, or equivalently $c(\mu, \gamma) = \sum_{y=0}^{\infty} \gamma^{\frac{1}{2}} h(y) \exp{\{ r(\mu, \gamma, y) \}}$. Then
    
    \begin{align*}
    	\frac{\partial}{\partial \eta_1} \mathscr{C}(\eta_1, \eta_2) &= \frac{\partial c}{\partial \mu} \frac{\partial \mu}{\partial \eta_1} = \frac{\exp{\{\frac{\eta_1}{\eta_2}\}}}{\eta_2}\frac{\partial c}{\partial \mu} \\
    	&= \frac{\mu}{\gamma} \sum_{y=0}^{\infty} \gamma^{\frac{1}{2}} h(y) \exp{ \{ r(\mu, \gamma, y) \} \frac{\partial r}{\partial \mu}} \\
    	&= \sum_{y=0}^{\infty} \gamma^{\frac{1}{2}}h(y) \exp{\{ r(\mu, \gamma, y) \}}(y - \mu)
    \end{align*}
    
    so, recalling that $\exp{\{\frac{\eta_1}{\eta_2}\}} = \mu$, we have
    
    \begin{align*}
    	\mathbb{E}[Z] &= \mu + \left( \frac{\sum_{y=0}^{\infty} h(y) \exp{\{ r(\mu, \gamma, y) \}}(y - \mu)}{\sum_{y=0}^{\infty} h(y) \exp{\{ r(\mu, \gamma, y) \}}} \right) \\
    	&= \mu + \left (\frac{\sum_{y=0}^{\infty} s(\mu, \gamma, y)(y - \mu)}{\sum_{y=0}^{\infty} s(\mu, \gamma, y) } \right)
    \end{align*}

    Thus, if $\boldsymbol{\hat{\psi}_2} = (\mu_0, \frac{\mu_0}{\sigma_0^2} )$, we have

    \begin{align*}
        \mathbb{E}[Z] - \mu_0 = \left(\frac{\sum_{y=0}^{\infty} s\left(\mu_0, \frac{\mu_0}{\sigma_0^2}, y\right)(y - \mu_0)}{\sum_{y=0}^{\infty} s\left(\mu_0, \frac{\mu_0}{\sigma_0^2}, y\right) } \right)
    \end{align*}
    
    This implies that 

    \begin{align*}
        \left| \mathbb{E}[Z] - \mu_0 \right| &= \left| \frac{\sum_{y=0}^{\infty} s\left(\mu_0, \frac{\mu_0}{\sigma_0^2}, y\right)(y - \mu_0)}{\sum_{y=0}^{\infty} s\left(\mu_0, \frac{\mu_0}{\sigma_0^2}, y\right) } \right| \\
        &= \varepsilon_1(\mu_0, \sigma_0^2)
    \end{align*}

    which trivially yields $|\mathbb{E}[Z] - \mu_0| \leq \varepsilon_1(\mu_0, \sigma_0^2)$, as desired.

    To evaluate $\text{Var}[Z]$, we must first compute the second derivative of $\mathscr{C}$:    

    \begin{align*}
        \frac{\partial^2}{\partial \eta_1^2} \mathscr{C} &= \frac{\partial^2 c}{\partial \mu^2}\left( \frac{\partial \mu}{\partial \eta_1} \right)^2 + \frac{\partial^2 \mu}{\partial \eta_1^2} \frac{\partial c}{\partial \mu} \\
        &= \frac{\mu^2}{\gamma^2} \frac{\partial}{\partial \mu} \left( \gamma^{\frac{3}{2}} \sum_{y=0}^{\infty} s(\mu, \gamma, y)\left( \frac{y - \mu}{\mu} \right) \right) + \frac{\mu}{\gamma^2} \left( \gamma^{\frac{3}{2}} \sum_{y=0}^{\infty} s(\mu, \gamma, y)\left( \frac{y - \mu}{\mu} \right) \right) \\
        &= \left( \frac{\mu^2}{\gamma^2} \right)\left( \gamma^{\frac{3}{2}} \right) \sum_{y=0}^{\infty} \left[ s(\mu, \gamma, y)\gamma \left( \frac{y-\mu}{\mu} \right)^2 + s(\mu, \gamma, y)\left( \frac{-y}{\mu^2}\right) \right] + \left( \frac{\mu}{\gamma^2}\right) \left( \gamma^{\frac{3}{2}} \right) \sum_{y=0}^{\infty} s(\mu, \gamma, y) \left( \frac{y-\mu}{\mu} \right) \\
        &= \gamma^{-\frac{1}{2}} \left[ \sum_{y=0}^{\infty} s(\mu, \gamma, y)\left( \gamma(y-\mu)^2 - y \right) + \sum_{y=0}^{\infty} s(\mu, \gamma, y)(y-\mu) \right] \\
        &= d(\mu, \gamma)
    \end{align*}

    We now have an expression for the variance:

    \begin{align*}
        \text{Var}[Z] &= \frac{\exp{\{\frac{\eta_1}{\eta_2}\}}}{\eta_2} + \frac{\mathscr{C}(\eta_1, \eta_2)\frac{\partial^2}{\partial \eta_1^2} \mathscr{C}(\eta_1, \eta_2) - \left( \frac{\partial}{\partial \eta_1} \mathscr{C}(\eta_1, \eta_2)\right)^2}{\mathscr{C}(\eta_1, \eta_2)^2} \\
        &= \frac{\mu}{\gamma} + \frac{d(\mu, \gamma) \gamma^{\frac{1}{2}}\sum_{y=0}^{\infty} s(\mu, \gamma, y) - \gamma \left( \sum_{y=0}^{\infty} s(\mu, \gamma, y)(y-\mu) \right)^2}{\gamma \left(\sum_{y=0}^{\infty}s(\mu, \gamma, y) \right)^2}
    \end{align*}

    So

    \begin{align*}
        \text{Var}[Z] - \sigma_0^2 &= \left( \frac{ d\left(\mu_0, \frac{\mu_0}{\sigma_0^2}\right) \left(\frac{\mu_0}{\sigma_0^2}\right)^{\frac{1}{2}}\sum_{y=0}^{\infty} s\left(\mu_0, \frac{\mu_0}{\sigma_0^2}, y\right) - \left(\frac{\mu_0}{\sigma_0^2}\right) \left(\sum_{y=0}^{\infty} s\left(\mu_0, \frac{\mu_0}{\sigma_0^2}, y\right)(y-\mu_0) \right)^2}{\left(\frac{\mu_0}{\sigma_0^2}\right) \left(\sum_{y=0}^{\infty}s\left(\mu_0, \frac{\mu_0}{\sigma_0^2}, y\right) \right)^2} \right)
    \end{align*}

    which implies 

    \begin{align*}
        \left| \text{Var}[Z] - \sigma_0^2 \right| &= \left| \frac{ d\left(\mu_0, \frac{\mu_0}{\sigma_0^2}\right) \left(\frac{\mu_0}{\sigma_0^2}\right)^{\frac{1}{2}}\sum_{y=0}^{\infty} s\left(\mu_0, \frac{\mu_0}{\sigma_0^2}, y\right) - \left(\frac{\mu_0}{\sigma_0^2}\right) \left(\sum_{y=0}^{\infty} s\left(\mu_0, \frac{\mu_0}{\sigma_0^2}, y\right)(y-\mu_0) \right)^2}{\left(\frac{\mu_0}{\sigma_0^2}\right) \left(\sum_{y=0}^{\infty}s\left(\mu_0, \frac{\mu_0}{\sigma_0^2}, y\right) \right)^2} \right| \\
        &= \varepsilon_2(\mu_0, \sigma_0^2)
    \end{align*}

    yielding $|\text{Var}[Z] - \sigma_0^2| \leq \varepsilon_2(\mu_0, \sigma_0^2)$, as desired.

\end{proof}

\begin{remark}
    The MDFs in the proposition yield near-zero values for most target means and variances, implying that in most settings, the Double Poisson can be considered a distribution with unrestricted variance.
\end{remark}

We now study the bounds obtained in Proposition \ref{prop:approx-quality} empirically. In Figure \ref{fig:efron-epsilon} we plot the error, $\epsilon$, incurred via Efron’s estimates on a grid of target means and variances, using 100th partial sums. Both $\epsilon_1$ and $\epsilon_2$ are effectively 0 for nearly all values of $\mu_0$ and $\sigma_0^2$. The only exception is for very small values of the target mean, $\mu_0 \rightarrow 0$; when the target variance is subsequently large, we see an increase in both $\epsilon_1$ and $\epsilon_2$. Under these rare conditions, the approximations begin to deteriorate.

\begin{figure}[t]
    \centering
    \begin{subfigure}[b]{0.48\textwidth}
        \centering
        \includegraphics[width=\textwidth]{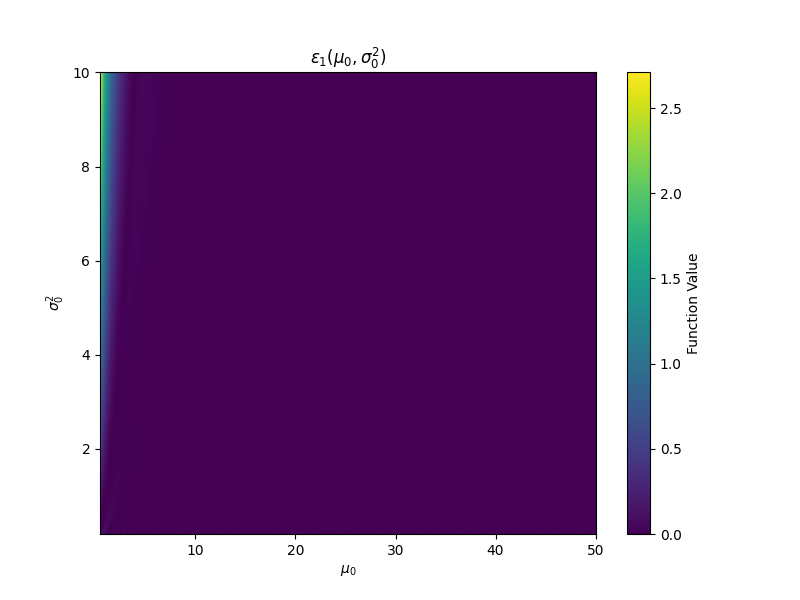}
        \caption{Grid of values for $\epsilon_1(\mu_0, \sigma_0^2)$, the moment-deviation function for the Double Poisson mean.}
        \label{fig:sub1}
    \end{subfigure}
    \hfill
    \begin{subfigure}[b]{0.48\textwidth}
        \centering
        \includegraphics[width=\textwidth]{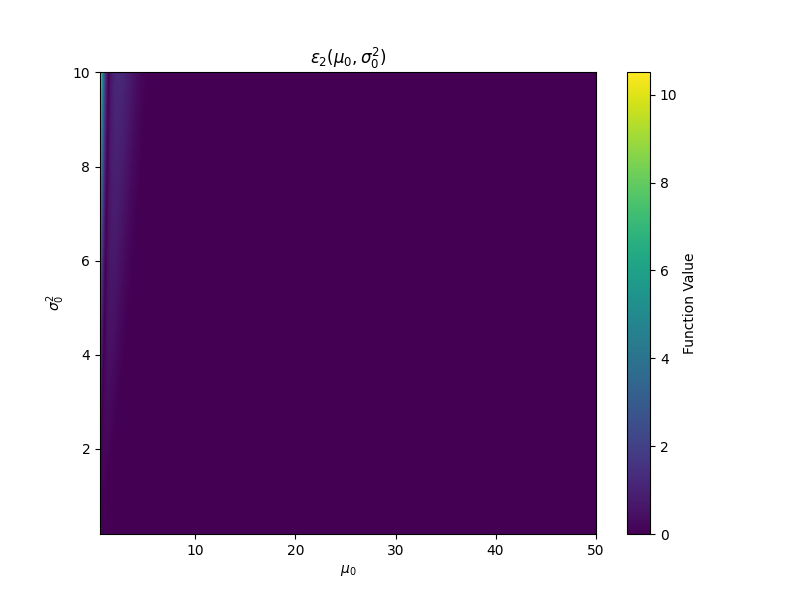}
        \caption{Grid of values for $\epsilon_2(\mu_0, \sigma_0^2)$, the moment-deviation function for the Double Poisson variance.}
        \label{fig:sub2}
    \end{subfigure}
    \caption{We plot the error, $\epsilon$, incurred via Efron’s approximations of the Double Poisson mean and variance, on a grid of target values for these moments. Values were obtained using 100th partial sums.}
    \label{fig:efron-epsilon}
\end{figure}

\subsection{The effect of $\gamma$ initialization}

Another potential method for avoiding the loss attenuation trap (see Section \ref{sec:beta}) is to initialize $\gamma$ to some high value (via the initial bias in the output head). This essentially forces the network to begin training with low uncertainty values. We investigate the effectiveness of this method by initializing $\gamma$ to various high values at the start of training. We train on the same dataset as in Figure \ref{fig:beta-study}. Results with differing $\gamma$ initializations are presented in Figure \ref{fig:phi-exp-1}. Rather than improve mean fit, we observe that this strategy actually hurts overall convergence to the true function. The best performance comes, in fact, when $\text{log}(\gamma)$ is initialized close to zero. Note that despite higher initialization of $\gamma$, the point to the far right of the data is still “explained” via low $\gamma$ (high uncertainty). Interestingly, the $\beta$ modification can help us avoid the impact of such poor initialization. In Figure \ref{fig:phi-exp-2}, we mirror the experimental setup of Figure \ref{fig:phi-exp-1} but set $\beta = 1$ and train with $\beta$-NLL. In all but the most extreme case ($\log(\gamma_0) = 5$), $\beta$ allows us to recover the true function.

        \begin{figure}
            \centering
            \includegraphics[width=0.8\linewidth]{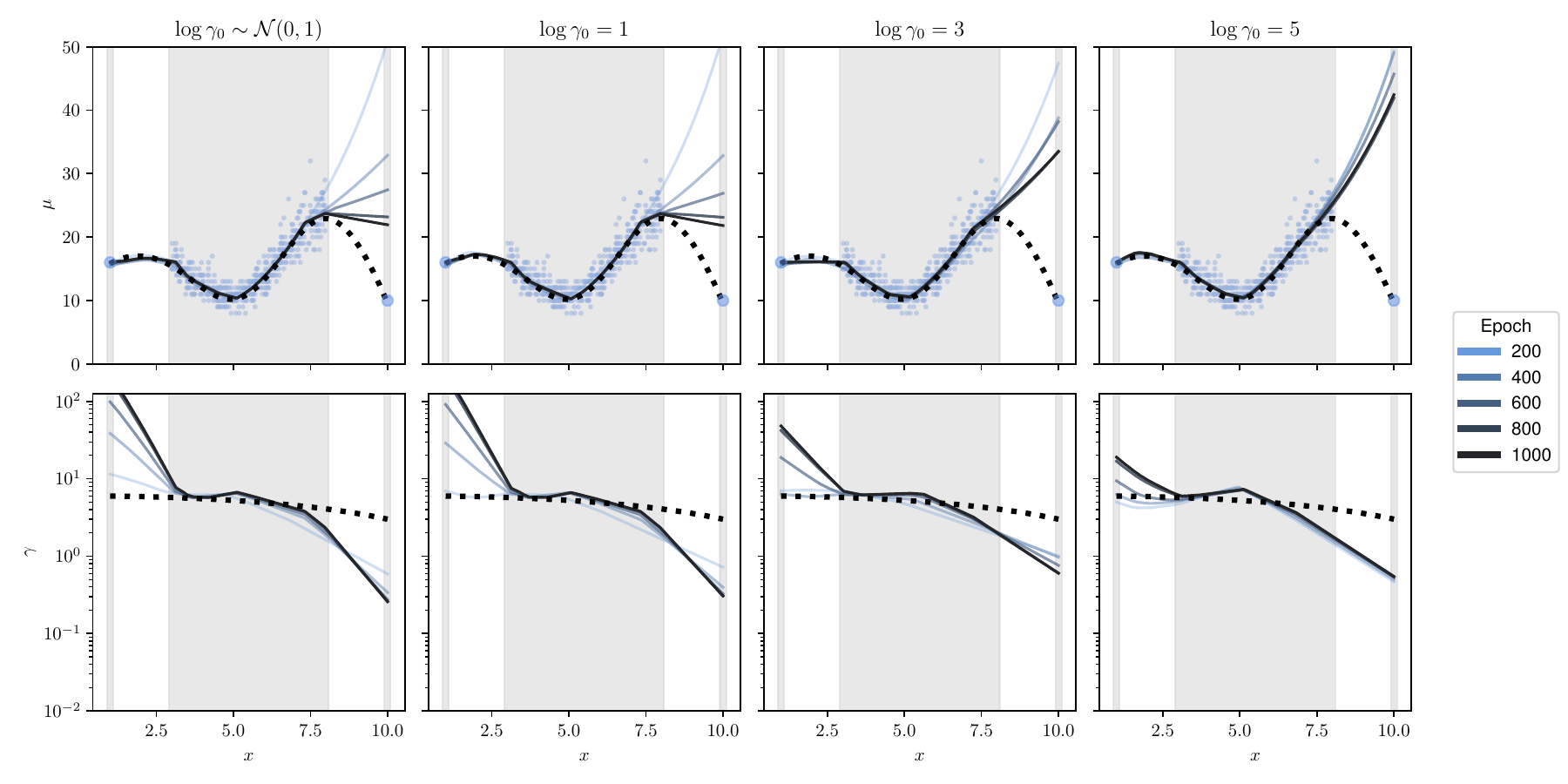}
            \caption{Results of training on a dataset with isolated points when we initialize $\gamma$ for a standard DDPN model to a high value (via the initial bias in the output head). Despite forcing low uncertainty at the beginning of training, the model still ignores difficult points and converges to a suboptimal solution. In fact, the smaller we force the uncertainty to be initially (through high $\gamma$ values), the worse our eventual fit is.}
            \label{fig:phi-exp-1}
        \end{figure}

        \begin{figure}
            \centering
            \includegraphics[width=0.8\linewidth]{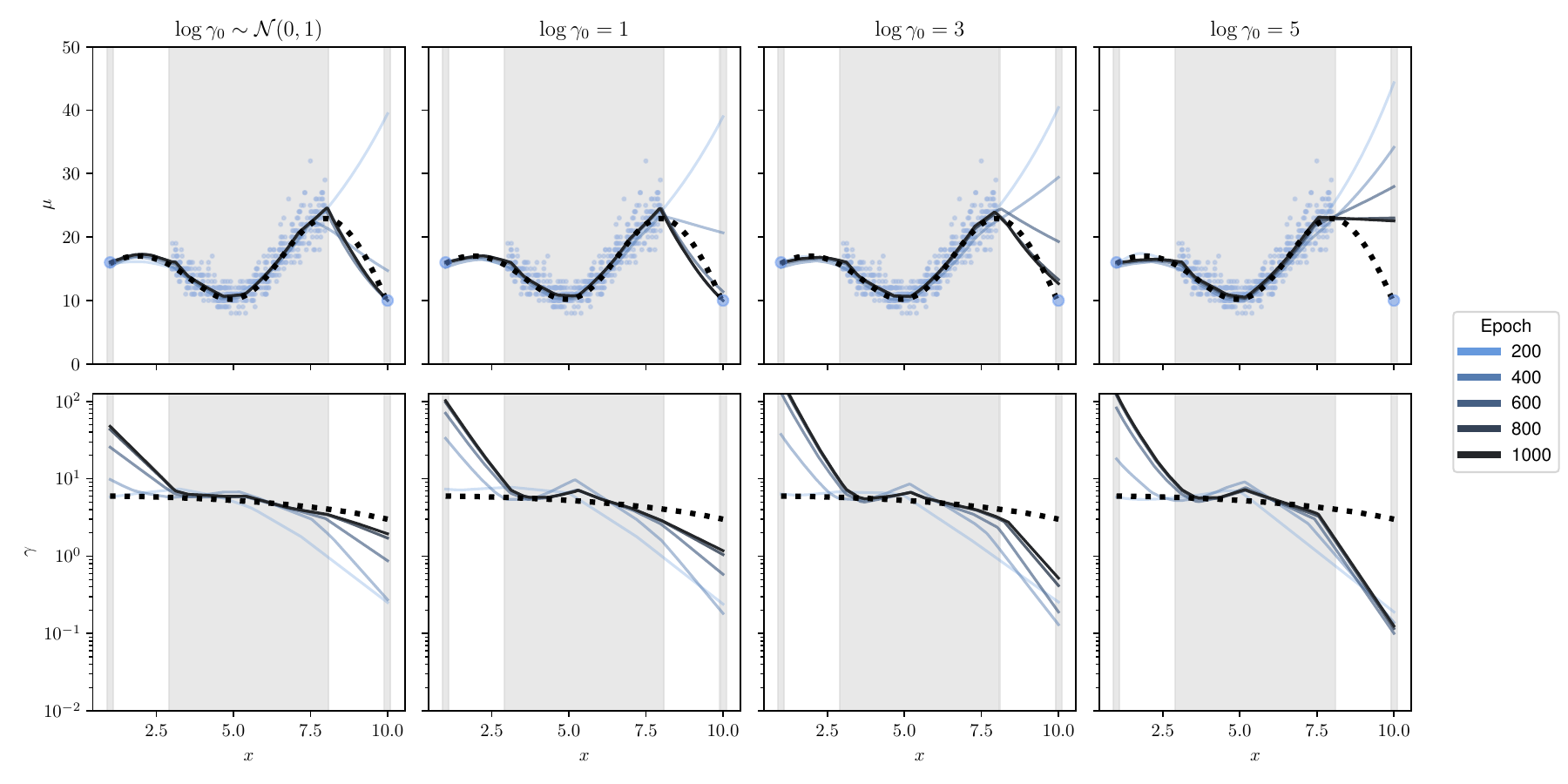}
            \caption{Rerun of the experiment depicted in Figure \ref{fig:phi-exp-1} where we train with $\beta_{1.0}$-NLL instead of vanilla NLL. Interestingly, we find that $\beta$ allows the model to recover from poor weight initializations.}
            \label{fig:phi-exp-2}
        \end{figure}

\section{Additional Experimental Details}

    \subsection{Description of Datasets}

    In this section, we provide a brief description of each dataset used in our experiments and define the associated network architectures. For further details, we refer the reader to Appendix \ref{app:training-details} and our source code.
    
    \paragraph{Length of Stay} Regression models on \texttt{Length of Stay} \cite{lengthofstay} are tasked with forecasting the number of days between initial admission and discharge for a given hospital patient, conditioned on a collection of health metrics and indicators about the specific facility providing the treatment. This is inherently a count prediction problem, since resource allocation and billing typically discretize time spent in a hospital. We train small MLPs to output the parameters of a predictive distribution over the number of days a patient is expected to stay. 
    
    \paragraph{COCO-People} We introduce an image regression task on the \texttt{person} class of MS-COCO \citep{lin2014microsoft}, which we call \texttt{COCO-People}. In this dataset, the task is to predict the number of people in each image. Each model we evaluate on \texttt{COCO-People} is trained with a small MLP on top of the pooled output from a ViT-B-16 backbone \citep{wu2020visual}).
    
    \paragraph{Inventory} This dataset comprises an inventory counting task \citep{jenkins2023countnet3d}, where the goal is to predict the number of objects on a retail shelf from an input point cloud (see Figure \ref{fig:ex-point-cloud} in Appendix \ref{app:point-cloud} for an example). For this task, we adapt CountNet3D \citep{jenkins2023countnet3d} for probabilistic regression.
    \vspace{-0.4cm}
    
    \paragraph{Amazon Reviews} We model user ratings from the ``Patio, Lawn, and Garden'' split of a collection of Amazon reviews \citep{ni2019justifying}. The objective in this task is to predict the discrete review value (1-5 stars) from an input text sequence, which historically has been addressed with Gaussian NLL \citep{mnih2007probabilistic, koren2009matrix}. All text regressors consist of a small MLP on top of a DistilBert backbone \citep{Sanh2019DistilBERTAD}.

    \subsection{Evaluation Metrics}\label{app:eval-metrics}

        In this section, we provide a detailed definition of each metric we employ for evaluation in our experiments. We also provide context for our choices around which metrics to use (when relevant).

        \subsubsection{Mean Absolute Error}

            The Mean Absolute Error (MAE) quantifies how closely a model's point predictions align with the ground truth across a dataset. In the probabilistic setting, we use the mode of the predictive distribution as our point prediction. Given a dataset $\{\mathbf{x}_i, y_i\}_{i=1}^{n}$ and model $\mathbf{f}_{\mathbf{\Theta}}$, the MAE is then computed as:

            \[
                \text{MAE}(\mathbf{f}_{\mathbf{\Theta}}, \{\mathbf{x}_i, y_i\}_{i=1}^{n}) = \frac{1}{n} \sum_{i=1}^{n} \left| y_i - \argmax\limits_{z \in \mathrm{support}\{y_i\}} p(y_i = z \mid \mathbf{f}_{\mathbf{\Theta}}(\mathbf{x}_i)) \right|
            \]

        \subsubsection{Continuous Ranked Probability Score}\label{app:crps}

            The Continuous Ranked Probability Score (CRPS) is a strictly proper scoring rule that quantifies how aligned predictive distributions (sometimes referred to as ``forecasts'') are with ground-truth observations \cite{matheson1976scoring, gneiting2007strictly}. For a specific predictive cumulative density function (CDF), $\hat{F}_i$, and a ground-truth regression target $y_i$ with CDF $F_i$, it is defined as follows:
    
            \begin{align*}
                \text{CRPS}(\hat{F}_i, y_i) &= \int_{-\infty}^{\infty} [\hat{F}_i(z) - F_i(z)]^2 \, dz \\
                &= \int_{-\infty}^{\infty} [\hat{F}_i(z) - \mathds{1}_{\{z \geq y_i\}}]^2 \, dz \\
                &= \int_{-\infty}^{y_i} \hat{F}_i(z)^2 \, dz + \int_{y_i}^{\infty} [\hat{F}_i(z) - 1]^2 \, dz
            \end{align*}
            

            Note that the second equality arises because $y_i$, once observed, is deterministic (its probability density is a point mass at the observed value). When the predictive distribution is discrete, this integral further reduces to the summation:

            \[
                \sum_{z=0}^{y_i - 1} \hat{F_i}(z)^2dz + \sum_{z=y_i}^{\infty}[\hat{F_i}(z) - 1]^2
            \]

            CRPS can also be computed from deterministic forecasts (i.e. those obtained from models that output only a point prediction). In this case, it is a known fact that CRPS reduces to the mean absolute error \cite{gneiting2007strictly}, which aids interpretability. In our experiments, when we report CRPS, we take the average value across all test points (since CRPS is defined point-wise).

        \subsubsection{Median Precision}\label{app:med-prec}

            \citet{gneiting2007probabilistic} observe that a good condition for a well-fit probabilistic model is ``sharpness subject to calibration''. In other words, our model must produce forecasts that exhibit statistical consistency with observations while also concentrating as much mass as possible around the true value of a regression target. Although sharpness alone is not a useful metric for quantifying probabilistic fit, measuring this value can provide helpful insights into the typical shape of a model's predictions. To quantify sharpness for a model/dataset pair, we use the Median Precision (MP), where precision is defined as $\lambda_i = \text{Var}[y_i | \mathbf{f}_{\mathbf{\Theta}}(\mathbf{x}_i)]^{-1}$. We employ the median instead of the mean as a summary statistic to increase robustness to outliers (when the predictive variance is very small, the precision value blows up). In Appendix \ref{app:mp-results}, we report MP values for each model benchmarked in Section \ref{sec:real-world}.

        \subsubsection{On the Omission of Other Calibration Metrics}\label{app:omission}

            In this work, we choose not to quantify calibration via common metrics such as negative log likelihood (NLL) and expected calibration error (ECE). This is primarily due to the unique challenge of comparing probability forecasts across models with various distributional assumptions. NLLs obtained for continuous distributions are computed from probability \textit{densities}, whereas discrete distributions output probability \textit{masses}. These are fundamentally different quantities that cannot be directly compared. Meanwhile, the regression form of ECE \cite{kuleshov2018accurate} has recently been shown to favor continuous models over discrete ones due to an implicit assumption about uniformity of the probability integral transform \citep{young2024measuring}. Other work suggests that ECE may not be a reliable indicator of true calibration in the first place, since it is a marginal measure \citep{song2019distribution, marx2024calibration} and is sharply discontinuous as a function of the predictor --- arbitrarily small perturbations to a model can cause large jumps in ECE \citep{blasiok2023unifying}.

            We note that CRPS (see Appendix \ref{app:crps}), which is defined solely in terms of the distance between a predicted and ground-truth CDF and is a strictly proper scoring rule, overcomes the discrete vs. continuous comparison challenge and provides a reliable indicator of probabilistic fit. Thus, we include it as the main measure of calibration in our experiments (Section \ref{sec:real-world}).

        \subsubsection{Out-of-Distribution Classification Metrics}

            We use standard binary classification metrics to characterize each model's ability to identify out-of-distribution inputs, in line with \citet{hendrycks2016baseline}, \citet{alemi2018uncertainty}, and \citet{ren2019likelihood}.

            \paragraph{AUROC} The Area Under the Receiver Operating Characteristic Curve (AUROC) evaluates the performance of a binary classifier across a range of thresholds. In our out-of-distribution (OOD) experiment, we define these thresholds based on different expected false positive rates, i.e., $\alpha \in [0, 1]$. For each threshold, we plot the true positive rate (the percentage of true OOD samples correctly identified) against the false positive rate (the percentage of in-distribution samples mistakenly classified as OOD). This produces a curve whose integrated area provides a comprehensive measure of the classifier's performance. In this context, the AUROC can be interpreted as the probability that, given a random in-distribution (ID) and OOD sample, the model assigns a higher predictive variance to the OOD sample than the ID sample. Thus, higher values of AUROC indicate more useful classifiers. A score of 0.5 corresponds to random guessing. For further details, see \citet{marzban2004roc}.
            
            \paragraph{AUPR} The Area Under the Precision-Recall Curve (AUPR) is another metric for evaluating classifier performance, analogous to AUROC, but tailored for imbalanced datasets. Like AUROC, it involves computing the integral under a curve parameterized by $\alpha$. This curve plots precision (the proportion of OOD classifications that were correct) against recall (the proportion of all true OOD instances that were correctly classified), illustrating the trade-off between these two metrics. Higher values are better. A model relying solely on class proportions in the data would achieve an AUPR equal to $P(X \text{ is OOD})$, which in our experiment is 0.224.
            
            \paragraph{FPR80} The False Positive Rate at 80\% True Positive Rate (FPR80) quantifies the proportion of in-distribution (ID) samples misclassified as out-of-distribution (OOD) when the classification threshold $\alpha$ is set to achieve a True Positive Rate (TPR) of 80\% \cite{ren2019likelihood}. This metric highlights the classifier's ability to minimize false positives while maintaining high sensitivity to OOD samples. Lower values indicate better performance, with a perfect classifier achieving an FPR80 of zero.
    
    \subsection{Training Details}\label{app:training-details}

        In all experiments, instead of using the final set of weights achieved during training with a particular technique, we select the weights associated with the best average loss on a held-out validation set. This can be viewed as a form of early stopping.
        
        The ReLU \citep{fukushima1969visual} activation is exclusively used for all MLPs. No dropout or batch normalization is applied. We employ the AdamW optimizer \citep{loshchilov2017decoupled} for all training procedures, and anneal the initial learning rate to 0 following a cosine schedule \citep{loshchilov2016sgdr}.

        The hyperparameter settings we report are chosen such that each model converges to a stable training loss basin with minimal overfitting (as quantified by validation loss). To ensure fairness across all models, we use one set of training hyperparameters (including choice of underlying architecure) for each specific dataset.

        In addition to the details we provide in this section, we refer the reader to our source code \footnote{\url{https://github.com/delicious-ai/ddpn}}, where configuration files for each model benchmarked can be found. The corresponding author can be contacted for access to our pre-generated splits of each dataset.

        \subsubsection{Simulation Experiment (Introductory Figure)}\label{app:sim}

            To produce Figure \ref{fig:intro-fig}, we generate a dataset with the following procedure: First, we sample $x$ from a uniform distribution, $x \sim \texttt{Uniform}(0, 2\pi)$. Next, we draw an initial proposal for $y$ from a conflation \citep{hill2011conflations} of five identical Poissons, each with rate parameterized by $\lambda(x) = 10\sin(x) + 10$. We scale $y$ by $-1$ and shift it by $+30$ to achieve high dispersion at low counts and under-dispersion at high counts while maintaining nonnegativity. We generate 800 points from this process for the training split, as well as 100 for validation and 100 for evaluation.
            
            The models that make up the ensembles we report in this figure are small MLPs (with layers of width \texttt{[128, 128, 128, 64]}). We train each network for 200 epochs on the CPU of a 2021 MacBook Pro with a batch size of 32 and an initial learning rate of $10^{-3}$. We set weight decay to $10^{-5}$.

        \subsubsection{Length of Stay}

            Each model benchmarked for \texttt{Length of Stay} is a small MLP with layer widths \texttt{[128, 128, 128, 64]}. Models are trained for 15 epochs on a 2021 MacBook Pro CPU with a batch size of 128, an initial learning rate of $10^{-4}$, and a weight decay value of $10^{-4}$.

            The \texttt{Length of Stay} dataset has been open-sourced by Microsoft under the MIT license. It has 100,000 rows, which we divide into a 80/10/10 train/val/test split using a fixed random seed.
            
            Licensing information for \texttt{Length of Stay} can be found at \url{https://github.com/microsoft/r-server-hospital-length-of-stay/blob/master/Website/package.json}.
            
            The full dataset can be downloaded from \url{https://raw.githubusercontent.com/microsoft/r-server-hospital-length-of-stay/refs/heads/master/Data/LengthOfStay.csv}.

        \subsubsection{COCO-People}

            We train Vision Transformers \citep{wu2020visual} on \texttt{COCO-People}, passing their pooled patch representations to a two-layer MLP regression head with layer widths of \texttt{[384, 256]}. All weights are fine-tuned from the \texttt{vit-base-patch16-224-in21k} checkpoint \citep{deng2009imagenet}. We set the initial learning rate to $10^{-4}$ and use a weight decay of $10^{-3}$. We train with an effective batch size of 256 in a distributed fashion, using an on-prem machine with 2 Nvidia GeForce RTX 4090 GPUs. Images are normalized with the ImageNet \citep{deng2009imagenet} pixel means and standard deviations and augmented during training with the \texttt{AutoAugment} transformation \citep{cubuk2018autoaugment}. Training is done with BFloat 16 Mixed Precision.

            The \texttt{COCO} dataset from which we form the \texttt{COCO-People} subset is distributed via the CCBY 4.0 license. It can be accessed at \url{https://cocodataset.org/#home}. Our subset has 64,115 images in total, which we divide into a 70/10/20 train/val/test split.

        \subsubsection{Inventory}

            For the \texttt{Inventory} task, we use a modified version of CountNet3D \citep{jenkins2023countnet3d} that ouputs the parameters of a probability distribution instead of regressing the mean directly. We train models for 50 epochs with an effective batch size of 16, an initial learning rate of $10^{-3}$, and weight decay of $10^{-5}$. Training is performed on an internal cluster of 4 NVIDIA GeForce RTX 2080 Ti GPUs.

            The \texttt{Inventory} dataset is made available to us via an industry collaboration and is not publicly accessible.

        \subsubsection{Amazon Reviews}

            We fine-tune a DistilBert Base Cased \citep{Sanh2019DistilBERTAD} backbone with a \texttt{[384, 256]} MLP regression head for the \texttt{Amazon Reviews} task. To avoid overfitting, we freeze all but the final transformer block in the backbone. The regression head is not frozen. All networks are trained for 10 epochs across 2 on-prem Nvidia GeForce RTX 4090 GPUs with an effective batch size of 2048. We use an initial learning rate of $10^{-4}$ and a weight decay of $10^{-5}$, and run computations with BFloat 16 Mixed Precision.

            \texttt{Amazon Reviews} is publicly available at \url{https://cseweb.ucsd.edu/~jmcauley/datasets/amazon_v2/}. The ``Patio, Lawn, and Garden'' subset we employ in this work is hosted at \url{https://datarepo.eng.ucsd.edu/mcauley_group/data/amazon_v2/categoryFilesSmall/Patio_Lawn_and_Garden.csv}. Our subset contains 793,966 total reviews, which we divide into a 70/10/20 train/val/test split.
    
    \subsection{Additional Results}
        \subsubsection{Sharpness}\label{app:mp-results}

        To provide a notion of the sharpness of predictive distributions produced by each model we benchmark, we measure and report the median precision (MP) across all datasets in Table \ref{tab:mp}. See Appendix \ref{app:med-prec} for more details about how we compute this metric. Higher values correspond to sharper distributions (those with lower predictive variance). We note that sharpness alone is not an indication of good probabilistic fit; thus we do not mark ``winners'' in this metric. We simply provide sharpness values to facilitate improved understanding of each technique's behavior on our datasets.

        \begin{table}[!ht]
            \caption{Median Precision (MP) values across main experiments.}
            \label{tab:mp}
            \centering
            \begin{tabular}{cccccc}
                \toprule
                & & Length of Stay & COCO-People & Inventory & Reviews \\ 
                \midrule
                \multirow{10}{*}{\rotatebox[origin=c]{90}{Aleatoric Only}} 
                & Poisson DNN & 0.262 (0.00) & 0.437 (0.01) & 0.263 (0.01) & 0.211 (0.00) \\
                & NB DNN & 0.265 (0.00) & 0.392 (0.04) & 0.261 (0.01) & 0.211 (0.00) \\
                & Gaussian DNN & 2.031 (0.09) & 0.913 (0.43) & 0.689 (0.02) & 3.833 (0.28) \\
                & Faithful Gaussian & 1.258 (0.09) & 0.647 (0.05) & 1.072 (0.13) & 3.907 (0.04) \\
                & Natural Gaussian & 1.456 (0.07) & 0.745 (0.20) & 0.713 (0.02) & 4.545 (0.29) \\
                & $\beta_{0.5}$-Gaussian & 2.125 (0.07) & 2.174 (0.18) & 1.011 (0.07) & 4.604 (0.45) \\
                & $\beta_{1.0}$-Gaussian & 1.646 (0.21) & 1.558 (0.09) & 0.814 (0.06) & 3.935 (0.04) \\
                & DDPN (ours) & 2.663 (0.14) & 0.915 (0.30) & 0.680 (0.04) & 3.960 (0.13) \\
                & $\beta_{0.5}$-DDPN (ours) & 2.184 (0.14) & 1.006 (0.13) & 0.818 (0.04) & 3.642 (0.18) \\
                & $\beta_{1.0}$-DDPN (ours) & 1.565 (0.11) & 1.986 (0.04) & 0.690 (0.03) & 3.806 (0.04) \\
                \midrule
                \multirow{10}{*}{\rotatebox[origin=c]{90}{Aleatoric + Epistemic}} 
                & Poisson DNN & 0.261 & 0.413 & 0.253 & 0.210 \\
                & NB DNN & 0.263 & 0.354 & 0.252 & 0.210 \\
                & Gaussian DNN & 1.933 & 0.548 & 0.631 & 3.643 \\
                & Faithful Gaussian & 1.211 & 0.592 & 0.907 & 3.927 \\
                & Natural Gaussian & 1.386 & 0.571 & 0.639 & 4.217 \\
                & $\beta_{0.5}$-Gaussian & 2.003 & 1.785 & 0.828 & 4.362 \\
                & $\beta_{1.0}$-Gaussian & 1.531 & 1.360 & 0.734 & 3.819 \\
                & DDPN (ours) & 2.441 & 0.626 & 0.623 & 3.694 \\
                & $\beta_{0.5}$-DDPN (ours) & 2.046 & 0.821 & 0.709 & 3.645 \\
                & $\beta_{1.0}$-DDPN (ours) & 1.463 & 1.680 & 0.620 & 3.569 \\
                \bottomrule
            \end{tabular}
        \end{table}
        
        \subsubsection{GLM Results on Length of Stay}

        Since \texttt{Length of Stay} is tabular, we may compare GLM baselines in addition to the neural networks benchmarked in Section \ref{sec:real-world}. We train and evaluate a Poisson Generalized Linear Model (GLM), a Negative Binomial GLM, and a Double Poisson GLM \citep{efron1986double, zhu2012modeling, zou2013evaluating, aragon2018maximum, toledo2022flexible}. As in Table \ref{results:all}, we train 5 independent models with each technique and report the mean and standard error of both MAE and CRPS across all trials under the ``Aleatoric Only'' header. We also provide the median precision values as in Table \ref{tab:mp}. Although GLMs are not deep networks, we may ensemble their predictions in a similar fashion as deep ensembles (DEs) to quantify epistemic uncertainty. We do so, and report these results under the ``Aleatoric + Epistemic'' header.

        The limited flexibility of linear models appears to hinder the GLMs we evaluate, as they struggle with both mean fit and calibration when compared to their respective neural network counterparts.

        \begin{table}[t!]
            \centering
            \caption{GLM results for the Length of Stay dataset.}
            \label{results:glm}
            \resizebox{0.8\textwidth}{!}{
                \begin{tabular}{lccc|ccc}
                    \toprule
                    & \multicolumn{3}{c}{Aleatoric Only} & \multicolumn{3}{c}{Aleatoric + Epistemic} \\
                    \cmidrule(lr){2-4} \cmidrule(lr){5-7}
                    & MAE ($\downarrow$) & CRPS ($\downarrow$) & MP & MAE ($\downarrow$) & CRPS ($\downarrow$) & MP \\
                    \toprule
                    Poisson GLM & 1.284 (0.08) & 0.882 (0.06) & 0.319 (0.01) & 1.256 & 0.850 & 0.309 \\
                    NB GLM & 3.783 (0.27) & 1.692 (0.14) & 0.241 (0.00) & 3.906 & 1.685 & 0.239 \\
                    DP GLM & 1.320 (0.15) & 0.894 (0.08) & 0.543 (0.44) & 1.185 & 0.829 & 0.340 \\
                    \bottomrule
                \end{tabular}
            }
        \end{table}

\section{Proofs and Derivations}

    In this section we provide full proofs of the propositions presented in the main body of the paper, along with derivations of key equations (excepting the Double Poisson NLL derivation, which is located in Appendix \ref{app:ddpn-obj})

    \subsection{Proof of Proposition \ref{prop:existing-characterization}}\label{app:pf-1}

        \begin{proof}
            We will first demonstrate that Gaussian regressors are fully heteroscedastic. Since they model both the predicted mean $\mu_i$ and variance $\sigma_i^2$ conditionally, all we have to show is that the Normal distribution exhibits unrestricted variance. To see this, note that if $Z$ is distributed normally, if we fix $\mu$ for any $\mu \in \mathbb{R}$, then for an arbitrary $\sigma^2 \in (0, \infty)$, the parameter setting $\boldsymbol{\psi} = [\mu, \sigma^2]^T$ achieves $\text{Var}[Z] = \sigma^2$. Thus the Normal distribution has unrestricted variance, and Gaussian regressors are indeed fully heteroscedastic.

            To show that Poisson and Negative Binomial regressors are not fully heteroscedastic, we will prove that their respective output distributions do not have unrestricted variance, i.e. there exists at least one $\sigma^2 \in (0, \infty)$ such that no setting of parameters $\boldsymbol{\psi}$ can achieve this variance when the expected value is fixed.

            If we say $Z$ is Poisson, then for any $\lambda > 0$, fixing $\mathbb{E}[Z] = \lambda$ requires $Z \sim \text{Poisson}(\lambda)$, i.e. $\boldsymbol{\psi} = [\lambda]$. For any $\sigma^2 \in (0, \infty) \setminus \{\lambda\}$, we see that there is no setting of $\boldsymbol{\psi}$ such that $\text{Var}[Z] = \sigma^2$, since $\text{Var}[Z] = \mathbb{E}[Z] = \lambda$. So the Poisson distribution does not have unrestricted variance.
    
            The Negative Binomial case is similar. We employ the common $(r, p)$ parametrization, though other alternatives are possible. Recall that in this parametrization, $r > 0$ is the number of successful trials until an experiment is stopped, and $p \in (0, 1)$ is the probability of success in each trial of the experiment. Pick any $\mu > 0$. If $Z$ is Negative Binomial, then fixing $\mathbb{E}[Z] = \mu$ means that $\boldsymbol{\psi} = [r, p]^T$ must satisfy $\mu = \frac{r(1-p)}{p}$. Now choose any $\sigma^2 \in (0, \mu)$. For $\text{Var}[Z] = \sigma^2$ to be true, $\boldsymbol{\psi}$ must satisfy $\sigma^2 = \frac{r(1-p)}{p^2}$. Noting that we chose $\sigma^2$ such that $\sigma^2 < \mu$, this implies $\frac{r(1-p)}{p^2} < \frac{r(1-p)}{p}$. Multiplying both sides by the positive quantity $\frac{p^2}{r(1-p)}$ yields $1 < p$. But $p < 1$ by definition. Thus we have identified a range of $\sigma^2$ values such that there exists no $\boldsymbol{\psi}$ that can satisfy $\text{Var}[Z] = \sigma^2$ if $\mathbb{E}[Z] = \mu$. So the Negative Binomial distribution does not have unrestricted variance.

            We conclude that any regression model outputting either the Poisson or Negative Binomial distribution is not fully heteroscedastic.
            
        \end{proof}

    \subsection{Proof of Proposition \ref{prop:ddpn-fully}}\label{app:pf-2}
        \begin{proof}
            We assume the first and second moment approximations proposed by \citet{efron1986double}: $\mathbb{E}[Z] = \mu$ and $\text{Var}[Z] = \frac{\mu}{\gamma}$. To see that DDPN is fully heteroscedastic, note that for any input $\mathbf{x_i}$, DDPN predicts $\log(\hat{\gamma}_i) = \mathbf{w}_{\gamma}^T\mathbf{z_i} + b_{\gamma}$, where $\mathbf{z_i}$ is a neural network's input-conditional hidden representation of $\mathbf{x}_i$. Since DDPN's predictive variance directly depends on $\hat{\gamma}_i$ (which we just showed depends on $\mathbf{x}_i$), all we have left to demonstrate is that the Double Poisson distribution has unrestricted variance. To see this, note that if $Z$ is Double Poisson with expectation $\mu$ (where $\mu > 0$), then for any $\sigma^2 \in (0, \infty)$, choosing $\boldsymbol{\psi} = [\mu, \frac{\mu}{\sigma^2}]^T$ implies $\text{Var}[Z] = \frac{\mu}{\frac{\mu}{\sigma^2}} = \sigma^2$. Thus, DDPN is fully heteroscedastic.
        \end{proof}

    \subsection{Proof of Proposition \ref{prop:att-consequence}}\label{app:pf-3}
        \begin{proof}
            Suppose that we have a loss function $\mathscr{L}(\hat{\mu}_i, \hat{\phi}_i)$ that can be written in the form $d(\hat{\phi}_i) + a(\hat{\phi}_i)r(\hat{\mu}_i, y_i)$ as stated in Definition \ref{def:loss-attn}. We have
        
            \begin{align*}
                \lim_{\hat{\phi}_i \rightarrow \infty} \frac{a(\hat{\phi}_i)r(\hat{\mu}_i, y_i)}{d(\hat{\phi}_i)} &= r(\hat{\mu}_i, y_i) \lim_{\hat{\phi}_i \rightarrow \infty} \frac{a(\hat{\phi}_i)}{d(\hat{\phi}_i)} \\
                &= r(\hat{\mu}_i, y_i)(0) & \text{(since $a \rightarrow 0$ and $d \rightarrow \infty$)}\\
                &= 0
            \end{align*}

            where for the second equality, we rely on the conditions imposed for $a$ and $d$ in Definition \ref{def:loss-attn}.
        \end{proof}
    
    \subsection{Proof of Proposition \ref{prop:attenuation}}\label{app:pf-4}

    \begin{proof}

    To see that DDPN exhibits learnable loss attenuation and conforms with Definition \ref{def:loss-attn}, recall the loss function from Equation \ref{eq:loss}:
    \begin{align}
    \mathscr{L} = -\frac{\log\hat{\gamma}_i}{2} + \hat{\gamma}_i \hat{\mu}_i - \hat{\gamma}_i y_i (1 + \log \hat{\mu}_i - \log y_i)
    \end{align}

    Because DDPN is expressed in terms of inverse dispersion, $\gamma_i = \frac{1}{\phi_i}$, we can substitute $\frac{1}{\phi_i}$ in for $\gamma_i$ to re-parameterize the loss function in terms of dispersion:

    \begin{align}
    \mathscr{L} = -\frac{1}{2}\log\frac{1}{\hat{\phi}_i} + \frac{\hat{\mu}_i}{\hat{\phi}_i}  - \frac{y_i}{\hat{\phi}_i}  (1 + \log \hat{\mu}_i - \log y_i)
    \end{align}

    We can rearrange terms to put $\mathscr{L}$ in the standard form with dispersion penalty $d(\phi_i)$, attenuation factor $a(\phi_i)$, and residual penalty $r(\mu_i, y_i)$:

    \begin{flalign*}
    & \mathscr{L} = -\frac{1}{2}\log\frac{1}{\hat{\phi}_i} + \frac{1}{\hat{\phi}_i} \left[ \hat{\mu}_i - y_i(1 + \log \hat{\mu}_i - \log y_i) \right] \\
    & = -\frac{1}{2}\log\frac{1}{\hat{\phi}_i} + \frac{1}{\hat{\phi}_i} \left[ (\hat{\mu}_i - y_i) - y_i (\log \hat{\mu}_i - \log y_i) \right] \\
    & = -\frac{1}{2}\log 1 + \frac{1}{2}\log \hat{\phi}_i + \frac{1}{\hat{\phi}_i} \left[ (\hat{\mu}_i - y_i) - y_i (\log \hat{\mu}_i - \log y_i) \right] \\
    & =  \frac{1}{2}\log \hat{\phi}_i + \frac{1}{\hat{\phi}_i} \left[ (\hat{\mu}_i - y_i) - y_i (\log \hat{\mu}_i - \log y_i) \right]
\end{flalign*}

Let $d(\hat{\phi}_i) = \frac{1}{2}\log\hat{\phi}_i$, $a(\hat{\phi}_i)= \frac{1}{\hat{\phi}_i}$, and $r(\hat{\mu}_i, y_i) = (\hat{\mu}_i - y_i) - y_i (\log \hat{\mu}_i - \log y_i)$. We have $d(\hat{\phi}_i) + a(\hat{\phi}_i) r(\hat{\mu}_i, y_i) = \log \hat{\phi}_i + \frac{1}{\hat{\phi}_i} \left[ (\hat{\mu}_i - y_i) - y_i (\log \hat{\mu}_i - \log y_i) \right]$. Clearly, $d(\phi) = \log \hat{\phi_i}$ is monotonically \textit{increasing} w.r.t $\hat{\phi}_i$ and $a(\phi_i) = \frac{1}{\hat{\phi}_i}$ is both monotonically \textit{decreasing} w.r.t $\hat{\phi}_i$ and bounded from below by 0 (since $\hat{\phi}_i = \frac{1}{\hat{\gamma}_i} > 0)$.

All that is left to show is that (i) $r(\hat{\mu}_i, y_i) = 0 \iff \hat{\mu}_i = y_i$  and (ii) $r$ is nonnegative:

\begin{enumerate}[label=(\roman*)]  
    \item 
        We first show that $\hat{\mu}_i = y_i \implies r(\hat{\mu}_i, y_i) = 0$. To see this, note that when $\hat{\mu}_i = y_i$, both the terms in our residual penalty, $(\hat{\mu}_i - y_i)$ and $y_i(\log \hat{\mu}_i - \log y_i)$, are 0. So their difference, $r$, must be 0.
        
        To see that $r(\hat{\mu}_i, y_i) = 0 \implies \hat{\mu}_i = y_i$, we handle the case of $y_i = 0$ and $y_i > 0$ separately:
        
        \paragraph{Case 1}
            If $y_i = 0$, then $r = 0 \implies (\hat{\mu}_i - y_i) - y_i(\log \hat{\mu}_i - \log y_i) = 0$. But $y_i = 0$ cancels the second term in this difference, leaving us with $0 = (\hat{\mu}_i - y_i) \implies \hat{\mu}_i = y_i$.
        \paragraph{Case 2} 
            If $y_i > 0$, we can make the substitution $u = \frac{\hat{\mu}_i}{y_i}$ and rewrite the residual penalty: $r = uy_i - y_i - y_i\log u = y_i(u - 1 - \log u)$. Since $y_i > 0$, this yields $r = 0 \implies (u - 1 - \log u) = 0$. We now claim that $h(u) = (u - 1 - \log u)= 0 \implies u = 1$. To see this, note that $h'(u) = 1 - \frac{1}{u}$ is negative when $u < 1$ and positive when $u > 1$. Since $h(u) = 0$ when $u=1$, strictly increasing to the right of $u$, and strictly decreasing to the left of $u$, $0$ is uniquely achieved by $h$ at $u = 1$. So $h = 0 \implies u = 1$. Thus, $u - 1 - \log u = 0 \implies u = 1 \implies \frac{\hat{\mu}_i}{y_i} = 1 \implies \hat{\mu}_i = y_i$.
        
        In either case, we have $\hat{\mu}_i = y_i$. Since we have proved both directions, we conclude that $r(\hat{\mu}_i, y_i) = 0 \iff \hat{\mu}_i = y_i$.

    \item
        We separately handle the cases of $y_i = 0$ and $y_i > 0$, and may now assume (by virtue of the iff) that $\hat{\mu}_i \neq y_i$:

        \paragraph{Case 1}
            If $y_i = 0$, then $r = \hat{\mu}_i > 0$, where the last inequality comes from the restrictions on the parameters of a Double Poisson distribution.

        \paragraph{Case 2}
            If $y_i > 0$, then substituting $u = \frac{\hat{\mu}_i}{y_i}$ as before yields $r = y_i(u - 1 - \log u)$, which is positive if $h(u) = (u - 1 - \log u)$ is positive. Recall that $h(u) = 0$ iff $u = 1$, and that $h$ is strictly decreasing to the left of $u$ and strictly increasing to the right of $u$. Since $h'(u) = 0$, this implies (by the first derivative test) that $u=1$ is the unique minimizer of $h$, with $h(u) = 0$. But we only have $u = 1$ if $\hat{\mu_i} = y_i$, which we can assume is not the case because of (i). So $h > 0 \implies (u - 1 - \log u) > 0 \implies r > 0$.

        In both cases, we are guaranteed that $r > 0$.

\end{enumerate}

Thus Equation \ref{eq:loss} conforms with Definition \ref{def:loss-attn}.

\end{proof}

    \subsection{Computing the Mean and Variance of a Mixture Model}\label{app:mix-mean-var}

        In Section \ref{sec:ensembles} we describe how the ensembled predictive distribution is a uniform mixture of the $M$ members of the ensemble: $p(y_i | \mathbf{x}_i) = \frac{1}{M} \sum_{m=1}^{M} p(y_i | \mathbf{f}_{\mathbf{\Theta_m}}(\mathbf{x}_i))$.
        
        Letting $\mu_m = \mathbb{E}[y_i | \mathbf{f}_{\mathbf{\Theta_m}}(\mathbf{x}_i)]$ and $\sigma_m^2 = \text{Var}[y_i | \mathbf{f}_{\mathbf{\Theta_m}}(\mathbf{x}_i)]$, we can get the mean and variance of the predictive distribution as $\mathbb{E}[y_i | \mathbf{x}_i] = \frac{1}{M} \sum_{m=1}^{M} \mu_m$ and $\text{Var}[y_i | \mathbf{x}_i] = \sum_{m=1}^{M} \frac{\sigma_m^2 + \mu_m^2}{M} - \left ( \sum_{m=1}^{M} \frac{\mu_m}{M} \right)^2$.
        
        We note that the ensembling technique used throughout our work can be applied to form an ensemble from any collection of neural networks outputting a probability distribution, regardless of the specific parametric form \citep{marron1992exact}.

    \subsection{Decomposing the Predictive Variance of a Mixture Model}\label{app:decomp}

        For the mixture model described above, we can decompose the predictive variance for a single regression target $y_i$ as follows:
        
        \begin{align*}
            \text{Var}[y_i | \mathbf{x}_i] &= \sum_{m=1}^{M} \frac{{\sigma_m^2}^{(i)} + {\mu_m^{(i)}}^2}{M} - \left(\sum_{m=1}^{M}\frac{\mu_m^{(i)}}{M} \right)^2 \\
            &= \left( \frac{1}{M} \sum_{m=1}^{M} {\sigma_m^2}^{(i)} \right) + \left[ \frac{1}{M} \sum_{m=1}^{M} {\mu_m^{(i)}}^2 - \left(\frac{1}{M}\mu_m^{(i)} \right)^2 \right] \\
            &= \mathbb{E}_m [{\sigma_m^2}^{(i)}] + \left[ \mathbb{E}_m [{\mu_m^{(i)}}^2] - (\mathbb{E}_m[\mu_m^{(i)}])^2 \right] \\
            &= \mathbb{E}_m [{\sigma_m^2}^{(i)}] + \text{Var}_m[\mu_m^{(i)}]
        \end{align*}

        The first term in this sum, $\mathbb{E}_m [{\sigma_m^2}^{(i)}]$, is the average predictive variance of each ensemble member and is often considered to be a proxy for aleatoric uncertainty, while $\text{Var}_m[\mu_m^{(i)}]$ represents the amount of disagreement between ensemble members regarding the predicted mean and is interpreted as epistemic uncertainty \cite{kendall2017uncertainties}.

\section{Additional Case Studies}

    \subsection{Example Predictive Distributions on \texttt{COCO-People}}\label{app:coco-case-studies}

    For each baseline, Figures \ref{fig:coco-ind-first}-\ref{fig:coco-ind-last} depict the predictive distributions of one model trained under that framework on \texttt{COCO-People}, while \ref{fig:coco-ens-first}-\ref{fig:coco-ens-last} show the combined predictions of a 5-model ensemble. We select 3 instances from the test split and visualize outputs on each. A black star indicates the true count of people in the image.

        \subsubsection{Individual Models}
        
            \begin{figure}[h!]
                \centering
                \begin{subfigure}[b]{0.3\textwidth}
                    \centering
                    \includegraphics[width=\textwidth]{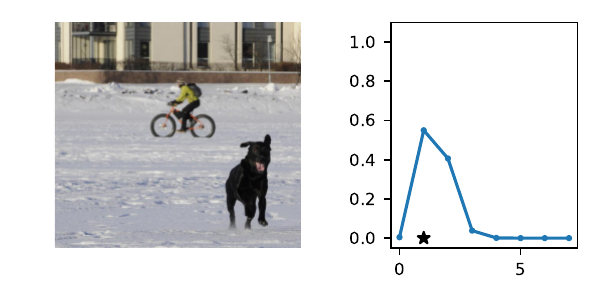}
                \end{subfigure}
                \hfill
                \begin{subfigure}[b]{0.3\textwidth}
                    \centering
                    \includegraphics[width=\textwidth]{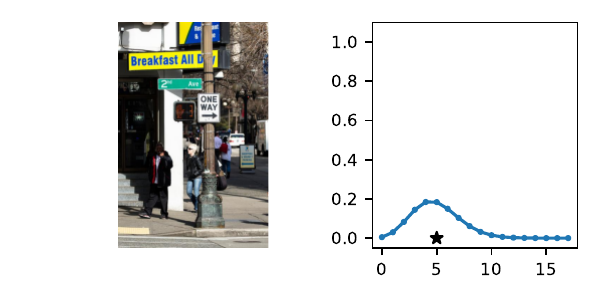}
                \end{subfigure}
                \hfill
                \begin{subfigure}[b]{0.3\textwidth}
                    \centering
                    \includegraphics[width=\textwidth]{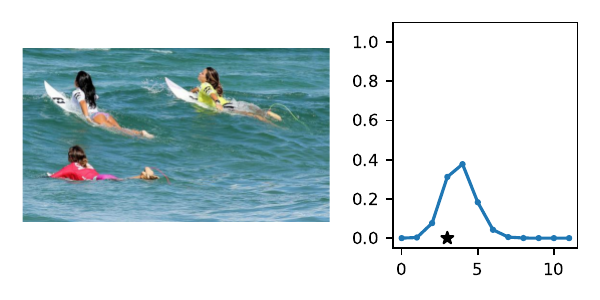}
                \end{subfigure}
            \caption{Predictive PMFs from a DDPN on samples from the test split of \texttt{COCO-People}.}
            \label{fig:coco-ind-first}
        \end{figure}

        \begin{figure}[h!]
            \centering
            \begin{subfigure}[b]{0.3\textwidth}
                \centering
                \includegraphics[width=\textwidth]{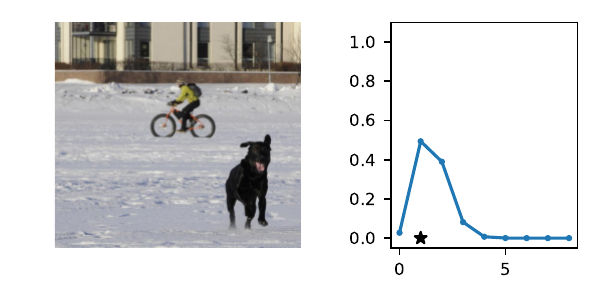}
            \end{subfigure}
            \hfill
            \begin{subfigure}[b]{0.3\textwidth}
                \centering
                \includegraphics[width=\textwidth]{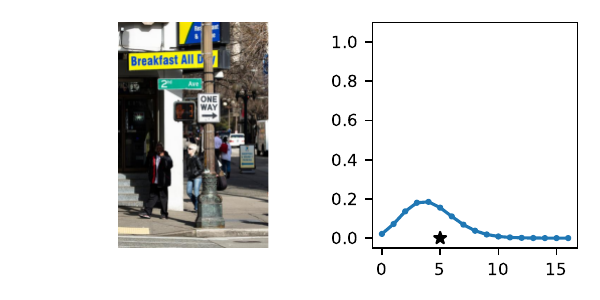}
            \end{subfigure}
            \hfill
            \begin{subfigure}[b]{0.3\textwidth}
                \centering
                \includegraphics[width=\textwidth]{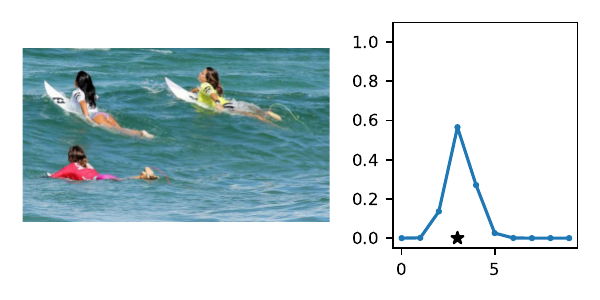}
            \end{subfigure}
            \caption{Predictive PMFs from a $\beta_{0.5}$-DDPN on samples from the test split of \texttt{COCO-People}.}
        \end{figure}

        \begin{figure}[h!]
            \centering
            \begin{subfigure}[b]{0.3\textwidth}
                \centering
                \includegraphics[width=\textwidth]{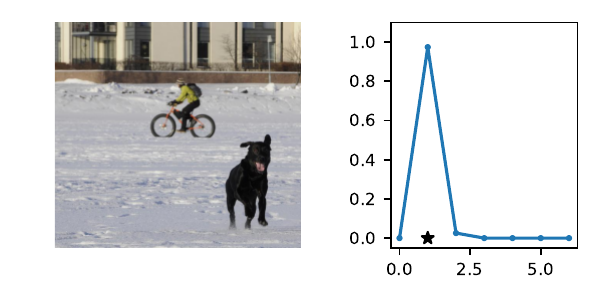}
            \end{subfigure}
            \hfill
            \begin{subfigure}[b]{0.3\textwidth}
                \centering
                \includegraphics[width=\textwidth]{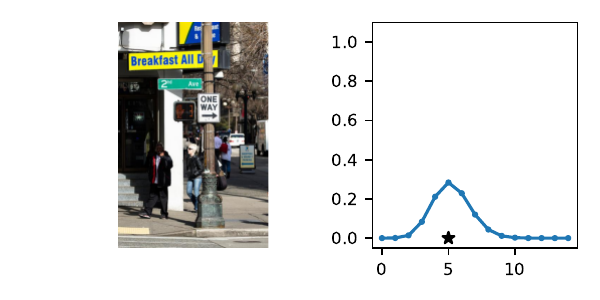}
            \end{subfigure}
            \hfill
            \begin{subfigure}[b]{0.3\textwidth}
                \centering
                \includegraphics[width=\textwidth]{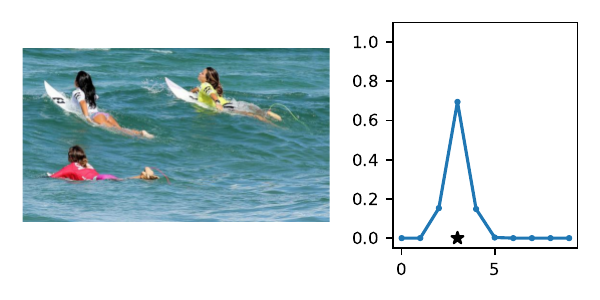}
            \end{subfigure}
            \caption{Predictive PMFs from a $\beta_{1.0}$-DDPN on samples from the test split of \texttt{COCO-People}.}
        \end{figure}

        \begin{figure}[h!]
            \centering
            \begin{subfigure}[b]{0.3\textwidth}
                \centering
                \includegraphics[width=\textwidth]{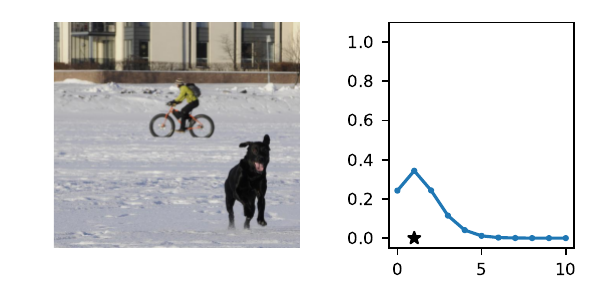}
            \end{subfigure}
            \hfill
            \begin{subfigure}[b]{0.3\textwidth}
                \centering
                \includegraphics[width=\textwidth]{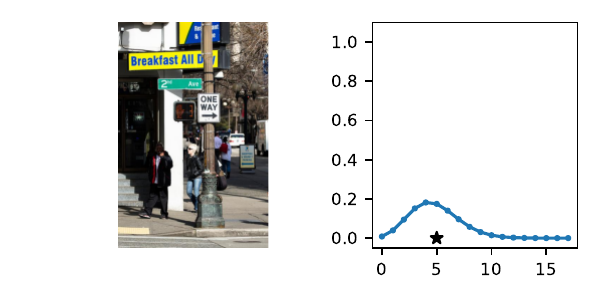}
            \end{subfigure}
            \hfill
            \begin{subfigure}[b]{0.3\textwidth}
                \centering
                \includegraphics[width=\textwidth]{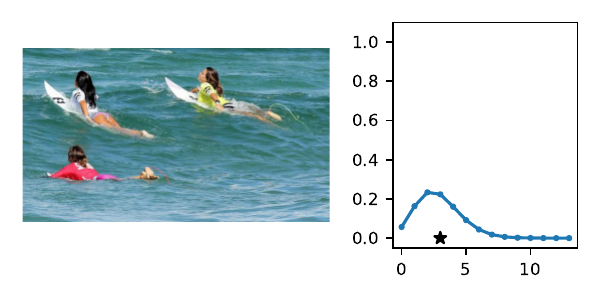}
            \end{subfigure}
            \caption{Predictive PMFs from a Poisson DNN on samples from the test split of \texttt{COCO-People}.}
        \end{figure}

        \begin{figure}[h!]
            \centering
            \begin{subfigure}[b]{0.3\textwidth}
                \centering
                \includegraphics[width=\textwidth]{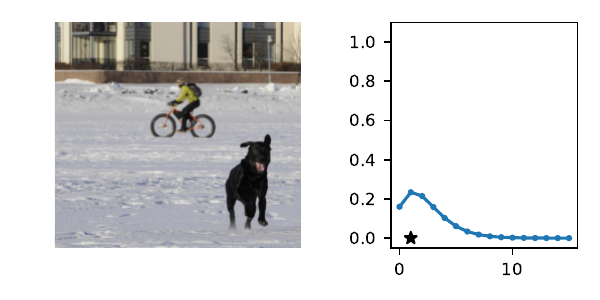}
            \end{subfigure}
            \hfill
            \begin{subfigure}[b]{0.3\textwidth}
                \centering
                \includegraphics[width=\textwidth]{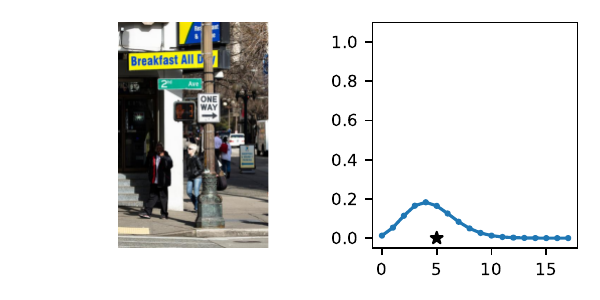}
            \end{subfigure}
            \hfill
            \begin{subfigure}[b]{0.3\textwidth}
                \centering
                \includegraphics[width=\textwidth]{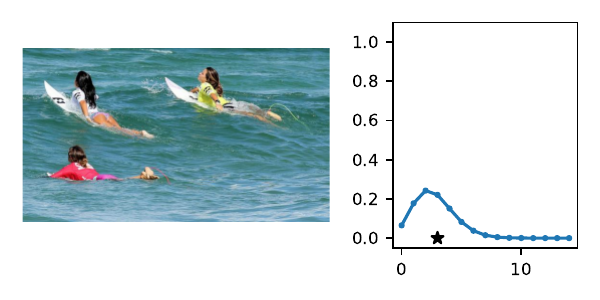}
            \end{subfigure}
            \caption{Predictive PMFs from a Negative Binomial DNN on samples from the test split of \texttt{COCO-People}.}
        \end{figure}

        \begin{figure}[h!]
            \centering
            \begin{subfigure}[b]{0.3\textwidth}
                \centering
                \includegraphics[width=\textwidth]{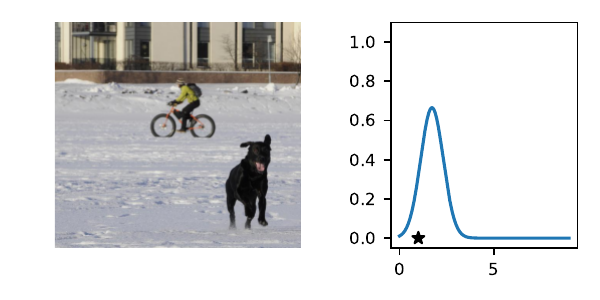}
            \end{subfigure}
            \hfill
            \begin{subfigure}[b]{0.3\textwidth}
                \centering
                \includegraphics[width=\textwidth]{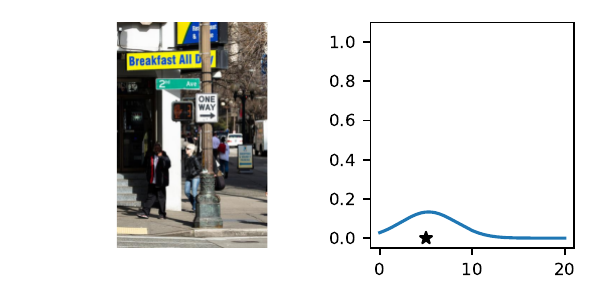}
            \end{subfigure}
            \hfill
            \begin{subfigure}[b]{0.3\textwidth}
                \centering
                \includegraphics[width=\textwidth]{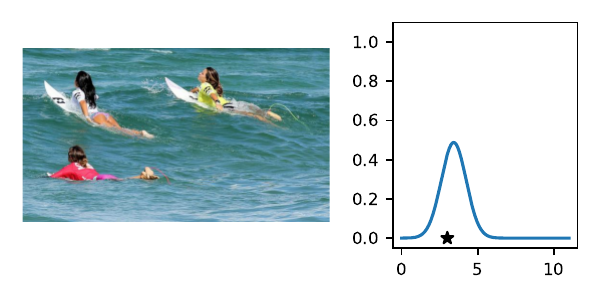}
            \end{subfigure}
            \caption{Predictive PMFs from a Gaussian DNN on samples from the test split of \texttt{COCO-People}.}
        \end{figure}

        \begin{figure}[h!]
            \centering
            \begin{subfigure}[b]{0.3\textwidth}
                \centering
                \includegraphics[width=\textwidth]{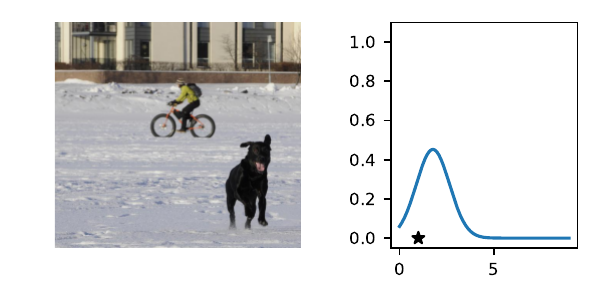}
            \end{subfigure}
            \hfill
            \begin{subfigure}[b]{0.3\textwidth}
                \centering
                \includegraphics[width=\textwidth]{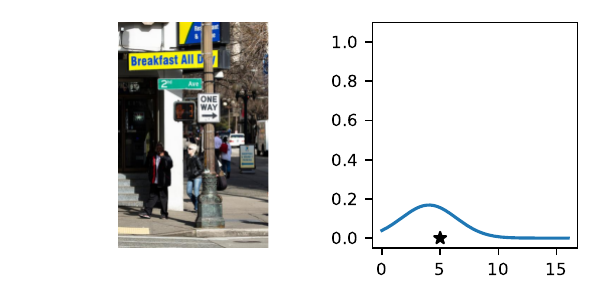}
            \end{subfigure}
            \hfill
            \begin{subfigure}[b]{0.3\textwidth}
                \centering
                \includegraphics[width=\textwidth]{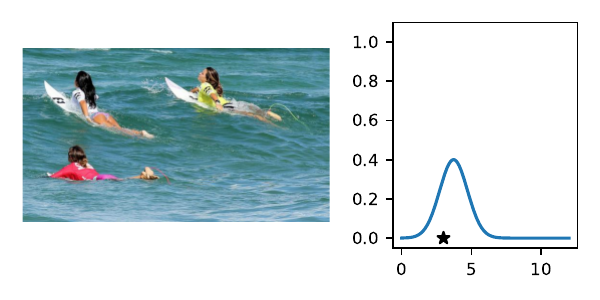}
            \end{subfigure}
            \caption{Predictive PMFs from a Natural Gaussian DNN on samples from the test split of \texttt{COCO-People}.}
        \end{figure}

        \begin{figure}[h!]
            \centering
            \begin{subfigure}[b]{0.3\textwidth}
                \centering
                \includegraphics[width=\textwidth]{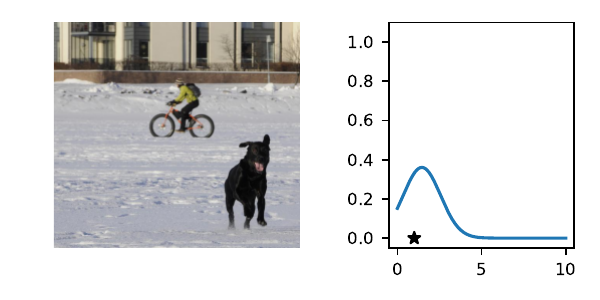}
            \end{subfigure}
            \hfill
            \begin{subfigure}[b]{0.3\textwidth}
                \centering
                \includegraphics[width=\textwidth]{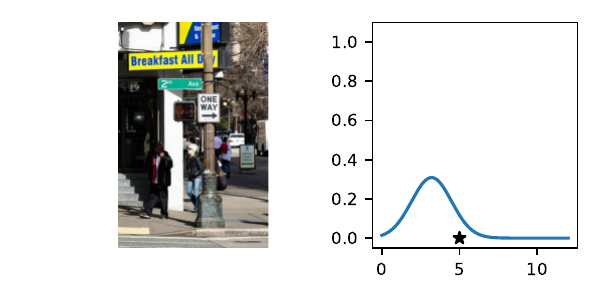}
            \end{subfigure}
            \hfill
            \begin{subfigure}[b]{0.3\textwidth}
                \centering
                \includegraphics[width=\textwidth]{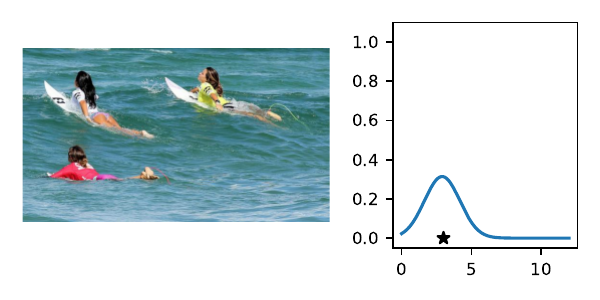}
            \end{subfigure}
            \caption{Predictive PMFs from a Faithful Gaussian DNN on samples from the test split of \texttt{COCO-People}.}
        \end{figure}

        \begin{figure}[h!]
            \centering
            \begin{subfigure}[b]{0.3\textwidth}
                \centering
                \includegraphics[width=\textwidth]{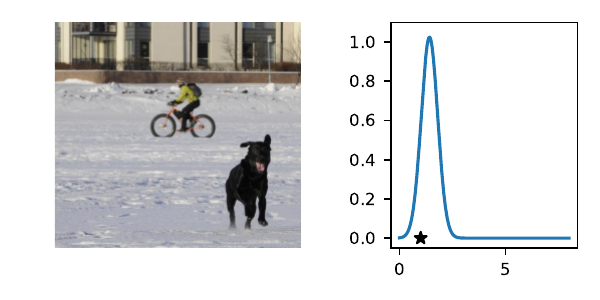}
            \end{subfigure}
            \hfill
            \begin{subfigure}[b]{0.3\textwidth}
                \centering
                \includegraphics[width=\textwidth]{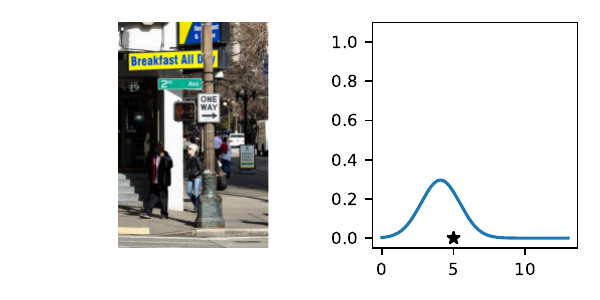}
            \end{subfigure}
            \hfill
            \begin{subfigure}[b]{0.3\textwidth}
                \centering
                \includegraphics[width=\textwidth]{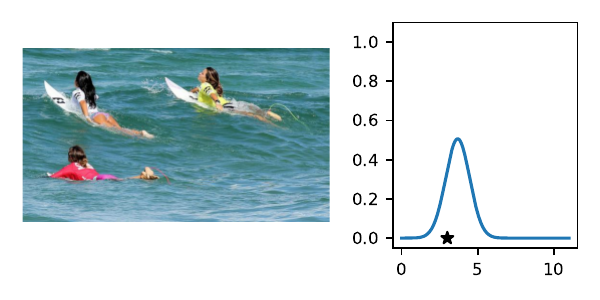}
            \end{subfigure}
            \caption{Predictive PMFs from a $\beta_{0.5}$-Gaussian DNN on samples from the test split of \texttt{COCO-People}.}
        \end{figure}

        \begin{figure}[h!]
            \centering
            \begin{subfigure}[b]{0.3\textwidth}
                \centering
                \includegraphics[width=\textwidth]{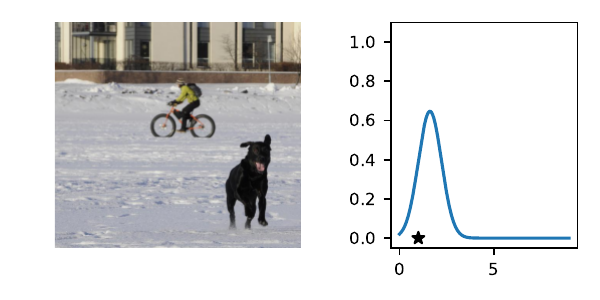}
            \end{subfigure}
            \hfill
            \begin{subfigure}[b]{0.3\textwidth}
                \centering
                \includegraphics[width=\textwidth]{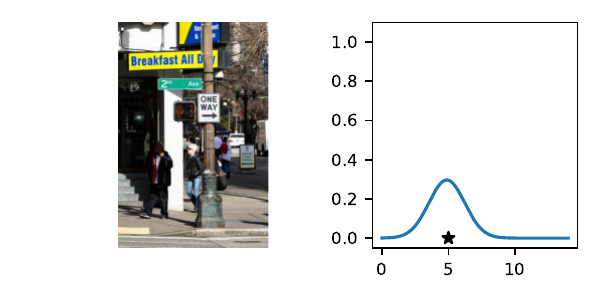}
            \end{subfigure}
            \hfill
            \begin{subfigure}[b]{0.3\textwidth}
                \centering
                \includegraphics[width=\textwidth]{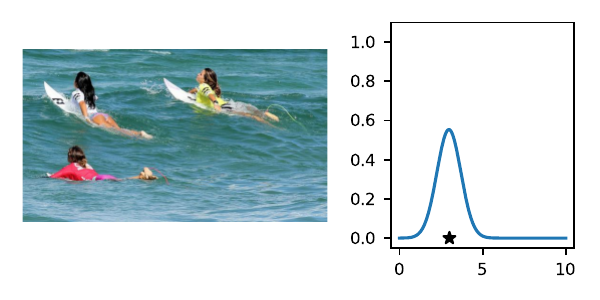}
            \end{subfigure}
            \caption{Predictive PMFs from a $\beta_{1.0}$-Gaussian DNN on samples from the test split of \texttt{COCO-People}.}
            \label{fig:coco-ind-last}
        \end{figure}

        \clearpage
        \subsubsection{Ensembles}

            \begin{figure}[h!]
                    \centering
                    \begin{subfigure}[b]{0.33\textwidth}
                        \centering
                        \includegraphics[width=\textwidth]{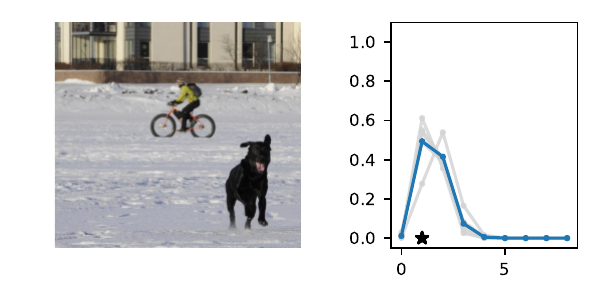}
                    \end{subfigure}
                    \hfill
                    \begin{subfigure}[b]{0.33\textwidth}
                        \centering
                        \includegraphics[width=\textwidth]{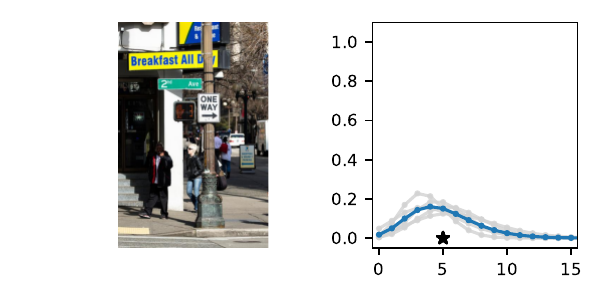}
                    \end{subfigure}
                    \hfill
                    \begin{subfigure}[b]{0.33\textwidth}
                        \centering
                        \includegraphics[width=\textwidth]{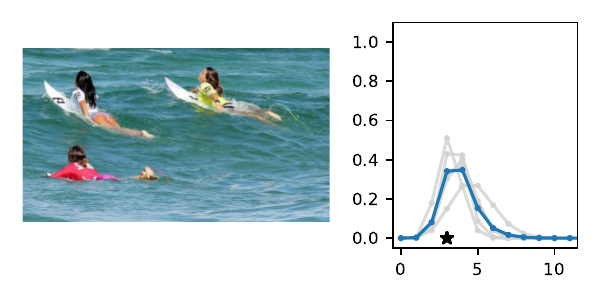}
                    \end{subfigure}
                \caption{Predictive PMFs from a DDPN ensemble on samples from the test split of \texttt{COCO-People}.}
                \label{fig:coco-ens-first}
            \end{figure}
    
            \begin{figure}[h!]
                \centering
                \begin{subfigure}[b]{0.33\textwidth}
                    \centering
                    \includegraphics[width=\textwidth]{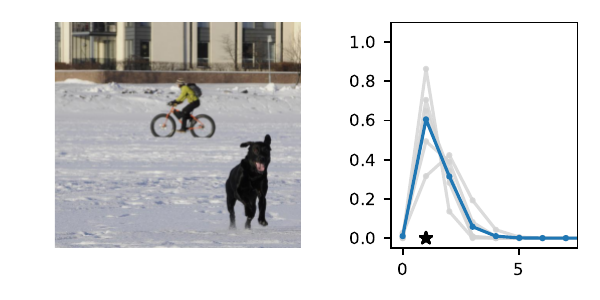}
                \end{subfigure}
                \hfill
                \begin{subfigure}[b]{0.33\textwidth}
                    \centering
                    \includegraphics[width=\textwidth]{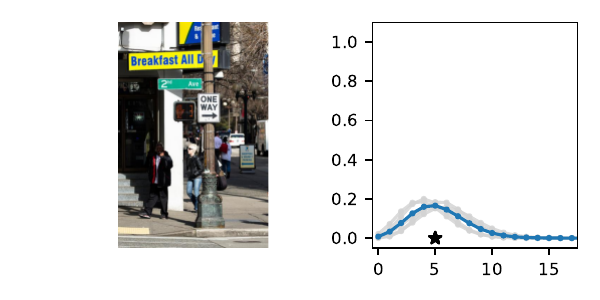}
                \end{subfigure}
                \hfill
                \begin{subfigure}[b]{0.33\textwidth}
                    \centering
                    \includegraphics[width=\textwidth]{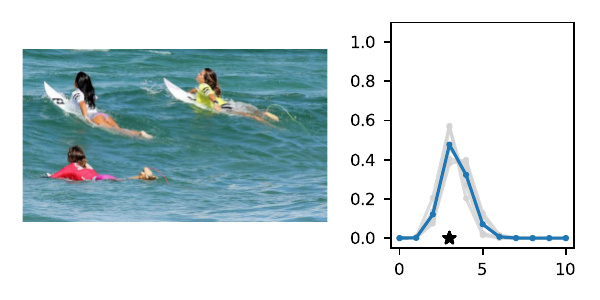}
                \end{subfigure}
                \caption{Predictive PMFs from a $\beta_{0.5}$-DDPN ensemble on samples from the test split of \texttt{COCO-People}.}
            \end{figure}
    
            \begin{figure}[h!]
                \centering
                \begin{subfigure}[b]{0.33\textwidth}
                    \centering
                    \includegraphics[width=\textwidth]{figs/case-studies/coco-people/ensembles/beta_ddpn_1.0_10.pdf}
                \end{subfigure}
                \hfill
                \begin{subfigure}[b]{0.33\textwidth}
                    \centering
                    \includegraphics[width=\textwidth]{figs/case-studies/coco-people/ensembles/beta_ddpn_1.0_72.pdf}
                \end{subfigure}
                \hfill
                \begin{subfigure}[b]{0.33\textwidth}
                    \centering
                    \includegraphics[width=\textwidth]{figs/case-studies/coco-people/ensembles/beta_ddpn_1.0_12.pdf}
                \end{subfigure}
                \caption{Predictive PMFs from a $\beta_{1.0}$-DDPN ensemble on samples from the test split of \texttt{COCO-People}.}
            \end{figure}
    
            \begin{figure}[h!]
                \centering
                \begin{subfigure}[b]{0.33\textwidth}
                    \centering
                    \includegraphics[width=\textwidth]{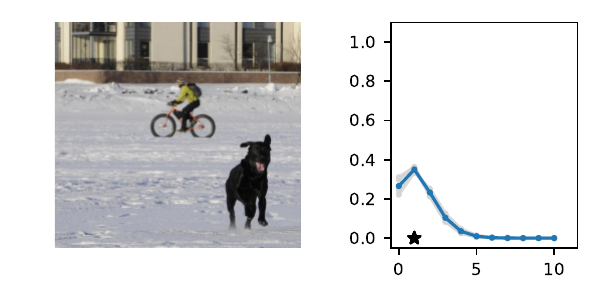}
                \end{subfigure}
                \hfill
                \begin{subfigure}[b]{0.33\textwidth}
                    \centering
                    \includegraphics[width=\textwidth]{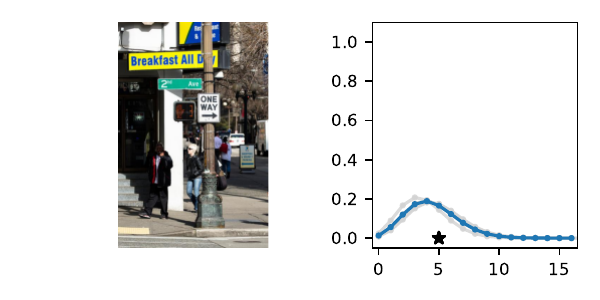}
                \end{subfigure}
                \hfill
                \begin{subfigure}[b]{0.33\textwidth}
                    \centering
                    \includegraphics[width=\textwidth]{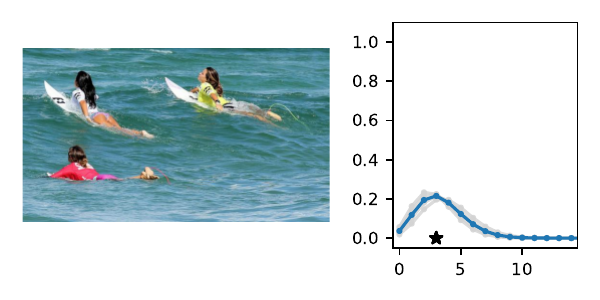}
                \end{subfigure}
                \caption{Predictive PMFs from a Poisson DNN ensemble on samples from the test split of \texttt{COCO-People}.}
            \end{figure}
    
            \begin{figure}[h!]
                \centering
                \begin{subfigure}[b]{0.33\textwidth}
                    \centering
                    \includegraphics[width=\textwidth]{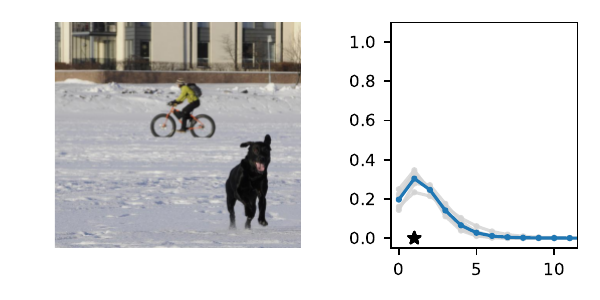}
                \end{subfigure}
                \hfill
                \begin{subfigure}[b]{0.33\textwidth}
                    \centering
                    \includegraphics[width=\textwidth]{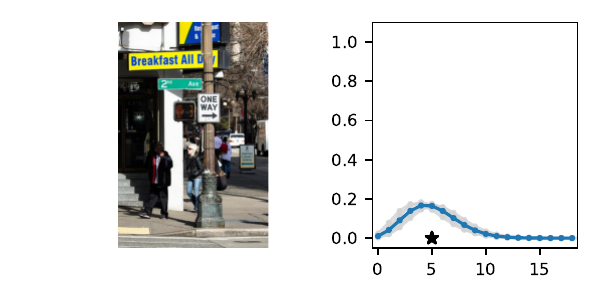}
                \end{subfigure}
                \hfill
                \begin{subfigure}[b]{0.33\textwidth}
                    \centering
                    \includegraphics[width=\textwidth]{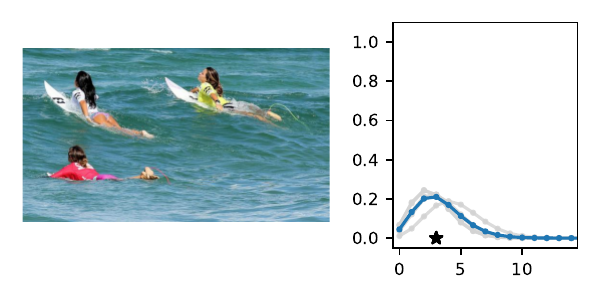}
                \end{subfigure}
                \caption{Predictive PMFs from a Negative Binomial DNN ensemble on samples from the test split of \texttt{COCO-People}.}
            \end{figure}
    
            \begin{figure}[h!]
                \centering
                \begin{subfigure}[b]{0.33\textwidth}
                    \centering
                    \includegraphics[width=\textwidth]{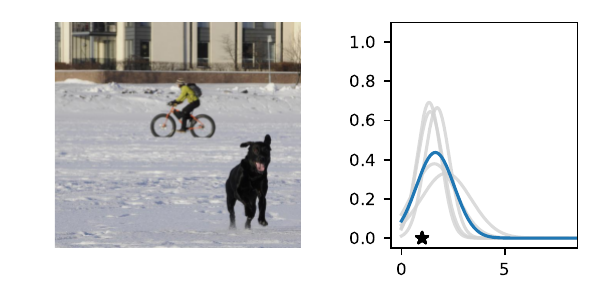}
                \end{subfigure}
                \hfill
                \begin{subfigure}[b]{0.33\textwidth}
                    \centering
                    \includegraphics[width=\textwidth]{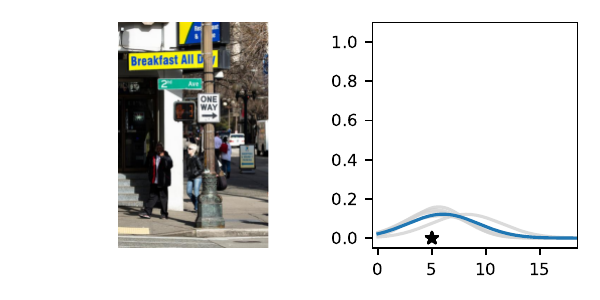}
                \end{subfigure}
                \hfill
                \begin{subfigure}[b]{0.33\textwidth}
                    \centering
                    \includegraphics[width=\textwidth]{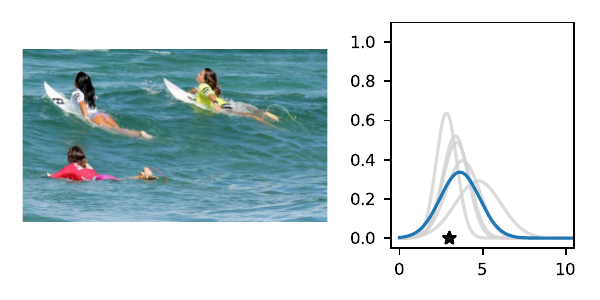}
                \end{subfigure}
                \caption{Predictive PMFs from a Gaussian DNN ensemble on samples from the test split of \texttt{COCO-People}.}
            \end{figure}
    
            \begin{figure}[h!]
                \centering
                \begin{subfigure}[b]{0.33\textwidth}
                    \centering
                    \includegraphics[width=\textwidth]{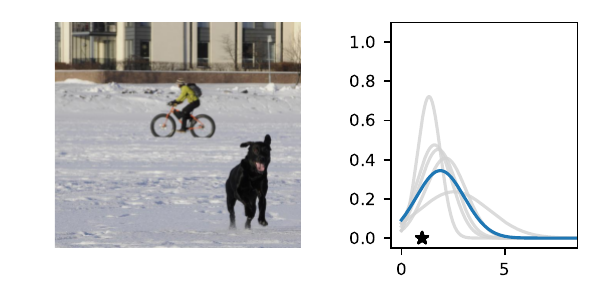}
                \end{subfigure}
                \hfill
                \begin{subfigure}[b]{0.33\textwidth}
                    \centering
                    \includegraphics[width=\textwidth]{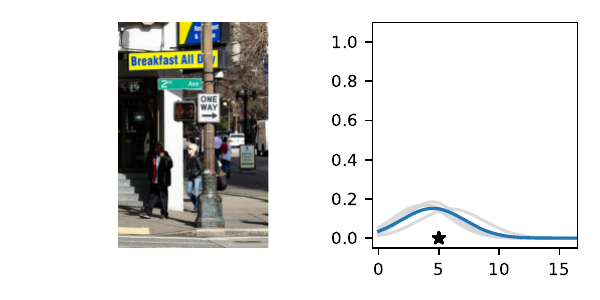}
                \end{subfigure}
                \hfill
                \begin{subfigure}[b]{0.33\textwidth}
                    \centering
                    \includegraphics[width=\textwidth]{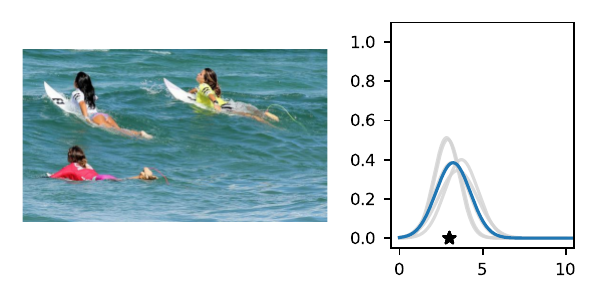}
                \end{subfigure}
                \caption{Predictive PMFs from a Natural Gaussian DNN ensemble on samples from the test split of \texttt{COCO-People}.}
            \end{figure}
    
            \begin{figure}[h!]
                \centering
                \begin{subfigure}[b]{0.33\textwidth}
                    \centering
                    \includegraphics[width=\textwidth]{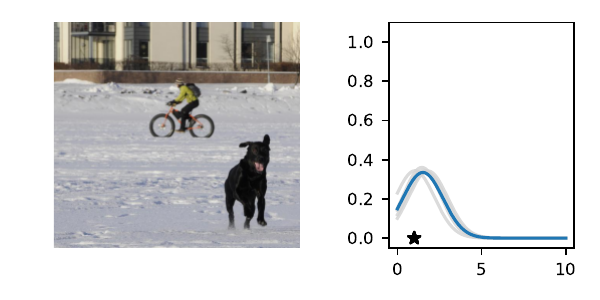}
                \end{subfigure}
                \hfill
                \begin{subfigure}[b]{0.33\textwidth}
                    \centering
                    \includegraphics[width=\textwidth]{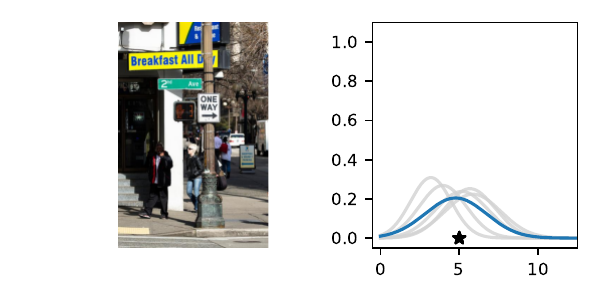}
                \end{subfigure}
                \hfill
                \begin{subfigure}[b]{0.33\textwidth}
                    \centering
                    \includegraphics[width=\textwidth]{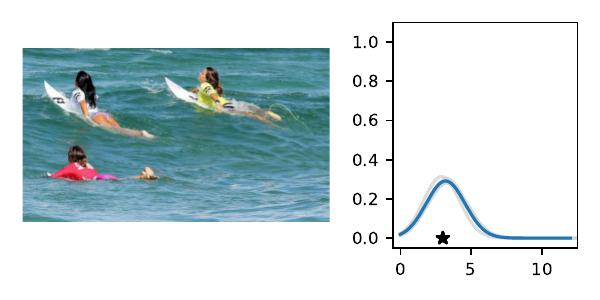}
                \end{subfigure}
                \caption{Predictive PMFs from a Faithful Gaussian DNN ensemble on samples from the test split of \texttt{COCO-People}.}
            \end{figure}
    
            \begin{figure}[h!]
                \centering
                \begin{subfigure}[b]{0.33\textwidth}
                    \centering
                    \includegraphics[width=\textwidth]{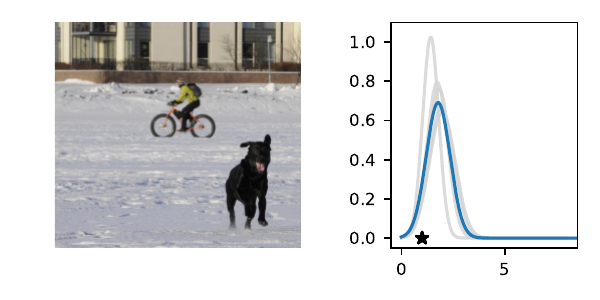}
                \end{subfigure}
                \hfill
                \begin{subfigure}[b]{0.33\textwidth}
                    \centering
                    \includegraphics[width=\textwidth]{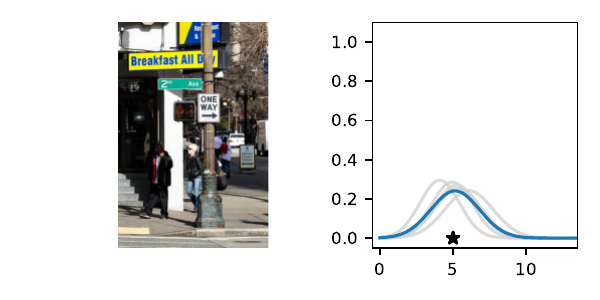}
                \end{subfigure}
                \hfill
                \begin{subfigure}[b]{0.33\textwidth}
                    \centering
                    \includegraphics[width=\textwidth]{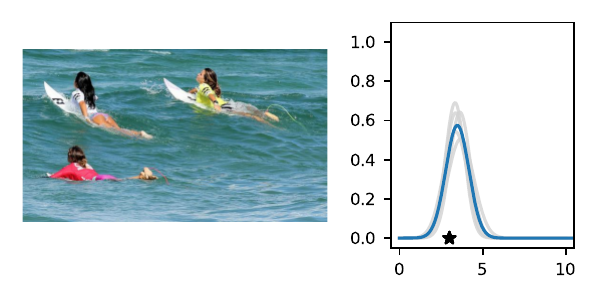}
                \end{subfigure}
                \caption{Predictive PMFs from a $\beta_{0.5}$-Gaussian DNN ensemble on samples from the test split of \texttt{COCO-People}.}
            \end{figure}
    
            \begin{figure}[h!]
                \centering
                \begin{subfigure}[b]{0.33\textwidth}
                    \centering
                    \includegraphics[width=\textwidth]{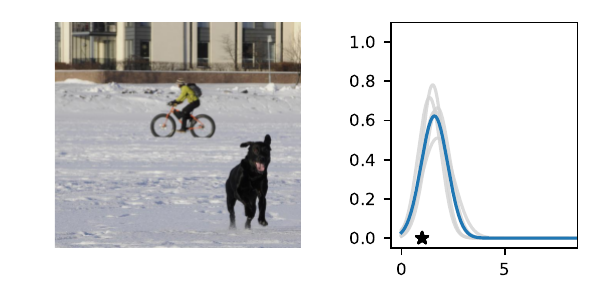}
                \end{subfigure}
                \hfill
                \begin{subfigure}[b]{0.33\textwidth}
                    \centering
                    \includegraphics[width=\textwidth]{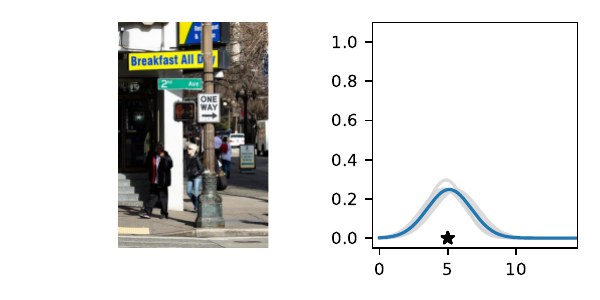}
                \end{subfigure}
                \hfill
                \begin{subfigure}[b]{0.33\textwidth}
                    \centering
                    \includegraphics[width=\textwidth]{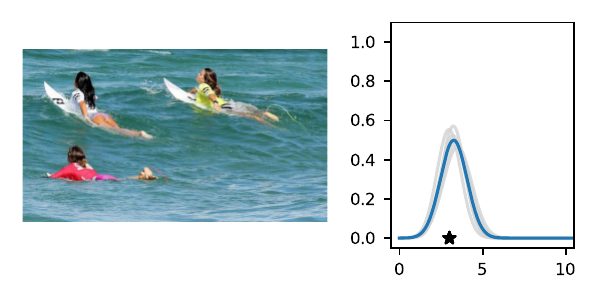}
                \end{subfigure}
                \caption{Predictive PMFs from a $\beta_{1.0}$-Gaussian DNN ensemble on samples from the test split of \texttt{COCO-People}.}
                \label{fig:coco-ens-last}
            \end{figure}

    \clearpage
    \subsection{ID / OOD Predictive Distributions on \texttt{Amazon Reviews}}\label{app:ood-case-studies}

    Figures \ref{fig:ood-discrete} and \ref{fig:ood-cont} visualize the difference (or lack thereof) between ensemble predictions on in-distribution data (in this case, the test split of \texttt{Amazon Reviews}) and out-of-distribution data (verses from the King James Bible). We include predictive distributions for each ensemble baseline presented in Section \ref{sec:real-world}.

    \begin{figure}[h!]
    \centering
    \begin{subfigure}[t]{0.47\textwidth}
        \centering
        \includegraphics[width=\textwidth]{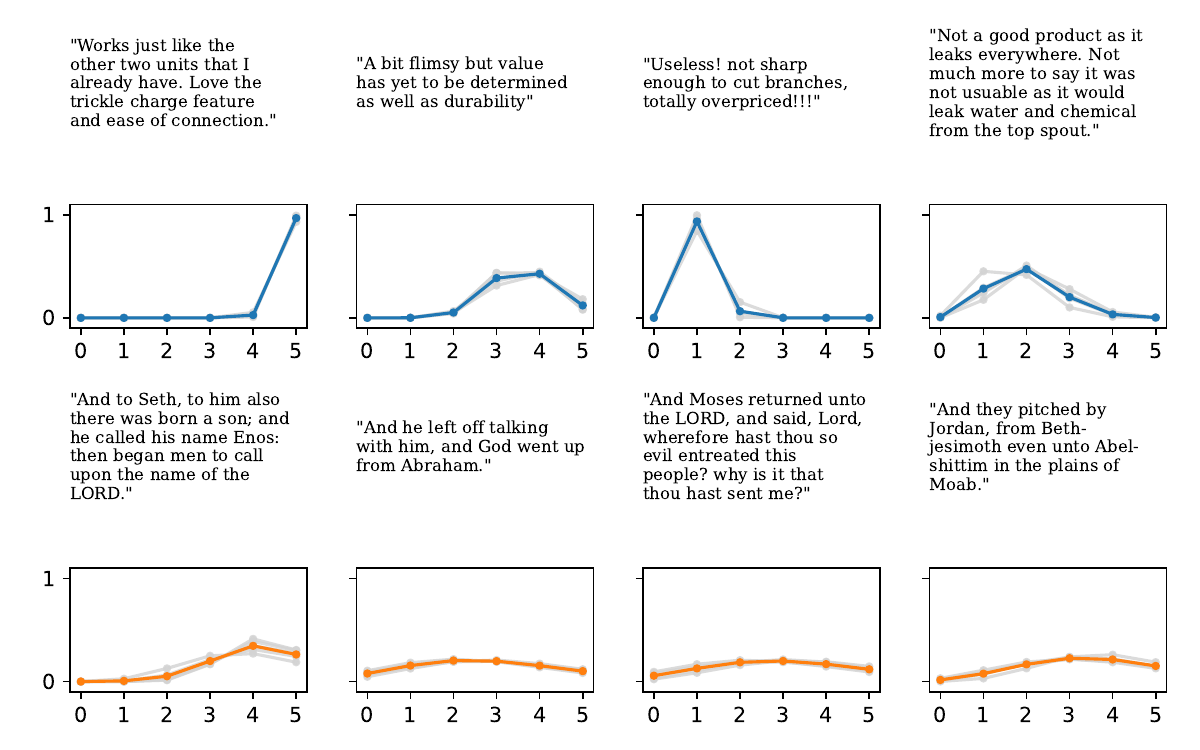}
        \caption{DDPN}
    \end{subfigure}
    \begin{subfigure}[t]{0.47\textwidth}
        \centering
        \includegraphics[width=\textwidth]{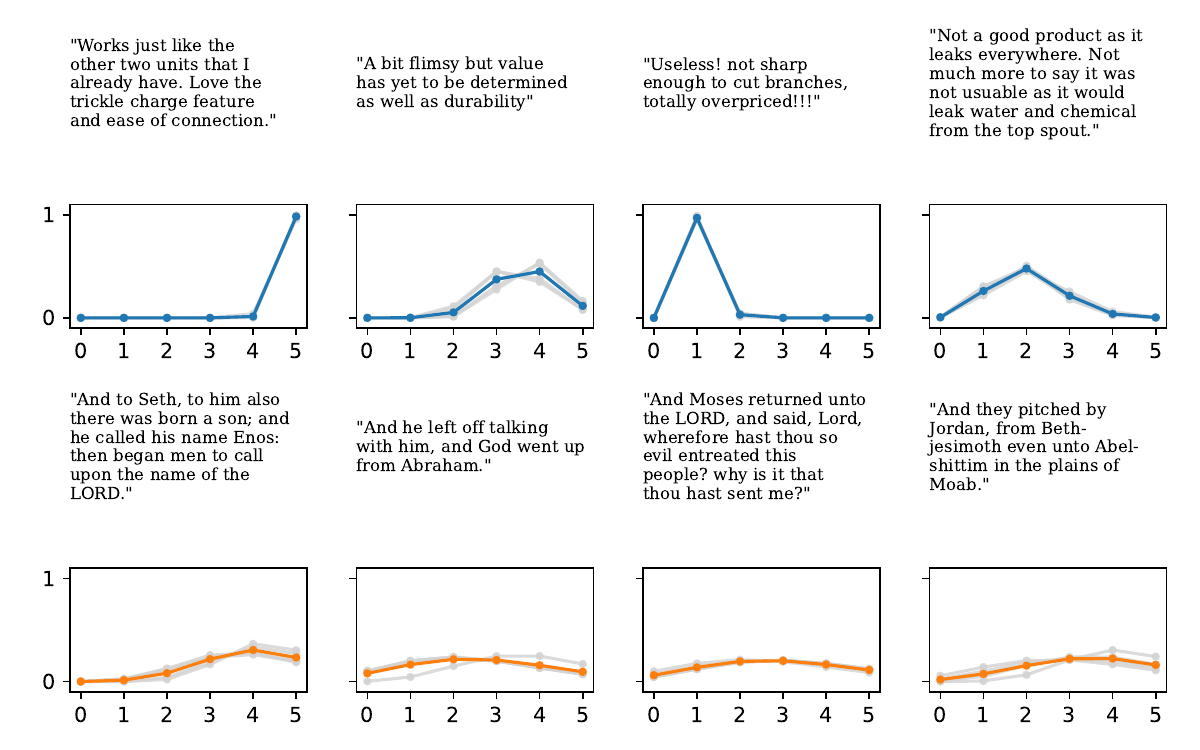}
        \caption{$\beta_{0.5}$-DDPN}
    \end{subfigure}

    \begin{subfigure}[t]{0.47\textwidth}
        \centering
        \includegraphics[width=\textwidth]{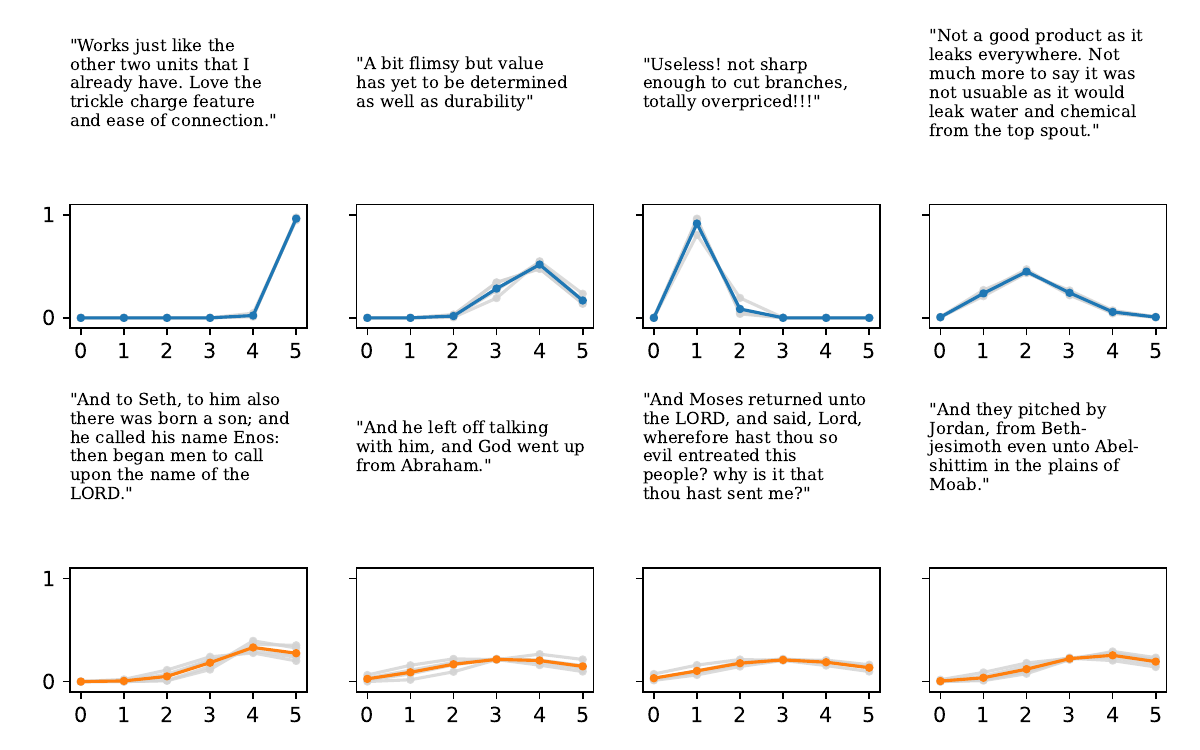}
        \caption{$\beta_{1.0}$-DDPN}
    \end{subfigure}
    \begin{subfigure}[t]{0.47\textwidth}
        \centering
        \includegraphics[width=\textwidth]{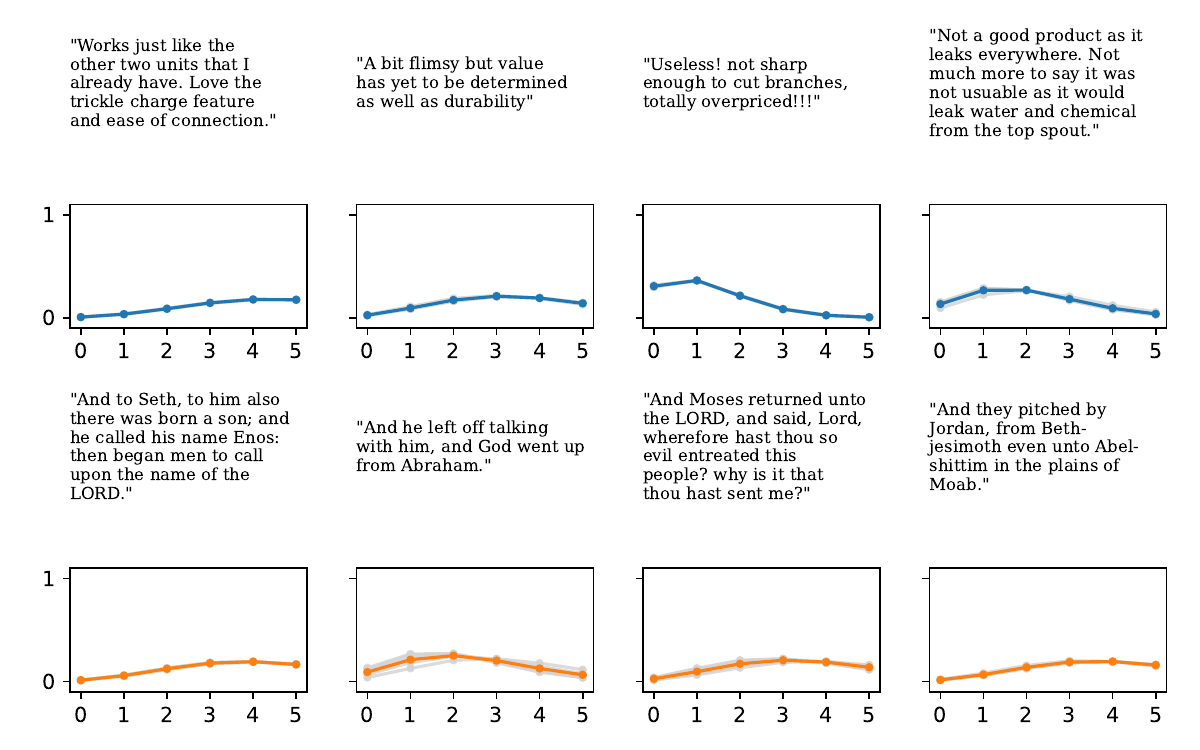}
        \caption{Poisson}
    \end{subfigure}

    \begin{subfigure}[t]{0.47\textwidth}
        \centering
        \includegraphics[width=\textwidth]{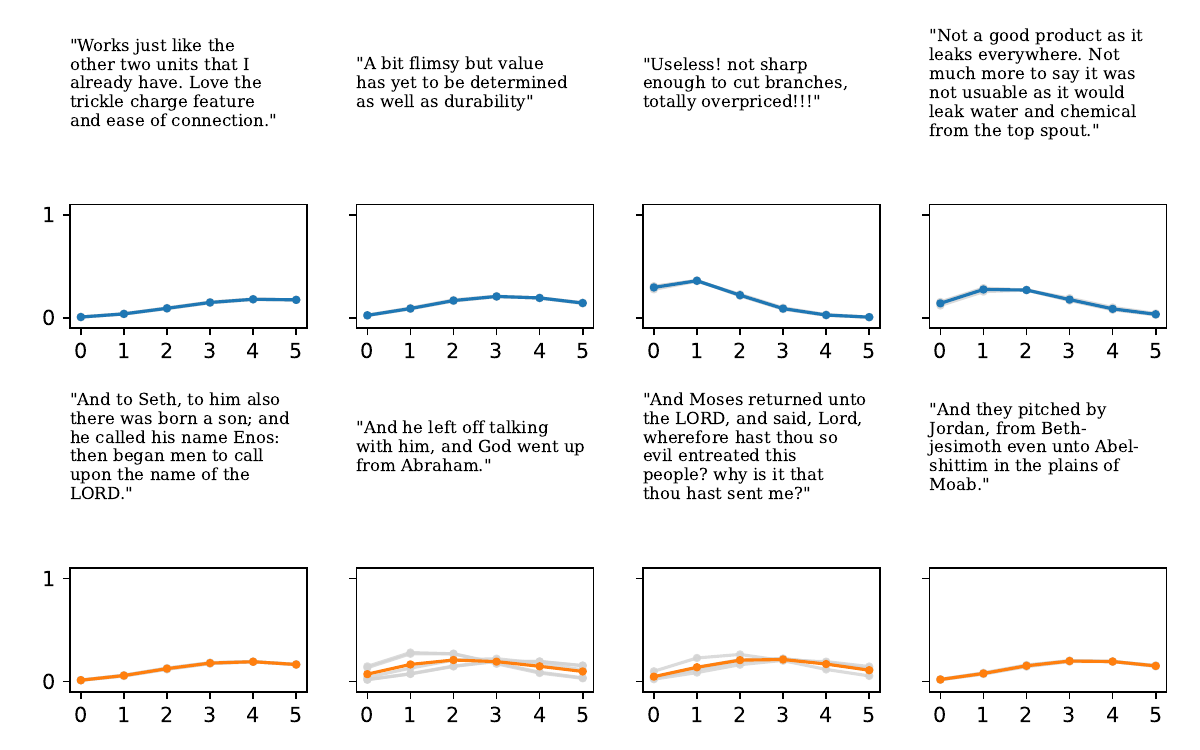}
        \caption{Negative Binomial}
    \end{subfigure}
    \caption{``Discrete'' ensemble predictions on in-distribution (ID) vs. out-of-distribution (OOD) examples. Individual ensemble members were trained on \texttt{Amazon Reviews} and then fed verses from the King James Version of the Bible. Note the desirable behavior of DDPN (and $\beta$-DDPN) ensembles, which produce concentrated probabilities on ID examples but appear to revert to the optimal constant solution (OCS) when faced with OOD examples \citep{kang2023deep}. Meanwhile, the predictive variance exhibits little difference between ID and OOD examples for the Poisson and Negative Binomial ensembles.}
    \label{fig:ood-discrete}
\end{figure}

    \begin{figure}[h!]
        \centering
        \begin{subfigure}[t]{0.47\textwidth}
            \centering
            \includegraphics[width=\textwidth]{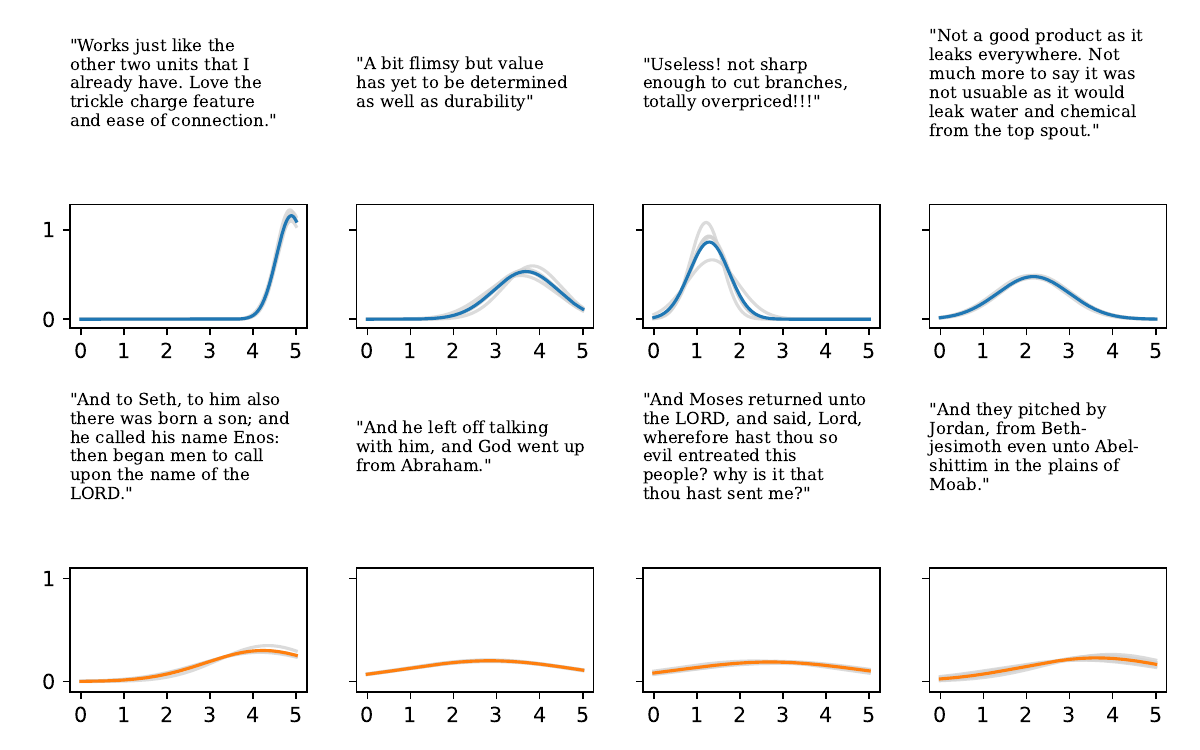}
            \caption{Gaussian}
        \end{subfigure}
    
        \begin{subfigure}[t]{0.47\textwidth}
            \centering
            \includegraphics[width=\textwidth]{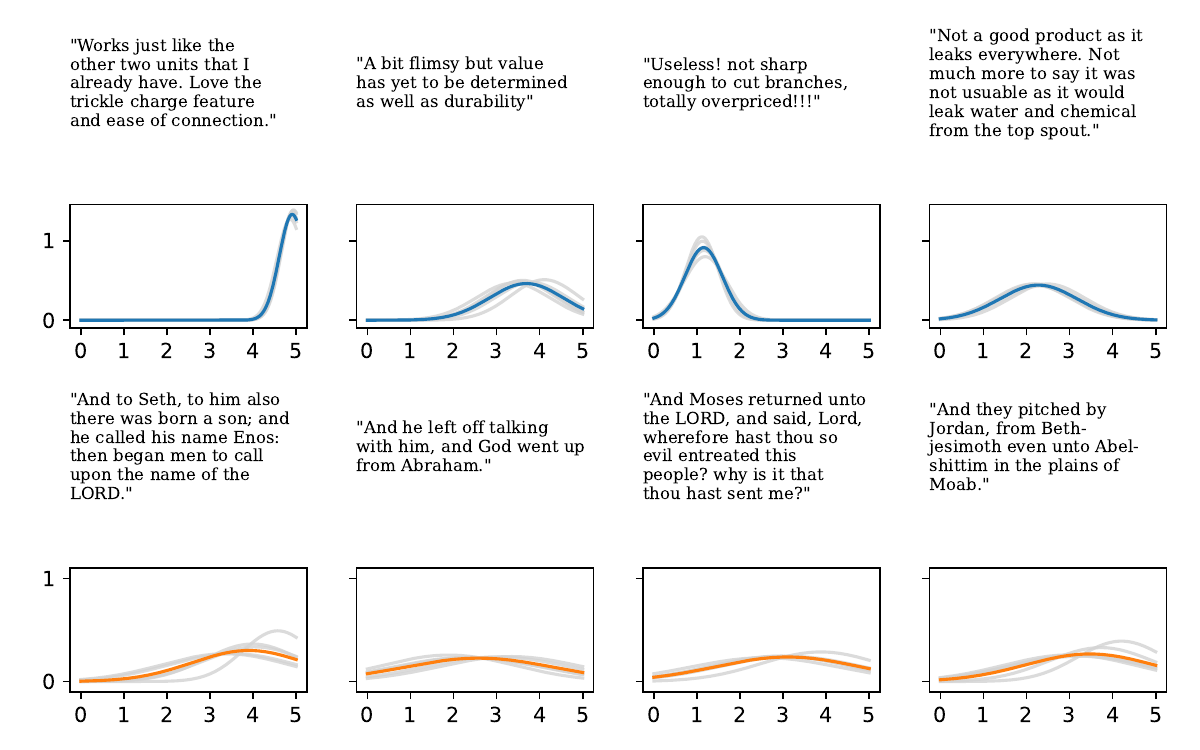}
            \caption{Natural Gaussian}
        \end{subfigure}
        \begin{subfigure}[t]{0.47\textwidth}
            \centering
            \includegraphics[width=\textwidth]{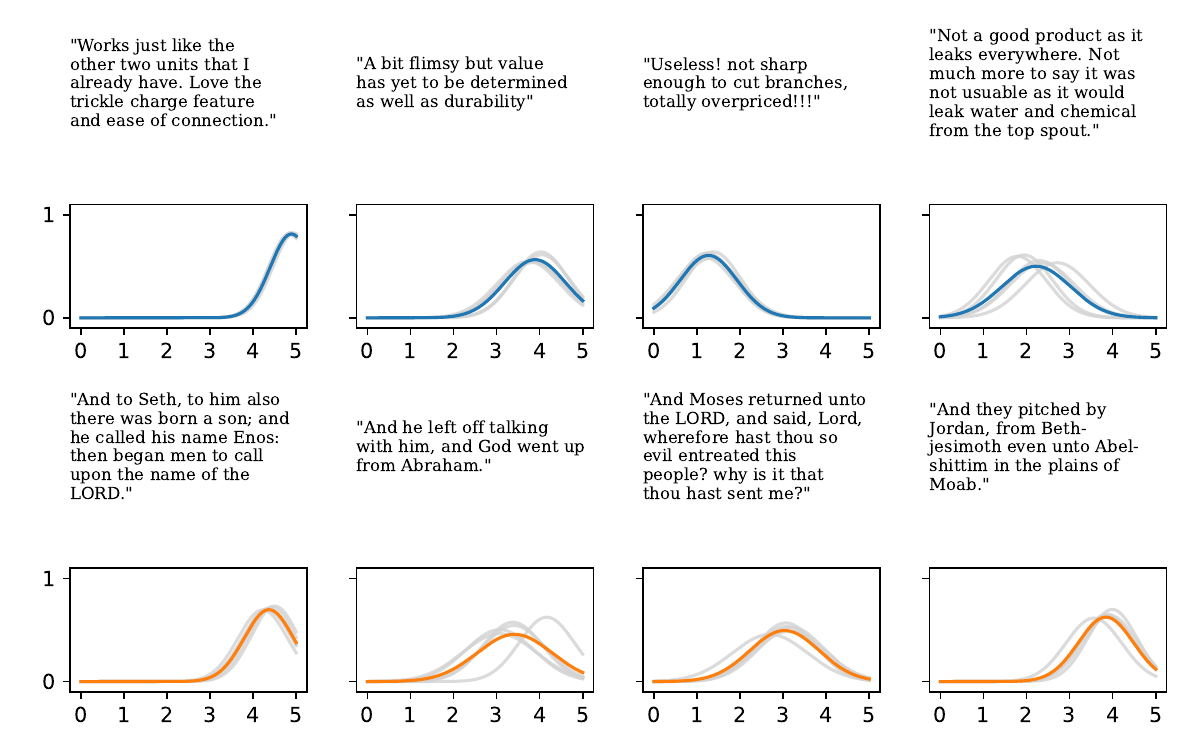}
            \caption{Faithful Gaussian}
        \end{subfigure}
    
        \begin{subfigure}[t]{0.47\textwidth}
            \centering
            \includegraphics[width=\textwidth]{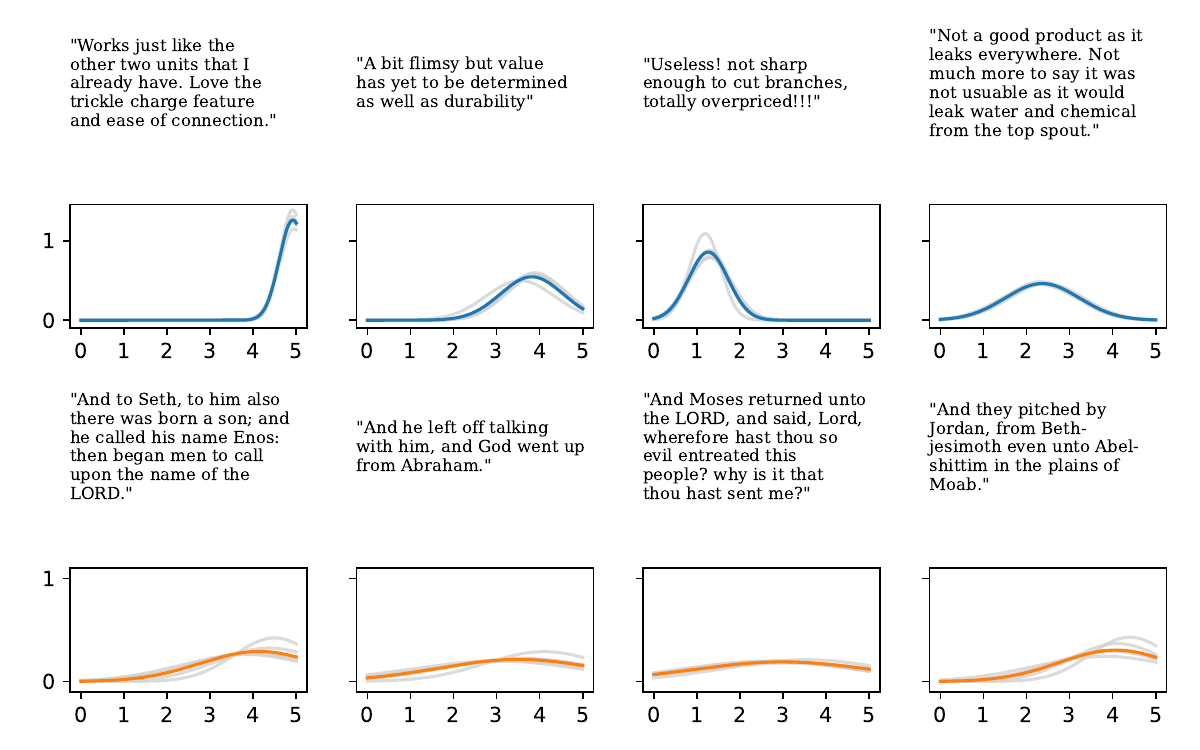}
            \caption{$\beta_{0.5}$-Gaussian}
        \end{subfigure}
        \begin{subfigure}[t]{0.47\textwidth}
            \centering
            \includegraphics[width=\textwidth]{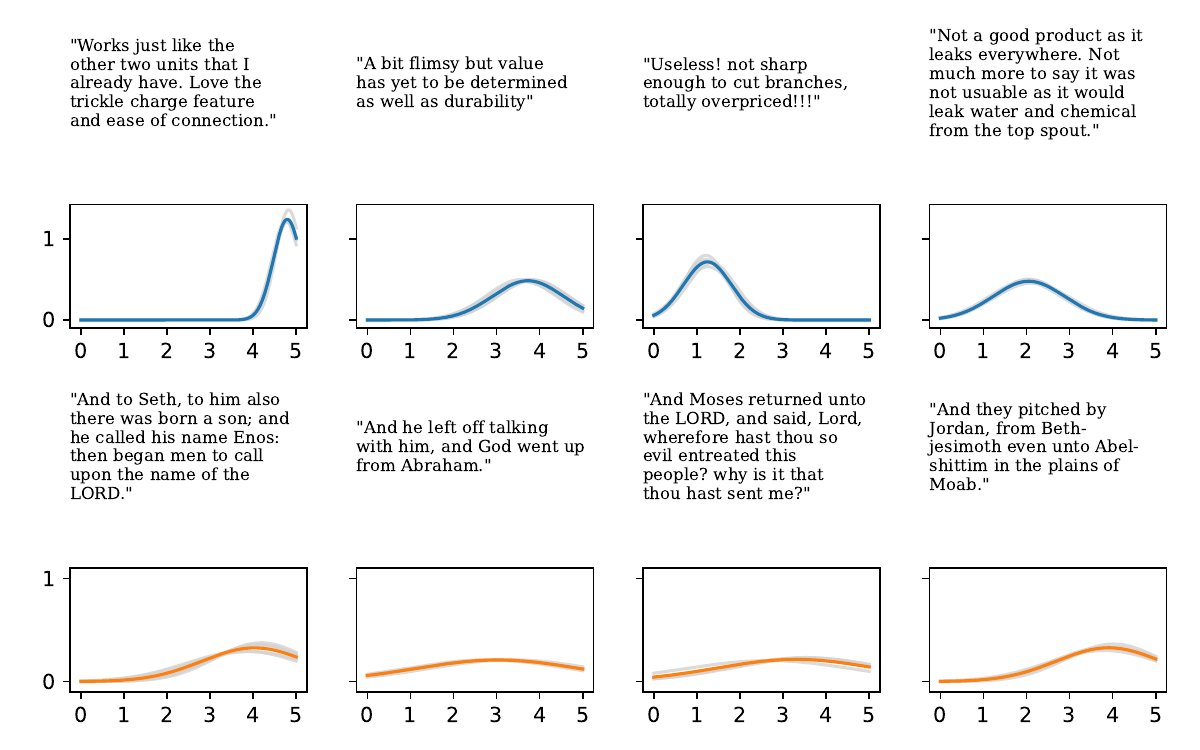}
            \caption{$\beta_{1.0}$-Gaussian}
        \end{subfigure}
    
        \caption{``Continuous'' ensemble predictions on in-distribution (ID) vs. out-of-distribution (OOD) examples. Individual ensemble members were trained on \texttt{Amazon Reviews} and then fed verses from the King James Version of the Bible.}
        \label{fig:ood-cont}
    \end{figure}

    \clearpage
    \subsection{Example Point Cloud from \texttt{Inventory}}\label{app:point-cloud}

        In Figure \ref{fig:ex-point-cloud}, we provide an example point cloud from the \texttt{Inventory} dataset used in the experiments of Section \ref{sec:real-world}. Further examples can be viewed in \citet{jenkins2023countnet3d}.

        \begin{figure}[h!]
            \centering
            \includegraphics[width=0.5\textwidth]{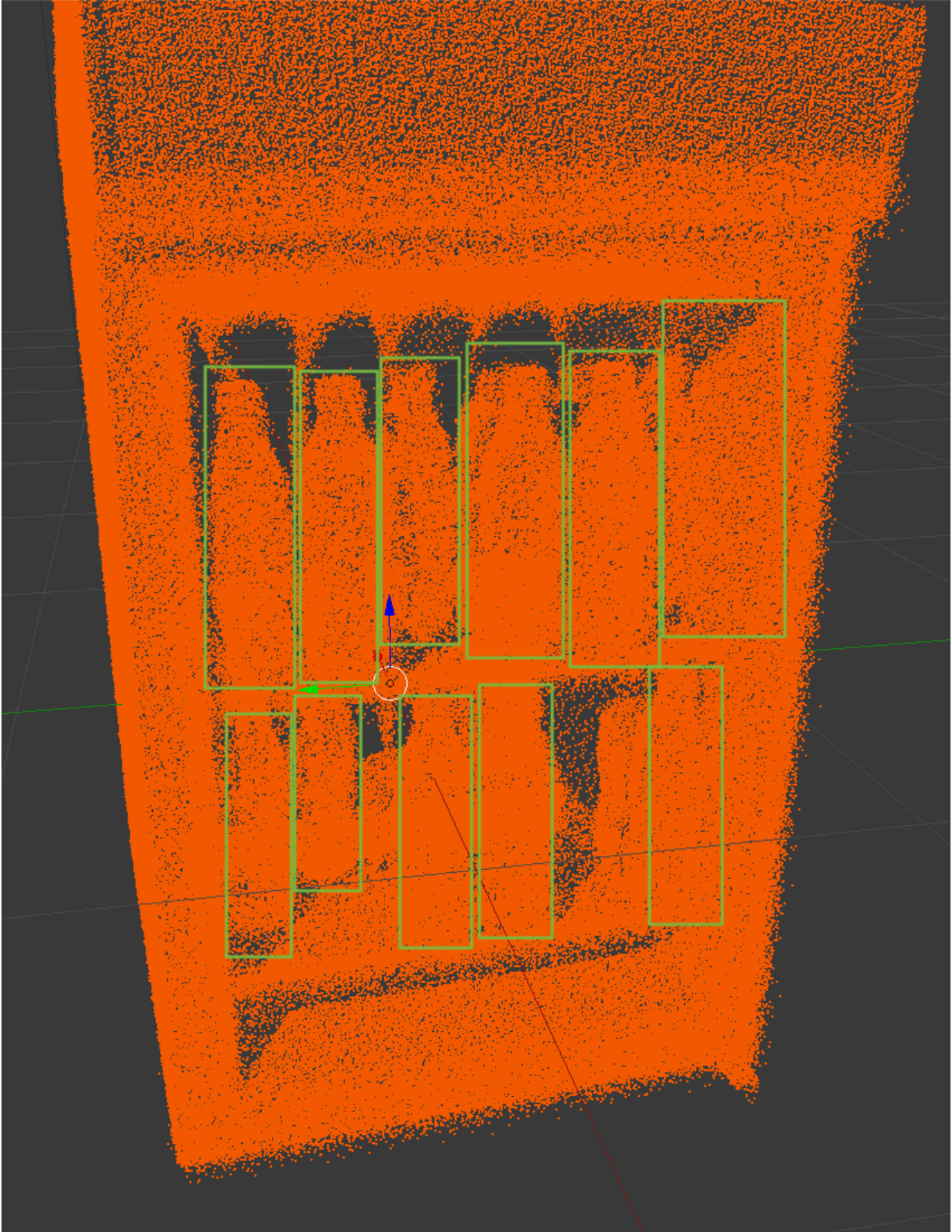}
            \caption{Example point cloud from \texttt{Inventory}. Each green box represents an inventory slot which is segmented into a point beam (see \citet{jenkins2023countnet3d} for details and further examples). Models predict the product count within each point beam.}
            \label{fig:ex-point-cloud}
        \end{figure}


\end{document}